\newcommand{\cM}{{\mathcal M}}
\newcommand{\cP}{{\mathcal P}}
\newcommand{\cW}{{\mathcal W}}
\newcommand{\cS}{{\mathcal S}}
\newcommand{\cR}{{\mathcal R}}
\newcommand{\cG}{{\mathcal G}}
\newcommand{\bP}{\mathbb{P}}
\DeclareMathOperator{\argmax}{argmax}
\DeclareMathOperator{\iid}{i.i.d.}
\DeclareMathOperator{\Var}{Var}
\theoremstyle{plain}
\newtheorem{theorem}{Theorem}[section]
\newtheorem{proposition}[theorem]{Proposition}
\newtheorem{lemma}[theorem]{Lemma}
\theoremstyle{definition}
\newtheorem{definition}[theorem]{Definition}
\theoremstyle{remark}
\newtheorem{remark}[theorem]{Remark}
\title{Debiasing Watermarks for Large Language Models via Maximal Coupling}
\date{} 
\author{Yangxinyu Xie\thanks{University of Pennsylvania}
    \\
    \and
    Xiang Li$^*$\\
    \and
    Tanwi Mallick\thanks{Argonne National Laboratory}\\
    \and
    Weijie J.~Su$^*$\\
    \and
    Ruixun Zhang\thanks{Peking University}    
}
\begin{document}

\maketitle

\begin{abstract}
    Watermarking language models is essential for distinguishing between human and machine-generated text and thus maintaining the integrity and trustworthiness of digital communication. We present a novel green/red list watermarking approach that partitions the token set into ``green'' and ``red'' lists, subtly increasing the generation probability for green tokens. To correct token distribution bias, our method employs maximal coupling, using a uniform coin flip to decide whether to apply bias correction, with the result embedded as a pseudorandom watermark signal. Theoretical analysis confirms this approach's unbiased nature and robust detection capabilities. Experimental results show that it outperforms prior techniques by preserving text quality while maintaining high detectability, and it demonstrates resilience to targeted modifications aimed at improving text quality. This research provides a promising watermarking solution for language models, balancing effective detection with minimal impact on text quality.
\end{abstract}

\section{Introduction}
\label{sec:introduction}

The field of artificial intelligence (AI) has seen rapid growth in recent years, with large language models (LLMs) playing a central role in many applications. These models, such as OpenAI's ChatGPT \citep{achiam2023gpt}, are capable of understanding and generating human-like text. However, the potential for misuse has raised concerns about the trustworthiness of AI-generated content, particularly in the context of disinformation, fraud, social engineering, and cybercrime \citep{Europol2023}. To mitigate these risks, President Biden issued an executive order on Safe, Secure, and Trustworthy Artificial Intelligence, which emphasized the need for standards to detect AI-generated content and authenticate official content, setting a precedent for transparency and trust in digital communications \citep{Biden2023AIOrder}. In particular, the Department of Commerce identified watermarking as a pivotal means to clearly label AI-generated content. 

Watermarks embed detectable statistical signals into the text generation process with minimal impact on text quality. 
We introduce a watermarked text generation process based on the framework established by~\citet{li2024statistical}. 
Let $\cW$ denote the vocabulary of tokens.
When inputted with a sequence of tokens $w_{1:t-1} = (w_1, ..., w_{t-1})$,\footnote{For the purpose of our discussion, it is helpful to think of $w_{1:t-1}$ as a sequence of words, but in practice, these tokens can be words, parts of words, or punctuation marks. For example, the sentence ``Hello, world!'' when tokenized, might be split into four tokens: [``Hello'', ``,'', ``world'', ``!''].} the language model first generates the next-token-prediction (NTP) distribution ${\bm P}_t=(P_{t,w})_{w \in \cW}$ satisfying $\sum_{w \in \cW} P_{t,w} = 1$, and then samples the next token $w_t$ according to ${\bm P}_t$.
The sampling step is realized by a decoder $\cS$ so that $w_t := \cS({\bm P}_t, \xi_t)$ is a (possibly stochastic) function of ${\bm P}_t$ and a pseudorandom variable $\xi_t$.
After generating this next token, the model appends it to the sequence of tokens, obtains $w_{1:t} = (w_1, ..., w_{t})$, and repeats the process until a stop token is generated or a predefined maximum length is reached.\footnote{For detailed explanations with clear visuals, see \citep{HuggingFace2024GenerationLLMs}.}
The watermark signal embedded in the decoder $\cS$ essentially lies in the dependence of each token $w_t$ on the pseudorandom variable $\xi_t$, where $\xi_t$ provides the additional information needed for watermark detection. Here, pseudorandomness implies that $\xi_{1:n}$ can be reconstructed if the generated text $w_{1:n}$ (along with some additional shared information) is available, and this collection of shared information is referred to as the watermark key.

The earliest watermarking scheme, the Gumbel-max watermark, was introduced by \citet{aaronson}. This method selects the next token $w_t$ using the decoder $w_t = \cS(\bm{P}_t, \xi_t) := \argmax_{w \in \cW} \log \frac{U_w}{P_w}$, assuming $\xi_t = (U_1, \ldots, U_{|\cW|})$ is a random vector consisting of $|\cW|$ i.i.d. copies of $U[0,1]$, serving as part of the watermark key. While this watermarked decoder theoretically preserves the original next-token distribution when $\xi_t$ is truly random and achieves high detection efficiency, this decoder can generate repetitive text and significantly degrade text quality in practice \citep{kuditipudi2023robust}. The main reason is that, for existing implementations, the pseudorandom variable $\xi_t$ depends deterministically on the prior $k$ tokens $w_{(t-k)}, \ldots, w_{(t-1)}$. As a result, when conditioned on previous tokens, the decoder $\cS$ deterministically outputs the next token, which can lead to sub-optimal text generation \citep{holtzmancurious}.

Another influential watermarking scheme, the green/red list watermark, employs a stochastic decoder. 
First introduced by \citet{kirchenbauer2023watermark}, this approach has gained popularity \citep{fernandez2023three, wu2023dipmark}. It first partitions the token vocabulary $\cW$ into the green list $\cG$ and the red list $\cR = \cW \setminus \cG$ (pseudo)randomly, and shares the green list $\cG$ with the detector as part of the watermark key. The watermark is embedded by prompting the use of green tokens during decoding: given the NTP distribution ${\bm P}_t$ and the green list $\cG$, the watermarked decoder constructs an alternative distribution ${\bm Q}_t = (Q_{t,w})_{w \in \cW}$ and samples the next token from ${\bm Q}_t$. The simplest version is the ``hard'' watermark, where each sampled token is limited in the green list $\cG$:
\begin{equation}
    Q_{t,w} = 
    \frac{P_{t,w}}{P_{t,\cG}} \mathbb{1}_{\{w \in \cG\}},
    \label{eq:hard green list}
\end{equation}
where $P_{t,\cG} := \sum_{w' \in \cG}P_{t,w'}$ is the normalization constant. The intuition is that if a token $w_t$ is in the green list $\cG$, then it is more likely from ${\bm Q}_t$ than ${\bm P}_t$; thus, one can detect this difference by counting the number of green tokens in a given text.\footnote{In practice, the green lists would also be pseudorandom variables of the previous $k$ tokens.}
For instance, if each green list contains half of the vocabulary, a human writer would hit green tokens about half the time. By contrast, a watermarked model deliberately increases the frequency of green tokens, producing text composed solely of green tokens. However, limiting sampling to a vocabulary subset distorts the NTP distribution $\bm P_t$ and can degrade LLM generation significantly \citep{kirchenbauer2023watermark}. 
Despite these potential drawbacks, this watermarked decoder remains stochastic, as additional randomness is introduced during sampling from ${\bm Q}_t$, while pseudorandomness is only used to set the green list. This extra randomness in the sampling process helps avoid repetitive generation.

Each of the two watermarking schemes has its own strengths. On one hand, like the Gumbel-max watermark, we want the decoder $\mathcal{S}$ to faithfully follows the original NTP distributions—a property known as unbiasedness \citep{li2024statistical}. On the other hand, similar to the green/red list watermark, we seek a stochastic decoder $\mathcal{S}$ alleviate the negative impact of the pseudorandom variables on text generation. 
Can we achieve the best of both worlds? 
In this paper, we answer this question affirmatively by introducing a novel watermarking scheme that leverages maximal coupling to balance these goals. The process begins by drawing a standard uniform random variable $\zeta$ and compute $p = \sum_{w \in \cW} \min(P_{t, w},Q_{t, w})$, which represents the overlap between distributions $\bm{P}_t$ and $\bm{Q}_t$. If $\zeta < p$, we sample from the overlapping region $\propto \min(\bm P_t, \bm Q_t)$; otherwise, we sample from the excess distribution $\propto \max(0,\bm{P}_t-\bm{Q}_t)$.\footnote{Both $\min(\bm P, \bm Q)$ and $\max(0,\bm{P}_t-\bm{Q}_t)$ are element-wise operations.} This approach yields an inherently unbiased decoder: for any NTP distributions $\bm P_t$ and $\bm Q_t$, tokens sampled in this manner follow $\bm P_t$. 

To aid detection, we share the information $\xi_t = (\cG, \zeta_t)$, the green list $\cG$ and the random variable $\zeta_t$, with the detector. This scheme benefits from the following property of $\zeta_t$ as a result of the maximal coupling process, and the proof is provided in Section \ref{sec: decoding}:

\begin{lemma}
\label{lem:maximal_coupling}
    For any NTP distribution $\bm P_t$ and green list $\cG$, the conditional distribution of $\zeta_t$ given $\bm P_t$ and the event $\{w_t \in \cG\}$ is uniform on $\left[0, P_{t,\cG}\right]$, where $P_{t,\cG} := \sum_{w' \in \cG}P_{t,w'}$.
\end{lemma}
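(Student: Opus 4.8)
The plan is to unwind the maximal-coupling construction for the specific alternative distribution $\bm Q_t$ given by the hard green-list rule \eqref{eq:hard green list}, show that the event $\{w_t\in\cG\}$ coincides, up to a null set, with the event $\{\zeta_t<P_{t,\cG}\}$, and then invoke the fact that $\zeta_t$ is drawn uniformly on $[0,1]$. The first step is to identify the two ingredients of the coupling. Since $Q_{t,w}=0$ for $w\in\cR$, while for $w\in\cG$ we have $Q_{t,w}=P_{t,w}/P_{t,\cG}\ge P_{t,w}$ (using $P_{t,\cG}\le 1$), it follows that $\min(P_{t,w},Q_{t,w})=P_{t,w}\mathbb{1}_{\{w\in\cG\}}$ and $\max(0,P_{t,w}-Q_{t,w})=P_{t,w}\mathbb{1}_{\{w\in\cR\}}$. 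Summing the former over $w\in\cW$ yields the overlap mass $p=\sum_{w\in\cW}\min(P_{t,w},Q_{t,w})=P_{t,\cG}$, and also shows that the overlap distribution $\propto\min(\bm P_t,\bm Q_t)$ is supported on $\cG$ whereas the excess distribution $\propto\max(0,\bm P_t-\bm Q_t)$ is supported on $\cR$.

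Next I would establish the key equivalence. By construction, when $\zeta_t<p$ the token $w_t$ is drawn from the overlap distribution and hence lies in $\cG$; when $\zeta_t\ge p$ it is drawn from the excess distribution and hence lies in $\cR$. Boundary cases, such as $P_{t,\cG}=1$ or a green token with $P_{t,w}=0$, only affect events of probability zero. Thus, conditionally on $\bm P_t$, we have $\{w_t\in\cG\}=\{\zeta_t<p\}=\{\zeta_t<P_{t,\cG}\}$ almost surely.

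To conclude, observe that $P_{t,\cG}$ is a deterministic function of $\bm P_t$, and $\zeta_t$ is uniform on $[0,1]$ and independent of $\bm P_t$ (it is drawn fresh, after $w_{1:t-1}$ and hence $\bm P_t$ are fixed) as well as of the within-component sampling randomness; hence, conditionally on $\bm P_t$, the law of $\zeta_t$ restricted to $\{\zeta_t<P_{t,\cG}\}$ is uniform on $[0,P_{t,\cG}]$, which is precisely the claim. I do not expect a genuine obstacle here; the one point that warrants care is the identification $\{w_t\in\cG\}=\{\zeta_t<p\}$, which rests on the inequality $Q_{t,w}\ge P_{t,w}$ on the green list forcing the excess distribution to put no mass on $\cG$. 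Once this is in place, the rest is immediate.
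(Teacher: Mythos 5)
Your proof is correct and follows essentially the same route as the paper: you identify $\min(\bm P_t,\bm Q_t)=\bm P_t\mathbb{1}_{\cG}$ and $\max(0,\bm P_t-\bm Q_t)=\bm P_t\mathbb{1}_{\cR}$ so that $p=P_{t,\cG}$, conclude that $\{w_t\in\cG\}$ coincides with $\{\zeta_t<P_{t,\cG}\}$, and read off the conditional uniform law from $\zeta_t\sim U[0,1]$. The paper makes exactly this observation (writing out $\bP[w_t=w\mid\bm P_t,\cG_t,\zeta_t]=\mathbb{1}_{\zeta_t\le P_{t,\cG_t}}P_{t,w}\mathbb{1}_{w\in\cG_t}/P_{t,\cG_t}+\mathbb{1}_{\zeta_t> P_{t,\cG_t}}P_{t,w}\mathbb{1}_{w\notin\cG_t}/(1-P_{t,\cG_t})$), and your extra remark about independence of the within-branch sampling randomness is a welcome clarification rather than a departure.
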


We utilize the conditional distribution of $\zeta_t$ given $({\bm P}_t, \cG)$ to detect the existence of watermarks via a hypothesis testing framework introduced by \citet{li2024statistical}, which lays two fundamental working hypotheses.
Under the null hypothesis $H_0$, this framework assumes that a human writes the text without the shared watermark key $(\cG, \zeta_t)$. The computed $\zeta_t$ doesn't correlate with the generation of the given text so the identified green tokens should be independent of each $\zeta_{t}$, i.e., each $\zeta_t$ acts as an i.i.d. copy from $U[0, 1]$, regardless of $\bm P_t$ and $\cG$.
Under the alternative hypothesis $H_1$, the text is assumed to be generated using the specified watermarked decoder with the pseudorandom variables $\zeta_{1:n}$
. Consequently, $\zeta_t$ has a known conditional distribution, as described in Lemma \ref{lem:maximal_coupling}. This reduces the watermark detection problem to identifying distributional differences in the values of $\zeta_t$.
\begin{enumerate}
    \item[$H_0$:] the text $w_{1:n}$ is written by human; i.e., $\zeta_t|{(\bm P_t, \cG)} \overset{\iid}{\sim} U[0,1]$ for $t = 1, ..., n$;
    \item[$H_1$:] the text $w_{1:n}$ is written by a language model; i.e., $\zeta_t|{(\bm P_t, \cG)} \sim U\left[0,P_{t,\cG}\right]$ for $t = 1, ..., n$.
\end{enumerate}

However, this alternative hypothesis $H_1$ fails to capture the full complexity of how users interact with the language model: in practice, users may add, delete, or substitute tokens, and subsequently decrease the watermark signal by removing some dependence of $w_t$ on $\zeta_t$.
To address this limitation, we introduce a more general alternative hypothesis $H_1^{(\mathrm{mix})}$ where only a small fraction of the tokens are modified by the user:
\begin{enumerate}
    \item[$H_1^{(\mathrm{mix})}$:] the text $w_{1:n}$ is first generated by a language model and then modified; i.e., $\zeta_t|{(\bm P_t, \cG)} \sim (1 - \varepsilon_n)U[0,1] + \varepsilon_nU\left[0,P_{t,\cG}\right]$ for $t = 1, ..., n$.
\end{enumerate}
Here, $\varepsilon_n$ represents the fraction of non-null effects. 
When $\varepsilon_n\equiv 1$, $H_1^{(\mathrm{mix})}$ is reduced to $H_1$.
Intuitively, if $\varepsilon_n$ and $1-P_{t,\cG}$ are too small, no one can detect the problem $H_0$ versus $H_1^{(\mathrm{mix})}$ because they merge asymptotically.
This formulation is reminiscent of the sparse detection problem \citep{donoho2004higher,donoho2015higher}. 
Following this approach, we focus on the sparse watermark signal case to facilitate theoretical analysis,  where both $\varepsilon_n \asymp n^{-p}$ and $1-P_{t,\cG} \asymp n^{-q}$ decay with $n$ at different rates. In this setting, we explore detectability and feasible detection methods for different regions of $(p, q) \in (0, 1]^2$. The main theoretical result is discussed in detail in Section \ref{sec: detection}. Informally, we will show the following:

\begin{theorem}[Informal]
\begin{enumerate}
    \item Without sufficient watermark signals, i.e., $p+2q > 1$, $H_0$ and $H_1^{(\mathrm{mix})}$ merge asymptotically. Hence, no test can reliably separate $H_0$ and $H_1^{(\mathrm{mix})}$.
    \item Given sufficient watermark signals, i.e., $p+2q < 1$, $H_0$ and $H_1^{(\mathrm{mix})}$ separate asymptotically. In this case, the alternative hypothesis can be reliably detected using the likelihood ratio test and higher criticism proposed by \citet{donoho2004higher,donoho2015higher}.
    \item With slightly stronger watermark signals, i.e., $p+q <0.5$, the sum test, which rejects $H_0$ if the sum of the $\zeta_t$'s is too large, can reliably separate $H_0$ and $H_1^{(\mathrm{mix})}$ asymptotically.
\end{enumerate}
\end{theorem}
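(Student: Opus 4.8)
The plan is to treat the test of $H_0$ against $H_1^{(\mathrm{mix})}$---conditionally on the next-token distributions and green lists, so that both hypotheses become simple---as a detection problem for the independent variables $\zeta_1,\dots,\zeta_n$ whose $t$-th density on $[0,1]$ is $f_0\equiv\mathbb{1}_{[0,1]}$ under $H_0$ and $g_t:=(1-\varepsilon_n)+\varepsilon_n P_{t,\cG}^{-1}\mathbb{1}_{[0,P_{t,\cG}]}$ under $H_1^{(\mathrm{mix})}$. Writing $\delta_t:=1-P_{t,\cG}\asymp n^{-q}$, the one-observation likelihood ratio $g_t/f_0$ is piecewise constant, equal to $1+\varepsilon_n\delta_t/P_{t,\cG}$ on $[0,P_{t,\cG}]$ and to $1-\varepsilon_n$ on $(P_{t,\cG},1]$; a direct computation then gives the exact identity $\chi^2(g_t\,\|\,f_0)=\varepsilon_n^2\delta_t/P_{t,\cG}$ and, to leading order, $H^2(g_t,f_0)\asymp\varepsilon_n^2\delta_t$. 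Hence the whole phase behavior is governed by the partial sums $S_n:=\sum_{t\le n}\varepsilon_n^2\delta_t$, which tend to $0$ in the parameter regime of part~1 and to $\infty$ in that of part~2 (this being exactly the statement that the boundary is $p+2q=1$).

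Given this, parts~1 and~2 (the likelihood-ratio claim) both follow from one fact about product measures: the Hellinger affinity $\rho$ between the two joint laws factorizes as $\rho=\prod_{t\le n}\bigl(1-\tfrac12 H^2(g_t,f_0)\bigr)$, so $\rho\to 1$ when $S_n\to 0$ and $\rho\to 0$ when $S_n\to\infty$. In the first case the total variation between the joint laws is at most $\sqrt{1-\rho^2}\to 0$, and since the minimal sum of the type-I and type-II errors of any test equals $1-\mathrm{TV}$, the two hypotheses merge and no test separates them; in the second case $\mathrm{TV}\ge 1-\rho\to 1$, so the Neyman--Pearson (likelihood-ratio) test has both errors tending to $0$. (Part~1 can equally be read off the second-moment method: the likelihood ratio $\Lambda_n$ satisfies $\mathbb{E}_{H_0}\Lambda_n^2=\prod_t(1+\chi^2(g_t\|f_0))\to 1$, hence $\Lambda_n\to 1$ in $L^2(H_0)$ and $\mathrm{TV}\to 0$.) Part~3 is a two-moment estimate: $\mathbb{E}_{H_0}[\sum_t\zeta_t]=n/2$ while $\mathbb{E}_{H_1^{(\mathrm{mix})}}[\sum_t\zeta_t]=n/2-\tfrac12\sum_t\varepsilon_n\delta_t$, and the variance under either hypothesis is $\Theta(n)$, so the standardized gap between the two means is of order $\sqrt n\,\varepsilon_n\,n^{-q}$; this diverges precisely when $p+q<1/2$, and Chebyshev (or a CLT/Berry--Esseen refinement) upgrades it to a sum test whose size and type-II error both vanish.

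The substantive step is the higher-criticism claim in part~2: the \emph{adaptive} HC statistic, which uses neither $\varepsilon_n$ nor the $\delta_t$, still detects throughout the whole region $p+2q<1$. The crux is to locate where the alternative bends the empirical distribution of $\zeta_{1:n}$. If $G_t$ denotes the alternative c.d.f., then $G_t(u)-u=u\,\varepsilon_n\delta_t/P_{t,\cG}$ for $u\le P_{t,\cG}$ and $G_t(u)-u=\varepsilon_n(1-u)$ for $u>P_{t,\cG}$, so the standardized deviation $\sqrt n\,\bigl(\tfrac1n\sum_t G_t(u)-u\bigr)/\sqrt{u(1-u)}$ is \emph{not} largest at small $u$ (the usual sparse-needle location) but at the \emph{upper} quantile $u\asymp P_{t,\cG}=1-\Theta(n^{-q})$, where it is of order $\sqrt{S_n}$. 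In the region $p+2q<1$ this diverges polynomially in $n$ and therefore overwhelms the $\sqrt{2\log\log n}$ ceiling obeyed by the standardized uniform empirical process under $H_0$; comparing the HC statistic to that ceiling then yields a test with vanishing errors. I expect this to be the main obstacle: recognizing that the detectable signal lives in the extreme upper tail $u=1-\Theta(n^{-q})$---a level the textbook HC normalization must be allowed to reach, and at which the empirical process has variance only $\Theta(n^{-q})$---and then controlling that process uniformly over a window around this quantile (a Dvoretzky--Kiefer--Wolfowitz bound suffices for a crude version, while keeping the $\sqrt{2\log\log n}$ normalization sharp needs a finer Brownian-bridge / law-of-the-iterated-logarithm argument), so that the comparison with the null fluctuation level goes through. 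The remaining ingredients---the Taylor expansions behind the $\chi^2$ and $H^2$ estimates, the harmless non-identical distribution of the $\zeta_t$ (all $\delta_t\asymp n^{-q}$), and the Le Cam / Neyman--Pearson bookkeeping---are routine.
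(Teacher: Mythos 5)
Your proposal mirrors the paper's own argument in all substantive respects: Hellinger affinity over the product measure for the merge/separation dichotomy (with the boundary governed by $\sum_t H^2(g_t, f_0) \asymp n\,\varepsilon_n^2\,\delta_t$), a two-moment Chebyshev bound for the sum test, and locating the HC signal at the upper quantile $u = 1 - \Theta(n^{-q})$, which is exactly the point $x = 1-m^{-q}$ the paper uses. Two small remarks. First, a transcription slip: your own computation $S_n \asymp n\varepsilon_n^2\delta_t \asymp n^{1-2p-q}$ yields the boundary $2p+q=1$, not $p+2q=1$; the informal statement you were given contains the same $p \leftrightarrow q$ swap, which is inconsistent with the paper's formal detection-boundary theorem (stated with $q+2p$) and its proof, so this is a typo you inherited rather than an error of understanding. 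Second, for the HC power bound you need not control the empirical process uniformly near $u = 1 - n^{-q}$: since $\mathrm{HC}^*_m \ge W_m(x)$ pointwise, it suffices to evaluate $W_m$ at the single value $x = 1 - m^{-q}$, show $\mathbb{E}_1 W_m(x) = \Omega(m^\gamma)$ with $\Var_1 W_m(x) = O(1)$, and apply Chebyshev---exactly what the paper does, sidestepping the DKW/Brownian-bridge machinery you flag as the main obstacle. The law-of-the-iterated-logarithm behavior $\mathrm{HC}^*_m \approx \sqrt{2\log\log m}$ is needed only to control the size under the null, where the paper simply cites the known result.
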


In practice, due to proprietary restrictions, the language model provider may not share the NTP distribution ${\bm P}_t$ with the detector. In this case, the likelihood ratio test is not directly applicable. Interestingly, we find that higher criticism (HC), a non-parametric procedure \citep{donoho2004higher,donoho2015higher}, successfully achieves the optimal detection boundary $p+2q=1$ without any knowledge of $\bm P_t$'s and $\varepsilon_n$. Moreover, the detection boundary for the sum test is $p+q =0.5$, leading to less power than HC asymptotically in this watermark-signal-diminishing case.
However, in non-asymptotic situations, our simulation studies in Section \ref{sec: simulation} and language model experiments in Section \ref{sec: experiment} reveal that it is more robust in practice, especially when the text length is small.

Another application of maximal coupling is speculative decoding \citep{leviathan2023fast, chen2023accelerating}, where a smaller draft model generates speculative token predictions that are then accepted or corrected by a more powerful target model. We interpret this setup as a model of human-LLM interaction, where the smaller model represents the watermarked language model's text generation capabilities, and the target model represents the human's attempt to refine the generated text, focusing solely on enhancing the overall text quality. In Section \ref{sec: experiment}, we apply this setup to our watermarking scheme and demonstrate that our method effectively retains the watermark signal despite the targeted attempts to modify the text. Compared to \citet{aaronson}'s approach, our method results in fewer text modifications, suggesting that users would likely need to invest less editing effort to achieve their desired text quality when working with our watermarking scheme.\\
\textbf{Our Contributions.} We introduce a novel unbiased variant of the green/red list watermark by leveraging maximal coupling. After formulating the robust watermark detection problem as a sparse detection problem, we propose several detection methods and explore its theoretical properties. In particular, we provide theoretical insights into the statistical power of these detection methods, corroborated by extensive simulation studies. We demonstrate the effectiveness of our watermarking scheme against multiple baselines via experiments on large language models of varying sizes, as well as under a targeted text modification setup. Our findings offer a promising direction for watermarking in language models, balancing the need for detectability with minimal impact on text quality.

\section{Preliminaries}
\label{sec: prelim}

\textbf{Unbiased Watermarking Schemes.}
\cite{kirchenbauer2023watermark} introduced a ``soft'' watermark to mitigate the bias of the ``hard'' watermark by sampling from both the green list $\cG$ and the red list $\cR := \cW \setminus \cG$, but skewing the sampling distribution to favor tokens from the green list. That is, given a constant $\delta > 0$ and the next token distribution $P_w$, we construct the alternative distribution $Q_{w}$ as $
    Q_{w} = (\mathrm{e}^{\delta}P_w)/C$ if $w \in \cG$ and $Q_{w} = P_w/C$ if $w \not \in \cG$, 
where $C = \mathrm{e}^{\delta}P_{\cG} + P_{\cW\setminus \cG} = 1 + (\mathrm{e}^{\delta} - 1) P_{\cG}$. 
Both watermarks are biased according to the following definition \citep{kuditipudi2023robust,li2024statistical}.\footnote{In this setup, $\cS$ as a function of the next token distribution $\bm P$ and a random variable $\xi = \cG$, the (pseudo)randomly generated green list. For the proof of the biasedness, we refer the reader to the online Appendix.} 
\begin{definition}
    \label{definition:unbiasedness}
    A decoder $\cS$ is \textbf{unbiased} with respect to $\xi$ if for any NTP distribution $\bm P$ and any token $w \in \cW$, we have $\bP_{\cS, \xi}(\cS(\bm P, \xi) = w) = P_w$.\footnote{\citet{li2024statistical} requires the decoder $\cS$ is fully deterministic. We extend it slightly by allowing it to be stochastic and requiring its randomness to be independent of $\xi$.}
\end{definition}
\citet{aaronson} adapts exponential minimal sampling and achieves unbiasedness when the random variables $\xi_t$ are truly i.i.d. However, in practice, the pseudorandom variable $\xi_t$ is deterministically dependent on the prior $k$ tokens $w_{(t-k)}, ..., w_{(t-1)}$. This can lead to repetitive texts and quality degradation, particularly when the same $k$ tokens appears multiple times in the generated text \citep{kuditipudi2023robust}. 
While \citet{hu2023unbiased, dathathri2024scalable} propose repeated context masking to prevent applying watermark on step $t$ if the prior $k$ tokens have been used to watermark previously, our experiments in Section \ref{sec: experiment} demonstrate that this decoder still produces text that is more repetitive and lower quality compared to other watermarked decoders. Several theoretical analyses and variants of the deterministic-decoder-based method have been proposed \citep{fernandez2023three, zhao2024permute}, but no existing work completely addresses this issue. In contrast, our proposed stochastic decoder incorporates additional randomness for sampling and the pseudorandom variable $\zeta_t$ is only used for rejection. Another method is the inverse transform sampling strategy \citep{christ2023undetectable, kuditipudi2023robust,hu2023unbiased,li2024statistical}. However, these methods typically yield weaker power than the exponential minimal sampling. 
Building on inverse transform sampling, \citet{wu2023dipmark} proposed an unbiased watermarking scheme that incorporates random permutation and a probability reweighting strategy.\footnote{Given a pseudorandom permutation of the vocabulary $\cW$ and a parameter $\alpha < 0.5$, this scheme samples from the distribution $\bm Q$, where $Q_{w^{(i)}} = F_{w^{(i)}} - F_{w^{(i-1)}}$ and
$F_{w^{(i)}} = \max\left\{\sum_{j = 1}^i P_{w^{(j)}} - \alpha, 0\right\} + \max\left\{\sum_{j = 1}^i P_{w^{(j)}} - (1 - \alpha), 0\right\}.$
Here, $w^{(i)}$ denotes the $i$th token in the permutation, with the convention that $F(w^{(0)}) = 0$.} While this method is unbiased, tuning the hyperparameter can be a delicate task – a limitation that our proposed method eliminates. Furthermore, as shown in Section \ref{sec: experiment}, our method achieves higher detection power. For a more extensive review of unbiased watermarking schemes, see the online Appendix.

\begin{algorithm}
    \caption{Maximal Coupling for Token Sampling}
    \label{alg: token_sampling_coupling}
    \begin{algorithmic}
    \STATE \textbf{Input:} Token distributions $\bm P, \bm Q$, random variable $\zeta \in [0,1]$
    \STATE \textbf{Output:} Sampled token $w$
    \STATE Compute $p = \sum_{w \in \cW} \min(P_{w}, Q_{w})$.
    \IF{$\zeta \le p$}
        \STATE Sample $w$ from the normalized overlap distribution $\propto \min(\bm P, \bm Q)$
    \ELSE
        \STATE Sample $w$ from the normalized excess distribution $\propto \max(0, \bm P - \bm Q)$
    \ENDIF
    \STATE \textbf{return} $w$
    \end{algorithmic}
\end{algorithm}
\textbf{Maximal Coupling.} Maximal coupling is a powerful tool in probability theory that constructs a coupling between two random variables to maximize the probability that they are equal \citep{thorisson2000coupling}. We use maximal coupling to sample tokens in a way that preserves the original next-token distribution $\bm P_t$ while incorporating information from an alternative distribution $\bm Q_t$, both supported on the vocabulary $\cW$. Algorithm~\ref{alg: token_sampling_coupling} outlines the procedure to sample $w$ from the NTP distribution $\bm P$, where both $\min$ and $\max$ operations are applied element-wise, and the proof for Lemma \ref{lem:maximal_coupling_unbias} can be found in the online Appendix.
\begin{lemma}
    \label{lem:maximal_coupling_unbias}
    For any distributions $\bm P, \bm Q$, and $w$ sampled in this way, if $\zeta$ is a standard uniform variable, i.e. $\zeta \sim U[0,1]$, we have $w \sim \bm P$.
\end{lemma}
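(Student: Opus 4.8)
The plan is to compute the marginal law of $w$ directly, by conditioning on the branch selected by $\zeta$, and to exploit the elementary pointwise decomposition $P_w = \min(P_w, Q_w) + \max(0, P_w - Q_w)$.

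First I would record that identity: for every token $w \in \cW$, splitting into the cases $P_w \le Q_w$ and $P_w > Q_w$ gives $P_w = \min(P_w, Q_w) + \max(0, P_w - Q_w)$. Summing over $w \in \cW$ and using $\sum_{w \in \cW} P_w = 1$ then yields $\sum_{w \in \cW} \max(0, P_w - Q_w) = 1 - p$, where $p = \sum_{w \in \cW}\min(P_w, Q_w)$. This is the key bookkeeping step: it identifies the total mass of the excess vector $\max(0, \bm P - \bm Q)$ as exactly $1 - p$, so that (for $p < 1$) the normalized excess distribution in Algorithm~\ref{alg: token_sampling_coupling} assigns probability $\max(0, P_w - Q_w)/(1-p)$ to $w$; similarly (for $p > 0$) the normalized overlap distribution assigns probability $\min(P_w, Q_w)/p$ to $w$.

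Next, since $\zeta \sim U[0,1]$ we have $\bP(\zeta \le p) = p$ and $\bP(\zeta > p) = 1 - p$, and $\zeta$ is independent of the auxiliary randomness used to draw $w$ from whichever normalized distribution is selected. Conditioning on the event $\{\zeta \le p\}$ and applying the law of total probability, for every $w_0 \in \cW$,
\[
\bP(w = w_0) = p \cdot \frac{\min(P_{w_0}, Q_{w_0})}{p} + (1-p)\cdot \frac{\max(0, P_{w_0} - Q_{w_0})}{1-p} = \min(P_{w_0}, Q_{w_0}) + \max(0, P_{w_0} - Q_{w_0}) = P_{w_0},
\]
which is exactly the claim $w \sim \bm P$.

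The only points requiring care are the two degenerate regimes. If $p = 0$, then $\bm P$ and $\bm Q$ have disjoint supports, the event $\{\zeta \le p\}$ has probability zero, and the sampler always draws from the normalized $\max(0, \bm P - \bm Q)$, which equals $\bm P$; if $p = 1$, then $\bm P = \bm Q$, the excess branch is never taken, and the sampler draws from the normalized $\min(\bm P, \bm Q) = \bm P$. In both cases the conclusion is immediate. So there is no real obstacle in this proof; the one thing that must not be glossed over is the computation of the normalizing constant of the excess distribution as $1 - p$, since that is precisely what makes the two weighted branches recombine into $P_{w_0}$.
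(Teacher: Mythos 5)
Your proof is correct and follows essentially the same approach as the paper's: condition on which branch $\zeta$ selects, apply the law of total probability, and observe that the normalization constants $p$ and $1-p$ cancel so the two branches recombine pointwise to $P_w$. The paper expresses the final recombination via a case split on whether $P_w \ge Q_w$ rather than your single identity $P_w = \min(P_w,Q_w) + \max(0,P_w-Q_w)$, and it silently skips the degenerate cases $p \in \{0,1\}$ that you handle explicitly, but these are cosmetic differences, not a different argument.
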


\section{Watermarking Scheme via Maximal Coupling}
\label{sec: scheme}

We formulate the watermarking procedure within a communication protocol involving three parties: the language model provider, the user, and the detector. As outlined in Algorithm~\ref{alg: communication_protocol}, this protocol implies that a watermarking scheme has two parts: (1) a {\it decoding} (or sampling) algorithm that can embed identifiable statistical signals into the generated text, and (2) a {\it detection} algorithm that can detect the watermark signal in the received text. 
In this remainder of this section, we introduce every aspect of our proposed watermark, including different detection methods, asymptotic theoretical analysis on detection boundaries, and practical considerations.

\begin{algorithm}[!hbt]
\caption{Communication Protocol}
\label{alg: communication_protocol}
\begin{algorithmic}
    \STATE 1. The language model provider shares a random watermark key $\bm \xi$ with the detector.
    \STATE 2. The user sends a prompt $\cP$ to the language model provider.
    \STATE 3. The language model provider generates a text of length $l$, denoted by $w_{1:l} = (w_1, ..., w_l)$, given the prompt $\cP$ and the watermark key $\bm \xi$.
    \STATE 4. The user sends the text $\tilde{w}_{1:n}$, which may be either (i) (an edited version of) the generated text $w_{1:l} = (w_1, ..., w_l)$
    or (ii) a piece of text independent of $w_{1:l}$ (e.g., text that the user wrote without using the watermarked language model).
    \STATE 5. Using the watermark key $\bm \xi$, the detector determines if $\tilde{w}_{1:n}$ contains the embedded watermark.
    \end{algorithmic}
\end{algorithm}
\textbf{Setup.} For clarity in illustrating the watermarking scheme, we first describe a simplified setup for the communication protocol and delay the adaptation in practice to Section \ref{sec:watermark_key}. In particular, we assume that: (1) the text generation has a predefined length $n$; (2) the language model provider randomly generates the watermark key $\bm \xi = (\cG_{1:n}, \zeta_{1:n})$ and shares it with the detector: $\cG_{1:n} = (\cG_1, \ldots, \cG_n)$ is a set of green lists, and $\zeta_{1:n} = (\zeta_1,...,\zeta_n)$ consists of independent standard uniform random variables; (3) the language model provider uses $\cG_{1:n}$ and $\zeta_{1:n}$ to generate a single output text $w_{1:n} = (w_1, ..., w_n)$ given an arbitrary prompt $\cP$; and (4) the user only modifies the text via substitution attack, i.e., the user replaces some tokens in the generated text with other tokens so that $l = n$ in this protocol.

\subsection{Decoding}
\label{sec: decoding}

\begin{algorithm}
\caption{Decoding Algorithm}
\label{alg: decoding}
\begin{algorithmic}
\STATE \textbf{Input:} Watermark key $\bm \zeta = (\cG_{1:n}, \zeta_{1:n})$, language model $\cM$, prompt $\cP$
\STATE \textbf{Output:} Decoded token sequence $w_{1:n} =\{w_1, w_2, ..., w_n\}$
\FOR{$t = 1$ to $n$}
    \STATE Obtain token distribution $\bm P_t$ from the language model $\cM(\cP, w_1,...,w_{t-1})$
    \STATE Define $\bm Q_t$ as in Equation \eqref{eq:hard green list}, i.e. $Q_{t,w} = P_{t,w} \mathbb{1}_{w \in \cG_t}/P_{t,\cG_t}$.
    \STATE Generate $w$ by maximal coupling (Algorithm~\ref{alg: token_sampling_coupling}) with $\bm P = \bm P_t$, $\bm Q = \bm Q_t$, and $\zeta = \zeta_t$.
    \STATE Set $w_t = w$.
\ENDFOR
\end{algorithmic}
\end{algorithm}
The decoding algorithm works as follows: for $t = 1, ..., n$, given the token distribution $\bm P_t$ at step $t$, we first construct an alternative distribution $\bm Q_t$ as in Equation \eqref{eq:hard green list}. Then we use $\zeta_t$ to sample the next token $w$ via maximal coupling. This process is summarized in Algorithm~\ref{alg: decoding}. As an edge case, if $P_{t,\cG_t} = 0$, we simply sample from $\bm P_t$. Note that this decoding scheme is unbiased by Lemma \ref{lem:maximal_coupling_unbias}. As $\min(P_{t,w}, Q_{t,w}) = P_{t,w} \mathbb{1}_{w \in \cG_t}$ and $\max(0, P_{t,w} - Q_{t,w}) = P_{t,w} \mathbb{1}_{w \not \in \cG_t}$, our watermarked decoder is effectively a stochastic function $\cS(\bm P_t, \cG_t, \zeta_t)$ sampling from the following distribution:
$$
\bP[w_t = w | \bm P_t, \cG_t, \zeta_t] = \mathbb{1}_{\zeta_t \le P_{t,\cG_t}}\frac{P_{t,w} \mathbb{1}_{w \in \cG_t}}{P_{t,\cG_t}} + \mathbb{1}_{\zeta_t > P_{t,\cG_t}}\frac{P_{t,w} \mathbb{1}_{w \not \in \cG_t}}{1 - P_{t,\cG_t}},
$$
and we arrive at the following lemma validating the intuition behind the statistical signal.

\begin{lemma}
\label{lem:conditioned_on_green}
    Let $\bm P_t$ be the original token distribution at step $t$ and $w_t$ the next token sampled from the decoding scheme described in Algorithm~\ref{alg: decoding}, with $\zeta_t$ from a standard uniform distribution. Then the conditional distribution of $\zeta_t$ given the event $\{w_t \in \cG_t\}$ is uniform on $\left[0, P_{t,\cG_t}\right].$
\end{lemma}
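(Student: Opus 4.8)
The plan is to reduce the statement to the elementary fact that a standard uniform variable, conditioned on lying below a fixed threshold $c$, is uniform on $[0,c]$. The first step is to make the threshold explicit. Since $Q_{t,w} = P_{t,w}\mathbb{1}_{w\in\cG_t}/P_{t,\cG_t}$ and $P_{t,\cG_t}\le 1$, for $w\in\cG_t$ we have $Q_{t,w}\ge P_{t,w}$, hence $\min(P_{t,w},Q_{t,w}) = P_{t,w}$, while for $w\notin\cG_t$ we have $Q_{t,w}=0$, hence $\min(P_{t,w},Q_{t,w}) = 0$. Summing over $\cW$ gives $p = \sum_{w}\min(P_{t,w},Q_{t,w}) = P_{t,\cG_t}$, which is exactly the threshold used in Algorithm~\ref{alg: token_sampling_coupling}. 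The same bookkeeping shows $\max(0,P_{t,w}-Q_{t,w}) = P_{t,w}\mathbb{1}_{w\notin\cG_t}$, so the normalized overlap distribution $\propto \min(\bm P_t,\bm Q_t)$ is supported entirely on $\cG_t$ and the normalized excess distribution $\propto \max(0,\bm P_t-\bm Q_t)$ is supported entirely on $\cR_t=\cW\setminus\cG_t$.

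The second step is to identify the conditioning event. From the support observation, whenever $P_{t,\cG_t}<1$ the token $w_t$ lies in $\cG_t$ precisely when Algorithm~\ref{alg: token_sampling_coupling} takes its first branch; that is, $\{w_t\in\cG_t\} = \{\zeta_t\le P_{t,\cG_t}\}$ up to a null event. Because $\zeta_t\sim U[0,1]$, conditioning on $\{\zeta_t\le P_{t,\cG_t}\}$ yields the law $U[0,P_{t,\cG_t}]$. The one point requiring care is that, on this branch, the actual token is drawn from the normalized overlap distribution using auxiliary randomness independent of $\zeta_t$ (recall the decoder is stochastic with its extra randomness independent of $\xi_t$, per Definition~\ref{definition:unbiasedness}); therefore the event $\{w_t\in\cG_t\}$ contains no information about $\zeta_t$ beyond the indicator $\{\zeta_t\le P_{t,\cG_t}\}$, and the conditional law of $\zeta_t$ given $\{w_t\in\cG_t\}$ is exactly $U[0,P_{t,\cG_t}]$.

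The last step is to dispose of the boundary cases. If $P_{t,\cG_t}=0$, then $\bP(w_t\in\cG_t)=0$ and the claim is vacuous, consistent with the edge-case convention of sampling directly from $\bm P_t$; if $P_{t,\cG_t}=1$, then $\zeta_t\le 1$ always, $\{w_t\in\cG_t\}$ has probability one, and $U[0,P_{t,\cG_t}]=U[0,1]$, which again matches. I do not expect a genuine obstacle here: the only subtlety is stating the equivalence $\{w_t\in\cG_t\}=\{\zeta_t\le P_{t,\cG_t}\}$ cleanly and observing that conditioning on membership in $\cG_t$, rather than directly on the threshold event, changes nothing because the within-branch token choice is independent of $\zeta_t$. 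Note that this argument simultaneously establishes Lemma~\ref{lem:maximal_coupling}, of which the present lemma is the instance arising from Algorithm~\ref{alg: decoding}.
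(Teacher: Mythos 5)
Your proposal is correct and follows essentially the same route as the paper: compute $\min(\bm P_t,\bm Q_t)=\bm P_t\mathbb{1}_{\cG_t}$ and $\max(0,\bm P_t-\bm Q_t)=\bm P_t\mathbb{1}_{\cW\setminus\cG_t}$, so that $p=P_{t,\cG_t}$ and the two branches of Algorithm~\ref{alg: token_sampling_coupling} have disjoint supports, whence $\{w_t\in\cG_t\}=\{\zeta_t\le P_{t,\cG_t}\}$ and the conditioning collapses to the elementary fact about truncated uniforms. The remark about the within-branch auxiliary randomness is a nice (if unnecessary, given the disjoint supports) safeguard, and the edge-case discussion is compatible with the paper's convention.
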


\begin{remark}
    \label{rem: red list}
    By symmetry, we can also see that $\zeta_t$, conditioned on that the next token $w_t$ is red, follows a uniform distribution on $\left[P_{t,\cG_t}, 1\right],$ i.e., $(1 - \zeta_t | w_t \text{ is red}) \sim$ $U\left[0, 1 - P_{t,\cG_t}\right]$.
\end{remark}

\begin{remark}
    Our decoder can also be viewed as an extension of that proposed by \cite{christ2023undetectable}, which developed a watermarking technique for binary language models ($\mathcal{M}^{(b)}$) where the vocabulary is limited to two tokens ($\mathcal{W}^{(b)} = \{0, 1\}$). In this binary context, the watermark key $\zeta$ in Algorithm~\ref{alg: token_sampling_coupling} directly determines whether the next sampled token is $0$ or $1$, making our method functionally equivalent to their proposal. To adapt this binary model to accommodate larger vocabularies ($|\mathcal{W}| \gg 2$), \cite{christ2023undetectable} suggested representing each token as a $\log|\mathcal{W}|$-bit string and applying binary language models to decode these strings. However, this adaptation lacks practicality since language models operate natively with large vocabularies. In contrast, our approach avoids the extra step of manually decomposing token probabilities into bits by directly leveraging the model’s inherent probability distribution. Our method provides a more natural extension by working directly with these large vocabulary sets rather than requiring binary encoding and decoding steps, by conceptualizing all green tokens as one category and all red tokens as another.
\end{remark}

\subsection{Detection}
\label{sec: detection}
As we assume that the detector has perfect knowledge of the green lists, they can identify all the green tokens in the received text $\tilde{w}_{1:n}$. In this section, we restrict our attention to only detecting the watermark signals from the green tokens; as pointed out in Remark \ref{rem: red list}, the detection method can be easily extended to include the red tokens.\\
\textbf{Detection as Hypothesis Testing.} Let $m$ be the number of green tokens in $\tilde{w}_{1:n}$. For simplicity, we assume that the first $m$ tokens in $\tilde{w}_{1:n}$, namely $\tilde w_1, ..., \tilde w_m$, represent the green tokens. If no corruption was made to the generated text, we can reformulate the detection objective into the following hypothesis testing problem:

\begin{enumerate}
    \item[$H_0$:] the text $\tilde{w}_{1:m}$ is independent of the decoder; i.e.
    $\zeta_t|(\bm P_t, \cG_t) \overset{\iid}{\sim} U[0,1]$ for $t = 1, ..., m$.
    \item[$H_1$:] the text $\tilde{w}_{1:m}$ is generated from the decoder; $\zeta_t|(\bm P_t, \cG_t) ~\sim~ U\left[0, P_{t,\cG_t}\right]$ for $t = 1, ..., m$.
\end{enumerate}
\textbf{Sparse Mixture Detection.} In practice, the user may modify the generated text. @hen a token is substituted, the corresponding uniform random variable $\zeta$ would be independent of the variable whether the new token is from the green list. In this case, the statistical signal of the watermark becomes sparse, leading us to consider an alternative hypothesis where a fraction $\varepsilon_m$ of the $\zeta_t$'s still present the watermark signal. Thus, we propose the following:

\begin{enumerate}
    \item[$H_1^{(\mathrm{mix})}$:] the text $\tilde{w}_{1:m}$ is first generated from the decoder, and then modified by the user via substitution; that is, $\zeta_t|(\bm P_t, \cG_t) ~\sim~ (1 - \varepsilon_m)U[0,1] + \varepsilon_mU\left[0, P_{t,\cG_t}\right]$ for $t = 1, ..., m$.
\end{enumerate}

First, we investigate the minimum level of watermark signal presence, that is $\varepsilon_m$, required to differentiate between watermarked and non-watermarked texts. In fact, if $\varepsilon_m$ is vanishingly small, then any kind of detection is impossible:

\begin{theorem}
    \label{thm:impossible_detection}
    Let $0 < r, p < 1$, $\varepsilon_m = m^{-p}$ and $P_{t,\cG_t} \ge m^{-r}$ for all $t$. If $2p - r > 1$, then $H_0$ and $H_1^{(\mathrm{mix})}$ merge asymptotically. That is, for any test, the sum of type I and type II errors tends to 1 as $m \to \infty$.
\end{theorem}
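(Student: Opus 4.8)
The plan is to bound the total variation distance between the joint law of the data $(\zeta_1,\ldots,\zeta_m)$ under $H_0$ and under $H_1^{(\mathrm{mix})}$, and show it tends to $0$ under the stated rate condition $2p - r > 1$. Since the sum of type I and type II errors of the optimal (likelihood-ratio) test is exactly $1 - \mathrm{TV}(\mathbb{P}_0, \mathbb{P}_1)$, it suffices to prove $\mathrm{TV}(\mathbb{P}_0, \mathbb{P}_1) \to 0$. Because both hypotheses make the $\zeta_t$'s independent across $t$ (conditionally on the $\bm P_t, \cG_t$, which we may condition on throughout and then take expectations at the end), a standard tensorization inequality applies: $\mathrm{TV}(\mathbb{P}_0,\mathbb{P}_1)^2 \le \tfrac12 \chi^2(\mathbb{P}_1 \,\|\, \mathbb{P}_0) \le \tfrac12\bigl(\prod_{t=1}^m (1 + \chi^2_t) - 1\bigr)$, where $\chi^2_t$ is the chi-square divergence between the per-coordinate alternative $(1-\varepsilon_m)U[0,1] + \varepsilon_m U[0,P_{t,\cG_t}]$ and the per-coordinate null $U[0,1]$.

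The key computation is the per-coordinate $\chi^2$ divergence. The density of the alternative is $f_t(x) = (1-\varepsilon_m) + \varepsilon_m \tfrac{1}{P_{t,\cG_t}}\mathbb{1}_{\{x \le P_{t,\cG_t}\}}$, so $f_t(x) - 1 = \varepsilon_m\bigl(\tfrac{1}{P_{t,\cG_t}}\mathbb{1}_{\{x\le P_{t,\cG_t}\}} - 1\bigr)$, and integrating its square against $U[0,1]$ gives $\chi^2_t = \varepsilon_m^2\bigl(\tfrac{1}{P_{t,\cG_t}} - 1\bigr) \le \varepsilon_m^2 / P_{t,\cG_t}$. Using $\varepsilon_m = m^{-p}$ and $P_{t,\cG_t} \ge m^{-r}$ uniformly in $t$, we obtain $\chi^2_t \le m^{-2p + r}$ for every $t$. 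Then $\prod_{t=1}^m (1 + \chi^2_t) \le (1 + m^{-2p+r})^m \le \exp(m \cdot m^{-2p+r}) = \exp(m^{1 - 2p + r})$, and the hypothesis $2p - r > 1$ is exactly $1 - 2p + r < 0$, so the exponent $m^{1-2p+r} \to 0$, giving $\prod_t(1+\chi^2_t) \to 1$ and hence $\chi^2(\mathbb{P}_1 \| \mathbb{P}_0) \to 0$ and $\mathrm{TV} \to 0$.

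The one technical wrinkle to handle carefully is that the $\bm P_t$ (hence $P_{t,\cG_t}$) are themselves random and depend on the previously generated tokens, so the $\zeta_t$ are only conditionally independent given the filtration generated by $\bm P_{1:m}, \cG_{1:m}$. I would address this by working conditionally on that filtration: the bound $\chi^2_t \le m^{-2p+r}$ holds pointwise on the event $\{P_{t,\cG_t} \ge m^{-r}\ \forall t\}$ (which is assumed to hold), so the product bound and hence $\mathrm{TV}(\mathbb{P}_0(\cdot \mid \mathcal{F}), \mathbb{P}_1(\cdot \mid \mathcal{F})) \to 0$ holds on that event, and then $\mathrm{TV}$ of the marginals is at most the expectation of the conditional $\mathrm{TV}$ by convexity of total variation. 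I expect this conditioning/measurability bookkeeping to be the only real obstacle; the divergence estimate itself is a short and robust calculation, and the rate arithmetic ($2p-r>1 \iff 1-2p+r<0$) is immediate.
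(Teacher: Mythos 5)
Your proposal is correct and proves the theorem, and the strategy is the same in spirit as the paper's (bound the total variation between the two $m$-fold product measures by a divergence that tensorizes, then reduce to a single per-coordinate second-moment computation), but you choose a different divergence. The paper bounds $\mathrm{TV}$ via the Hellinger affinity: using Lemma~\ref{lem:hellinger}, it suffices to show the per-coordinate affinity is $1 + o(1/m)$, which the paper obtains by Taylor-expanding $\sqrt{1 + \varepsilon_m(g(y)-1)}$ and reducing to $\mathbb{E}_0\bigl(\varepsilon_m(g(y)-1)\bigr)^2 = \varepsilon_m^2(P_{\cG}^{-1}-1) = o(1/m)$. You instead bound $\mathrm{TV}^2 \le \tfrac12\chi^2(\mathbb{P}_1\|\mathbb{P}_0)$ and use the exact multiplicative tensorization $1 + \chi^2 = \prod_t(1+\chi^2_t)$ together with $\chi^2_t = \varepsilon_m^2(P_{t,\cG_t}^{-1}-1)$, which is precisely the same quantity the paper computes, so the two proofs hinge on the identical second-moment calculation. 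The $\chi^2$ route has two modest advantages here: it avoids the $\sqrt{1+x}$ Taylor bookkeeping (since $\chi^2$ is already a squared $L^2$ distance), and it handles the heterogeneous $P_{t,\cG_t}$ values directly through the pointwise bound $\chi^2_t \le m^{-(2p-r)}$, whereas the paper first invokes an informal monotonicity argument to reduce to $P_{t,\cG_t}\equiv m^{-r}$. Your observation that the filtration/conditional-independence issue should be handled by conditioning and then using convexity of $\mathrm{TV}$ is a legitimate refinement that the paper elides; it does not change the estimate. Both proofs are valid and essentially equivalent at the level of the core computation.
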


To prove such results, it suffices to consider the case where each $P_{t,\cG_t} = m^{-r}$, as the smaller $P_{t,\cG_t}$ is, the more watermark signal is present in the $\zeta_t$'s. For our analysis, we reduce the alternative hypothesis into $\iid$ mixtures of uniform random variables and uniform random variables on $[0, m^{-r}]$. This type of $\iid$ assumption for alternative hypotheses is extensively analyzed in the context of sparse detection problems \citep{donoho2004higher, donoho2015higher, cai2014optimal}. Theorem \ref{thm:impossible_detection} is a direct adaptation of the results in these works. 

On the other hand, if $\varepsilon_m$ is not too small, then with some mild assumptions, it is possible to separate the hypotheses successfully as the following theorem shows.

\begin{theorem}
    \label{thm:powerful_detection}
    Let $0 < p, q < 1$ and $\varepsilon_m = m^{-p}$.
    \begin{enumerate}
        \item Let $q + 2p < 1$ and $P_{t,\cG_t} \le 1 - m^{-q}$ for all $t = 1, ..., m$. Then $H_0$ and $H_1^{(\mathrm{mix})}$ separate asymptotically. Furthermore, the alternative hypothesis can be reliably detected using the likelihood ratio test.
        \item Let $q + 2p > 1$, and $P_{t,\cG_t} \ge 1 - m^{-q}$ for all $t = 1, ..., m$. Then $H_0$ and $H_1^{(\mathrm{mix})}$ merge asymptotically. No test can reliably separate $H_0$ and $H_1^{(\mathrm{mix})}$.
    \end{enumerate}
\end{theorem}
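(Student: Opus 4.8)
The plan is to place both claims inside the standard second-moment / likelihood machinery for sparse mixture detection, working—exactly as in the reduction behind Theorem~\ref{thm:impossible_detection}—in the model where $\zeta_1,\dots,\zeta_m$ are independent with the conditional laws prescribed by $H_0$ and $H_1^{(\mathrm{mix})}$. Write $\pi_t:=P_{t,\cG_t}$, $\delta_t:=1-\pi_t$, and $\varepsilon:=\varepsilon_m=m^{-p}$. Under $H_1^{(\mathrm{mix})}$ the Lebesgue density of $\zeta_t$ on $[0,1]$ is $f_t(x)=1+\varepsilon(\delta_t/\pi_t)\,\mathbb{1}_{\{x\le\pi_t\}}-\varepsilon\,\mathbb{1}_{\{x>\pi_t\}}$, which is bounded, and a one-line computation gives $\chi^2\bigl(f_t\,\|\,U[0,1]\bigr)=\varepsilon^2\delta_t/\pi_t$. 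Moreover detectability only improves as any $\pi_t$ decreases (since $U[0,\pi_t]\preceq_{\mathrm{st}}U[0,\pi]$ whenever $\pi_t\le\pi$), so for part~1 it suffices to treat the least-favorable configuration $\pi_t\equiv 1-m^{-q}=:1-\delta$, and for part~2 we may use $\delta_t\le m^{-q}$ uniformly.

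For part~2 (merging), the $\chi^2$ distance tensorizes over independent coordinates, so
\[
1+\chi^2\!\bigl(H_1^{(\mathrm{mix})}\,\big\|\,H_0\bigr)=\prod_{t=1}^{m}\Bigl(1+\tfrac{\varepsilon^2\delta_t}{\pi_t}\Bigr)\le\exp\!\Bigl(\textstyle\sum_{t=1}^{m}\tfrac{\varepsilon^2\delta_t}{\pi_t}\Bigr),
\]
and since $\pi_t\ge 1-m^{-q}\ge\tfrac12$ and $\delta_t\le m^{-q}$ for $m$ large, the exponent is at most $2m\varepsilon^2 m^{-q}=2\,m^{1-2p-q}\to 0$ because $q+2p>1$. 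Hence $\mathrm{TV}\bigl(H_0,H_1^{(\mathrm{mix})}\bigr)\le\tfrac12\sqrt{\chi^2}\to 0$, and the sum of type~I and type~II errors of every test tends to $1$.

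For part~1 (separation) I would analyze the log-likelihood-ratio statistic $\ell_m=\sum_{t=1}^{m}\log f_t(\zeta_t)$, with all $\pi_t=1-\delta$. Each summand takes only the two values $\log(1+\varepsilon\delta/(1-\delta))$ and $\log(1-\varepsilon)$, so—unlike in Gaussian sparse detection—no truncated-likelihood argument is needed and it suffices to compare means and variances and apply Chebyshev. Because $\int_0^1(f_t-1)\,dx=0$, the first-order term of $\log f_t$ cancels in expectation, so to leading order $\bE_1[\log f_t(\zeta_t)]-\bE_0[\log f_t(\zeta_t)]\asymp\chi^2(f_t\|U)\asymp\varepsilon^2\delta$ and—this is the crucial point—$\bE_0[(\log f_t(\zeta_t))^2]\asymp\chi^2(f_t\|U)\asymp\varepsilon^2\delta$ as well, the second moment being dominated by the event $\{\zeta_t>1-\delta\}$ of probability $\delta$, on which $|\log f_t|\approx\varepsilon$. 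Summing, $\bE_1[\ell_m]-\bE_0[\ell_m]\asymp m\varepsilon^2\delta$ while $\Var_0(\ell_m),\Var_1(\ell_m)=O(m\varepsilon^2\delta)$, so the signal-to-noise ratio is of order $\sqrt{m\varepsilon^2 m^{-q}}=m^{(1-2p-q)/2}\to\infty$. Chebyshev then shows that the likelihood-ratio test—reject when $\ell_m$ exceeds a threshold placed strictly between $\bE_0[\ell_m]$ and $\bE_1[\ell_m]$—has both error probabilities tending to $0$, i.e.\ $H_0$ and $H_1^{(\mathrm{mix})}$ separate. That mean gap and standard deviation sit at the common scale $\varepsilon^2\delta$ is exactly what yields the boundary $q+2p=1$; the lazy bound $\Var_0(\log f_t(\zeta_t))\le C\varepsilon^2$ would only recover the coarser sum-test boundary $p+q=\tfrac12$.

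The main obstacle is making the two sharp per-coordinate asymptotics—$a:=\bE_1[\log f_t(\zeta_t)]-\bE_0[\log f_t(\zeta_t)]\asymp\varepsilon^2\delta$ and $\bE_0[(\log f_t(\zeta_t))^2]\asymp\varepsilon^2\delta$—rigorous, with remainders that are honestly lower order and uniform in $m$, since both the upper and the lower estimate are needed for the $\sqrt{m}$ boost to survive. I would obtain them from exact formulas, recognizing $\mathrm{KL}(f_t\|U)$ and $\mathrm{KL}(U\|f_t)$ as the binary divergences between $\mathrm{Bern}(1-\delta)$ and $\mathrm{Bern}(1-\delta(1-\varepsilon))$ and Taylor-expanding $x\mapsto x\log x$ about $1$, bounding the remainders via $\varepsilon\to 0$ and $\delta=m^{-q}\to 0$; keeping track of the first-order cancellation forced by $\int_0^1(f_t-1)\,dx=0$ is the one genuinely delicate point. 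Everything else—tensorization of $\chi^2$ in part~2, boundedness of the summands, the Chebyshev steps, and the stochastic-dominance reduction to $\pi_t\equiv 1-m^{-q}$—is routine, so part~1 carries essentially all the technical weight.
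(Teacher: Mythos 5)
Your proof is correct and follows essentially the same route as the paper: both arguments hinge on the per-coordinate $\chi^2$-scale quantity $\varepsilon_m^2\delta_t/\pi_t\asymp m^{-2p-q}$, bound a tensorized divergence for the merging claim, and establish separation by Chebyshev on the log-likelihood ratio, with both the mean gap $\mathbb{E}_1 L_m-\mathbb{E}_0 L_m$ and $\Var_0 L_m$ of order $m^{1-2p-q}$, giving a signal-to-noise ratio of order $m^{(1-2p-q)/2}$. The differences are cosmetic — the paper controls Hellinger affinity where you tensorize $\chi^2$, and Taylor-expands $\log(1+x)$ via the elementary inequalities $\log(1+x)\le x-x^2/4$ and $\log^2(1+x)\le 2x^2$ where you recognize the binary Bernoulli KL divergences, the latter being a tidy way to exploit that each $\log f_t(\zeta_t)$ is a two-valued random variable — and your observation that the lazy variance bound $\Var_0(\log f_t)\lesssim\varepsilon^2$ would only recover the coarser sum-test boundary $p+q=\tfrac12$ correctly isolates the extra $\sqrt{m}$ boost, a point the paper does not make explicit.
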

The assumptions on $P_{t,\cG_t}$ indicate that when the sum of the probabilities for green tokens is bounded away from $1$, watermark signals become detectable. Conversely, if $P_{t,\cG_t}$ is nearly $1$, then $U[0, P_{t,\cG_t}]$ approximates $U[0,1]$, making the signal from $\zeta_t \sim U[0, P_{t,\cG_t}]$ nearly indistinguishable from that of a modified token. We will discuss the implications of these findings for language models at the end of this subsection; for now, we focus on the hypothesis testing problem. \\
\textbf{Sum Test.} Since the probability sum of the green tokens, $P_{t,\cG_t}$'s, are unknown to the detector, the likelihood ratio test cannot be used in practice. We begin by considering a simple test based on the sum $s = \sum_{i=1}^m \zeta_m$. Under the null hypothesis, $\mathbb{E}_0 s = m/2$, while under the alternative hypothesis, $\mathbb{E}_1 s$ is smaller than $m/2$ by a factor depending on $m$ and parameters controlling the signal strength, e.g. $\varepsilon_m$ and $P_{t,\cG_t}$. With this observation, we can reject $H_0$ if $s$ is smaller than some critical value $c'$ satisfying $\mathbb{E}_1 s \le c' \le\mathbb{E}_0 s $. We call this test the sum test, and it can reliably detect the alternative hypothesis under slightly stronger assumptions than those stated in Theorem \ref{thm:powerful_detection}, as stated in the next proposition.
\begin{proposition}
    \label{prop:powerful_detection_sum}
    Let $0 < p, q < 1$ and $\varepsilon_m = m^{-p}$.
    \begin{enumerate}
        \item Let $q + p < 1/2$ and $P_{t,\cG_t} \le 1 - m^{-q}$ for all $t = 1, ..., m$. For the sum test with critical value $c' := \mathbb{E}_0s - m^{1 - (p + q)}/4$, the sum of type I and type II errors tends to 0 as $m \to \infty$.
        \item Let $q + p > 1/2$ and $P_{t,\cG_t} \ge 1 - m^{-q}$ for all $t = 1, ..., m$. For the sum test with any critical value satisfying $\mathbb{E}_1s \le c' \le \mathbb{E}_0s$, the power of the test is bounded away from 1.
    \end{enumerate}
\end{proposition}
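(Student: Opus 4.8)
The plan is to reduce the proposition to the first two moments of the sum statistic $s=\sum_{t=1}^m\zeta_t$, establishing the ``separate'' claim of part~1 with Chebyshev's inequality and the ``merge'' claim of part~2 with the Lyapunov central limit theorem. As in the proofs of Theorems~\ref{thm:impossible_detection} and~\ref{thm:powerful_detection}, I would condition on $(\bm P_t,\cG_t)_{t=1}^m$ and treat $\pi_t:=P_{t,\cG_t}$ as fixed numbers obeying the stated bounds, so that the $\zeta_t$ are independent. Under $H_0$, $\mathbb{E}_0 s=m/2$ and $\Var_0 s=m/12$. Under $H_1^{(\mathrm{mix})}$, the mixture mean is $\mathbb{E}_1\zeta_t=\tfrac12-\tfrac{\varepsilon_m}{2}(1-\pi_t)$, so $\mathbb{E}_1 s=\tfrac m2-\Delta_m$ with mean shift $\Delta_m:=\tfrac{\varepsilon_m}{2}\sum_{t=1}^m(1-\pi_t)$; since $\zeta_t\in[0,1]$ we have $\Var_1 s\le m/4$, and the law of total variance gives the matching lower bound $\Var_1\zeta_t\ge(1-\varepsilon_m)/12$, so $\Var_1 s=\Theta(m)$. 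Everything then turns on comparing $\Delta_m$ with $\sqrt m$: in part~1, $\Delta_m\ge\tfrac12 m^{1-p-q}$, which dominates $\sqrt m$ since $p+q<1/2$; in part~2, $\Delta_m\le\tfrac12 m^{1-p-q}=o(\sqrt m)$ since $p+q>1/2$.

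For part~1, with the prescribed $c'=\mathbb{E}_0 s-m^{1-(p+q)}/4$, the critical value lies a distance $\tfrac14 m^{1-p-q}$ below $\mathbb{E}_0 s$ and, using $\Delta_m\ge\tfrac12 m^{1-p-q}$, a distance at least $\tfrac14 m^{1-p-q}$ above $\mathbb{E}_1 s$. Chebyshev's inequality then bounds the type~I error by $\Var_0 s/(m^{1-p-q}/4)^2=\tfrac{4}{3}m^{2(p+q)-1}$ and the type~II error by $\Var_1 s/(m^{1-p-q}/4)^2=O(m^{2(p+q)-1})$, both of which vanish since $2(p+q)-1<0$; hence the sum of the two errors tends to $0$.

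For part~2, I would note that any admissible critical value obeys $c'\le\mathbb{E}_0 s=m/2$, so the power is at most $\mathbb{P}_1(s<m/2)=\mathbb{P}_1\!\bigl((s-\mathbb{E}_1 s)/\sqrt{\Var_1 s}<\Delta_m/\sqrt{\Var_1 s}\bigr)$. Because $\Delta_m=o(\sqrt m)$ while $\Var_1 s=\Theta(m)$, the standardized threshold $\Delta_m/\sqrt{\Var_1 s}$ tends to $0$; and since the $\zeta_t$ are bounded (so their third central moments are $\le1$) while $(\Var_1 s)^{3/2}=\Theta(m^{3/2})$, Lyapunov's condition holds and $(s-\mathbb{E}_1 s)/\sqrt{\Var_1 s}$ converges in distribution to $N(0,1)$. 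Hence $\limsup_{m\to\infty}(\text{power})\le\Phi(0)=\tfrac12<1$, so the power is bounded away from $1$.

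The step I expect to be the main obstacle is part~2: the crude estimate $\Var_1 s\le m/4$ does not suffice, so one must establish a genuine lower bound $\Var_1 s=\Theta(m)$ to conclude that the standardized mean shift vanishes, and one must apply the CLT carefully to a triangular array whose row-wise laws change with $m$ through $\varepsilon_m$ and the $\pi_t$. Boundedness of $\zeta_t$ makes both points routine (Lyapunov's condition with third moments is immediate), but they should be spelled out; the remainder is elementary second-moment bookkeeping analogous to the arguments behind Theorem~\ref{thm:powerful_detection}.
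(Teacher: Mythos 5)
Your proposal is correct, and for part~1 it follows the same Chebyshev argument as the paper. For part~2 you take essentially the same idea but are noticeably more careful, and in fact you patch a genuine looseness in the paper's write-up. The paper argues that since $\mathbb{E}_0 s - \mathbb{E}_1 s = m^{1-(p+q)}/2 = o(\sqrt m)$ and $\Var_1 s = O(m)$, the threshold $\mathbb{E}_0 s$ is ``within one standard deviation'' of $\mathbb{E}_1 s$ and hence the power is bounded away from $1$. Two things are elided there: (i) ``within one standard deviation'' requires a \emph{lower} bound on $\Var_1 s$, not just the stated upper bound $O(m)$; and (ii) turning ``mean shift is $o(\text{sd})$'' into ``power stays away from $1$'' needs an anti-concentration statement. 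You supply both: the law of total variance gives $\Var_1\zeta_t \ge (1-\varepsilon_m)/12$, hence $\Var_1 s = \Theta(m)$; and the bounded-$\zeta_t$ Lyapunov CLT for the triangular array gives $(s-\mathbb{E}_1 s)/\sqrt{\Var_1 s} \Rightarrow N(0,1)$, so the power is $\le \Phi(o(1)) \to 1/2$. This is exactly what the paper's informal ``within one standard deviation'' reasoning needs to be a complete proof, and your identification of it as the delicate step is on target. One cosmetic remark: the paper's reduction takes $P_{t,\cG_t}$ equal to the boundary value $1-m^{-q}$ WLOG and computes $\mathbb{E}_1 s$ exactly, whereas you keep the general $\pi_t$ and work with one-sided bounds on $\Delta_m$; both are fine, and your version avoids the ``WLOG'' step at no real cost.
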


Here $c' = \mathbb{E}_0s - m^{1 - (p + q)}/4$ is a convenient choice of the critical value for theoretical analysis. In practice, since $s$ is approximately normally distributed under the null hypothesis, we can use the quantile of the normal distribution to determine the critical value. \\
\textbf{Higher Criticism.} In the literature on sparse mixture detection, higher criticism (HC) \citep{donoho2004higher, donoho2015higher} is recognized as a powerful non-parametric method: in cases where the likelihood ratio test asymptotically separates the hypotheses, HC achieves the same separation, a property known as ``adaptive optimality.'' This insight is also relevant in our context. HC proceeds as follows: for $t = 1, \dots, m$, we begin by ordering the $\zeta_t$ values in ascending order:
$
\zeta_{(1)} < \zeta_{(2)} < ... < \zeta_{(m)}.
$
Then we define the test statistic as:
$$
\mathrm{HC}^*_m = \max_{1 \le t \le m} \mathrm{HC}_{m,t}, \quad \mathrm{HC}_{m,t} = \sqrt{m}\cdot \left(\frac{t/m - \zeta_{(t)}}{\zeta_{(t)}(1-\zeta_{(t)})}\right),
$$
and reject the null hypothesis when $\mathrm{HC}^*_m$ is large. HC is very different from conventional moment-based test statistics, such as the sum of the $\zeta_t$'s discussed in Proposition \ref{prop:powerful_detection_sum}. The key observation is that under the null hypothesis, $\zeta_t \overset{\iid}{\sim} U[0,1]$ so that $\mathrm{HC}_{m,t} \approx N(0,1)$ and thus $\mathrm{HC}^*_m \approx \sqrt{2 \log\log m}$, which grows to infinity extremely slowly. However, under the alternative hypothesis, when some of the $\zeta_t$'s are drawn from $U[0, P_{t,\cG_t}]$ for $P_{t,\cG_t} < 1$, $\mathrm{HC}^*_m$ will tend to be large. The following theorem shows that HC matches the power of the likelihood ratio test in the context of our problem.

\begin{theorem}[Adaptive optimality]
    \label{thm:powerful_detection_HC}
    Let $0 < p, q < 1$ and $\varepsilon_m = m^{-p}$. If $q + 2p < 1$ and $P_{t,\cG_t} < 1 - m^{-q}$ for all $t = 1, ..., m$, then for any constant $\delta > 0$, the decision rule that rejects the null hypothesis if and only if
    $
    \mathrm{HC}^*_m \ge \sqrt{(2+\delta) \log\log m},
    $
    has its sum of type I and type II errors tend to 0 as $m \to \infty$.
\end{theorem}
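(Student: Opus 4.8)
The plan is to control the type I and type II errors separately; since Theorem~\ref{thm:powerful_detection}(1) already shows the likelihood ratio test succeeds exactly in the regime $q+2p<1$, this establishes the ``adaptive optimality'' in the title. For the type I error, note that under $H_0$ the variables $\zeta_1,\dots,\zeta_m$ are i.i.d.\ $U[0,1]$ and independent of the $(\bm P_t,\cG_t)$'s, so the law of $\mathrm{HC}^*_m$ is the classical null law and does not depend on the NTP distributions at all. By the classical law-of-the-iterated-logarithm behaviour of the standardized uniform empirical process (as used for higher criticism in \citet{donoho2004higher}), $\mathrm{HC}^*_m=\sqrt{2\log\log m}\,(1+o_P(1))$, so $\bP_0\!\left(\mathrm{HC}^*_m\ge\sqrt{(2+\delta)\log\log m}\right)\to0$ for every fixed $\delta>0$.

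For the type II error the key idea is to evaluate the statistic in the \emph{upper tail}, where $H_1^{(\mathrm{mix})}$ creates a deficit of large $\zeta_t$'s; evaluating it in the bulk would only recover the weaker sum-test boundary $p+q<1/2$. Fix the deterministic level $u_m:=1-m^{-q}$. Because $P_{t,\cG_t}<1-m^{-q}=u_m$ for every $t$, the component $U[0,P_{t,\cG_t}]$ is supported entirely below $u_m$, so under $H_1^{(\mathrm{mix})}$ we have $\bP(\zeta_t\le u_m)=(1-\varepsilon_m)u_m+\varepsilon_m=u_m+\varepsilon_m m^{-q}=:\pi_m$ for \emph{every} $t$; hence $N:=\#\{t\le m:\zeta_t\le u_m\}\sim\mathrm{Bin}(m,\pi_m)$ with $1-\pi_m=m^{-q}(1-\varepsilon_m)\asymp m^{-q}$. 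I would then record two elementary concentration facts. First, $\Var(N)\le m\pi_m(1-\pi_m)\le m^{1-q}$, so Chebyshev gives $N/m-u_m=\varepsilon_m m^{-q}+O_P(m^{-(1+q)/2})$, and the error term is $o_P(\varepsilon_m m^{-q})$ precisely because $q+2p<1$ forces $(1+q)/2>p+q$; thus $N/m-u_m=m^{-(p+q)}(1+o_P(1))$. Second, with $\eta_m:=2m^{-1}\log m$ one checks, uniformly in $P_{t,\cG_t}$ (again using $u_m>P_{t,\cG_t}$), that $\bP(\zeta_t\in(u_m-\eta_m,u_m])\ge(1-\varepsilon_m)\eta_m\ge\tfrac12\eta_m$, so the chance that no $\zeta_t$ lands in $(u_m-\eta_m,u_m]$ is at most $(1-\tfrac12\eta_m)^m\le e^{-\log m}\to0$; on that event $\zeta_{(N)}\ge u_m-\eta_m$, and since $q<1$ this gives $1-\zeta_{(N)}=m^{-q}(1+o(1))$, $\zeta_{(N)}\to1$, hence $\sqrt{\zeta_{(N)}(1-\zeta_{(N)})}=m^{-q/2}(1+o(1))$.

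Combining these on the intersection of the two good events (which has probability $\to1$), and using $\zeta_{(N)}\le u_m\Rightarrow N/m-\zeta_{(N)}\ge N/m-u_m$,
\[
\mathrm{HC}^*_m\;\ge\;\mathrm{HC}_{m,N}\;=\;\frac{\sqrt m\,(N/m-\zeta_{(N)})}{\sqrt{\zeta_{(N)}(1-\zeta_{(N)})}}\;\ge\;\frac{\sqrt m\cdot m^{-(p+q)}(1-o_P(1))}{m^{-q/2}(1+o(1))}\;=\;m^{(1-2p-q)/2}(1-o_P(1)).
\]
Since $q+2p<1$, the exponent $(1-2p-q)/2$ is a positive constant, so $\mathrm{HC}^*_m$ grows polynomially in $m$ and in particular exceeds $\sqrt{(2+\delta)\log\log m}$ with probability tending to one; that is, the type II error of the stated rule tends to $0$. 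Together with the type I bound, the sum of the two errors tends to $0$.

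The main obstacle is twofold. First, identifying the correct level $u_m=1-m^{-q}$: it is the threshold at which the two uniform components first disagree, and essentially the only scale at which the standardized empirical deviation is large enough to produce a positive exponent under the assumption $q+2p<1$ — any bulk level only yields $p+q<1/2$, matching the sum test. Second, controlling the data-dependent denominator: one must show $\zeta_{(N)}$ sits within $o(m^{-q})$ of $u_m$ so that $\sqrt{\zeta_{(N)}(1-\zeta_{(N)})}\asymp m^{-q/2}$; replacing this by the crude bound $\sqrt{x(1-x)}\le\tfrac12$ would once again collapse to the sum-test rate. Everything else is routine Chernoff/Chebyshev bookkeeping, and the null behaviour is imported directly from the classical higher-criticism literature.
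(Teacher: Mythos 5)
Your proof is correct and rests on the same core idea as the paper's: the watermark signal lives in the upper tail of the $\zeta$-distribution, so one must probe the HC statistic near the deterministic level $x=1-m^{-q}$, and there the standardized empirical deviation grows like $m^{(1-2p-q)/2}$, which dominates $\sqrt{\log\log m}$ whenever $q+2p<1$. The type~I argument (classical LIL behavior of the uniform empirical process, imported from Donoho and Jin) is identical to the paper's.

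The difference is in how you reach the lower bound on $\mathrm{HC}^*_m$ under $H_1^{(\mathrm{mix})}$. You work directly with the order-statistic form, select the random index $N=\#\{t:\zeta_t\le u_m\}$, and lower-bound $\mathrm{HC}_{m,N}$; this forces you to concentrate $N/m$ around $\pi_m$ via Chebyshev \emph{and} separately pin $\zeta_{(N)}$ within $O(m^{-1}\log m)$ of $u_m$ to control the denominator. The paper instead exploits the identity $\mathrm{HC}^*_m=\sup_{x\in[0,1]}W_m(x)$ where $W_m(x)=\sqrt{m}\,(F_m(x)-x)/\sqrt{x(1-x)}$, and simply evaluates $W_m$ at the \emph{fixed} level $x=1-m^{-q}$. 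Because $x$ is deterministic, $W_m(x)$ is an affine function of a binomial count, so $\mathbb{E}_1 W_m(x)=\sqrt{m}\,\varepsilon_m\sqrt{(1-x)/x}=m^{(1-2p-q)/2}(1+o(1))$ and $\Var_1 W_m(x)=O(1)$ follow in one line each, and a single Chebyshev bound finishes. Your two concentration steps and the auxiliary band $(u_m-\eta_m,u_m]$ argument are exactly the machinery that this change of variables eliminates; both routes land on the same polynomial exponent $(1-2p-q)/2$. The one thing to be careful about in your version is that $\zeta_{(N)}$ only makes sense if $N\ge 1$; your band argument happens to take care of this simultaneously, so the logic closes, but it is worth flagging that the event $N=0$ is being excluded implicitly.
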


Theorem \ref{thm:powerful_detection_HC} demonstrates that HC achieves adaptive optimality for detecting our watermark in the considered asymptotically challenging setting. This asymptotic perspective aligns with established literature on sparse detection problems \citep{donoho2004higher, donoho2015higher}. While the finite-sample convergence of HC is often investigated through simulations \citep{li2015higher, arias2017distribution}, a gap remains between practical finite-sample performance and asymptotic theoretical guarantees. Our simulation and language model experiment results suggest that this adaptive optimality may converge slowly, consistent with previous findings on HC \citep{gontscharuk2015intermediates}.

Since the statistic $\mathrm{HC}^*_m$ has a heavy-tailed distribution under the null hypotheses, and stringent tests based on heavy-tailed null distributions can lead to lower power in small samples, \cite{donoho2004higher,donoho2015higher} suggests a modified HC statistic involving the maximum over all $\zeta_t$'s greater than or equal to $1/m$, which is 
$
\mathrm{HC}^+_m = \max_{1 \le t \le m, \zeta_{(t)} \ge 1/m} \mathrm{HC}_{m,t} $
to enhance test power. We incorporate this modification in our experiments in subsequent sections. Additionally, we aim to select a critical value that controls type I errors at a specific significance level, such as $\alpha = 0.01$. Following \citet{tony2011optimal}, a practical approach for choosing the critical value involves simulating the $\mathrm{HC}^+_m$ scores under the null hypothesis multiple times and using the top $\alpha$-percentile from the resulting empirical histogram. Given the slow weak convergence of the HC statistic, this method typically yields more accurate critical values than the theoretical consideration $\sqrt{(2+\delta) \log\log m}$.\\
\textbf{Practical Considerations.} Lastly, we emphasize that assuming $P_{t,\cG_t}$ is bounded away from 1 is not necessarily practical for language models: the probability of sampling a token from the green list can be arbitrarily close to 1, or even equal to 1, especially when some parts of the texts are deterministic. Nonetheless, previous results can be reinterpreted in a way that reflects the practical situation for language models: notice that if for a green token, the probability sum of its corresponding green list $P_{t,\cG_t}$ is arbitrarily close to 1, then the corresponding $\zeta_t$ is sampled from a uniform distribution that is very close to $U[0,1]$. We can treat this $\zeta_t$ as if its corresponding token has been modified, i.e., this $\zeta_t$ does not belong to the set of $\varepsilon_m$ fraction of the $\zeta_t$'s that contribute to the watermark signal. In other words,  Proposition \ref{prop:powerful_detection_sum} and Theorem \ref{thm:powerful_detection_HC} essentially says that as long as {\it a sufficient fraction of the green tokens are sampled from the language model when $P_{t,\cG_t}$ is bounded away from 1}, the watermark signal can reliably be detected by the test based on the sum test or HC.

It is not necessary to restrict our attention solely to the green tokens. As noted in Remark \ref{rem: red list}, without user modifications, the random variable $\zeta_t$ corresponding to a red token follows a uniform distribution on $\left[P_{t,\cG_t}, 1\right]$, i.e., $(1 - \zeta_t|w_t \text{ is red})$ is a uniform random variable in the interval $[0, 1 - P_{t,\cG_t}]$. Hence, we can define the test statistics as 
\begin{align*}
    \zeta'_t := \zeta_t \mathbb{1}_{\{w_t \text{ is green}\}} + (1 - \zeta_t) \mathbb{1}_{\{w_t \text{ is red}\}},~
    s:= \sum_{t = 1}^n \zeta'_t,~\text{and}~\mathrm{HC}^*_n:= \max_{1 \le t \le n} \sqrt{n}\cdot \left(\frac{t/n - \zeta'_{(t)}}{\zeta'_{(t)}(1-\zeta'_{(t)})}\right),
\end{align*}
and the analogous guarantees from Proposition \ref{prop:powerful_detection_sum} and Theorem \ref{thm:powerful_detection_HC} will follow.

\subsection{Watermark Key}
\label{sec:watermark_key}

In cases where the watermarked text has a predetermined maximum length $n$, the language model provider can pre-generate both the green lists $\cG_{1:n}$ and uniform random variables $\zeta_{1:n}$, sharing them with the detector as the watermark key. However, users may request multiple texts of varying lengths. To accommodate this, we adopt the approach of \citet{kirchenbauer2023watermark}, where the language model provider supplies two distinct pseudorandom functions\footnote{For this work, a pseudorandom function can be understood as a deterministic function of the seed that appears ``random'' in downstream applications. For an exact definition, see \citet{vadhan2012pseudorandomness}.} as the watermark key: the first generates the green lists $\cG_t$, and the second generates the uniform random variables $\zeta_t$, for each $t = 1, \dots, n$. These functions use the previous $k$ tokens $w_{(t-k)}, \dots, w_{(t-1)}$, or the prior $k$-grams, as the inputs.

During detection, we select tokens carefully to compute the test statistics. While repeated $k$-grams generate the same seed, scoring only unique $k$-grams would be too restrictive, especially for stochastic decoders, where the same $k$-gram context may yield different next tokens. Instead, we score tokens based on the uniqueness of the full $(k+1)$-tuple, comprising the watermark context ($k$-gram) and current token. This approach prevents double-counting of identical subsequences while capturing valid watermark signals from stochastic decoding. As \citet{fernandez2023three} demonstrated, this scoring scheme improves true positive rates compared to only scoring unique $k$-grams, while maintaining accurate false positive rate estimates. This improvement results from capturing more valid watermark signals without redundant scoring. While further refinement of watermark key usage and sharing is possible, these considerations are beyond the scope of this paper \citep{christ2023undetectable, kirchenbauer2023reliability}. Finally, we note that a single pre-generated green list $\cG$ could be reused for all $t = 1, \dots, n$, instead of generating a new green list at each $t$ \citep{zhao2023provable}. In theory, the unbiasedness of our decoding algorithm (Lemma \ref{lem:maximal_coupling_unbias}) is independent of the specific green list, a property also observed in practice; details are in the online Appendix.

\subsection{Interpret Speculative Decoding as Post-processing}

In the computer science literature, speculative decoding is a promising tool to accelerate LLM inference without sacrificing quality~\citep{leviathan2023fast, chen2023accelerating}. The key idea is to leverage a faster but smaller draft model to generate speculative token predictions ${\bm Q}$, which are then validated by the slower yet more powerful target model in parallel. Via the maximal coupling defined in Algorithm~\ref{alg: token_sampling_coupling}, with ${\bm P}$ being the next token distribution of the larger model, this method maintains the exact token distribution of the larger model. 

Speculative decoding can be viewed as a model of the interaction between a human user and a language model: the smaller draft model represents the language model's text generation capabilities, while the larger target model represents the human's attempt to refine and improve the generated text. Just as a human user may find most of the text generated by a language model satisfactory but still make targeted modifications to align the text more closely with their intended meaning, style, and coherence, the larger model in speculative decoding validates and modifies the speculations made by the smaller model. 

While previous studies employed i.i.d. random substitution attacks to modify watermarked tokens \citep{fernandez2023three, kuditipudi2023robust}, the post-processing in the speculative decoding setup offers a more realistic simulation of the interaction between a human user and a language model. By embedding a watermark only on the smaller draft model, we can explore its robustness under targeted modifications aimed at improving text quality. This practical setup also highlights an important consideration: while users may modify generated text, a high-quality watermarked model should minimize the need for such refinements, as frequent modifications risk compromising the watermark and lead to user fatigue. This analysis complements our theoretical analysis of the watermarking scheme and provides insights into the practical implications of the watermarking technique. The details on the implementation of our speculative decoding setup can be found in the online Appendix.
\section{Simulation Studies}
\label{sec: simulation}

We conduct simulation studies in two distinct scenarios to explore the robust detection boundaries discussed in Section \ref{sec: detection}. The first scenario focuses on situations where each green token carrying the watermark signal exhibits a sufficient signal strength, i.e., we only require $P_{t,\cG_t} \ge m^{-r}$. This setting allows us to examine when detection is impossible, as revealed by Theorem \ref{thm:impossible_detection}. In contrast, the second scenario deals with a weak signal condition, i.e., $P_{t,\cG_t} = 1 - m^{-q}$. Here, we aim to assess the effectiveness of both the sum test and HC in detecting signals, as detailed in Proposition \ref{prop:powerful_detection_sum} and Theorem \ref{thm:powerful_detection_HC}. For all simulations, we increase the number of green tokens $m$ from $10^2$ to $10^5$, by a factor of 10, and set the significance level $\alpha = 0.01$. To ensure a fair comparison, we compute each test statistic under both the null and alternative hypotheses 2000 times and use the quantile of the simulated histogram of the test statistic under the null hypothesis to determine the critical value. We then compute the rejection rate, or the power, by calculating the proportion of test statistics under the alternative hypothesis that exceeded the critical value. Furthermore, we use HC$^+_m$ as the test statistic for HC, as suggested in Section \ref{sec: detection}.\\
\textbf{The First Regime: Strong Signal.}
We fix the value of $r$ at 0.2 and test different values of $p \in \{0.25, 0.5, 0.75\}$. To generate random variables $\zeta_t$ under the alternative hypothesis, we first sample $P_{t,\cG_t}$ uniformly from $[m^{-r},1]$, and then sample $\zeta_t$ from $U[0, P_{t,\cG_t}]$. For context, $100^{-0.2} = 0.398, 1000^{-0.2} = 0.251, 10000^{-0.2} = 0.158, 100000^{-0.2} = 0.1$. The results are shown in Figure \ref{fig:impossible_detection}. When $p$ is small, a large fraction of the $\zeta_t$'s carry the watermark signal, and the power of both tests converges to 1 as $m$ increases. However, as $p$ becomes larger, the power of both tests decreases.
Indeed, with $p = 0.75$, $r=0.2$ makes the problem undetectable as $2p - r = 1.2 > 1$, and the power of both tests vanishes, consistent with Theorem \ref{thm:impossible_detection}. Interestingly, the power of the sum test is higher than that of HC, especially when $m$ is smaller. This observation will recur in the second regime.
\begin{figure}[t]
    \caption{The rejection rate of the sum test and HC under the first regime. As $p$ increases, the power of both tests vanishes.}
    \centering
    \includegraphics[width=0.325\textwidth]{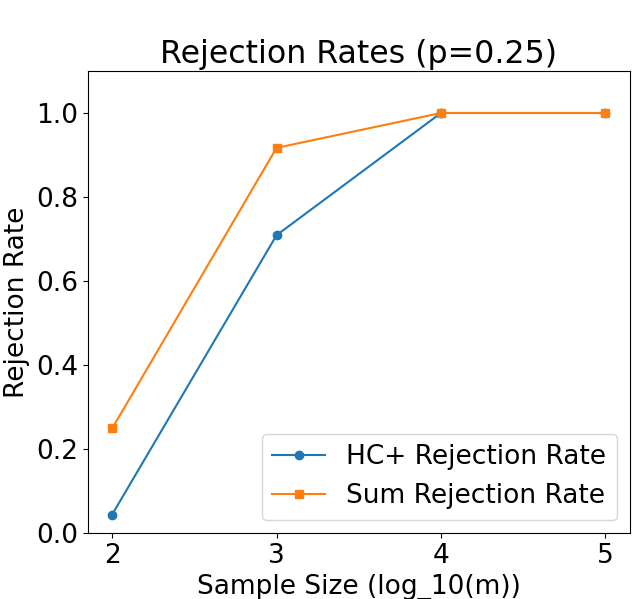}
    \includegraphics[width=0.325\textwidth]{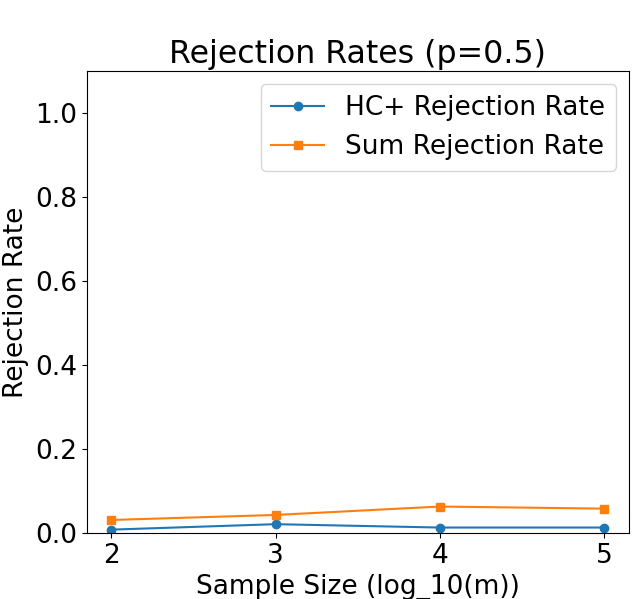}
    \includegraphics[width=0.325\textwidth]{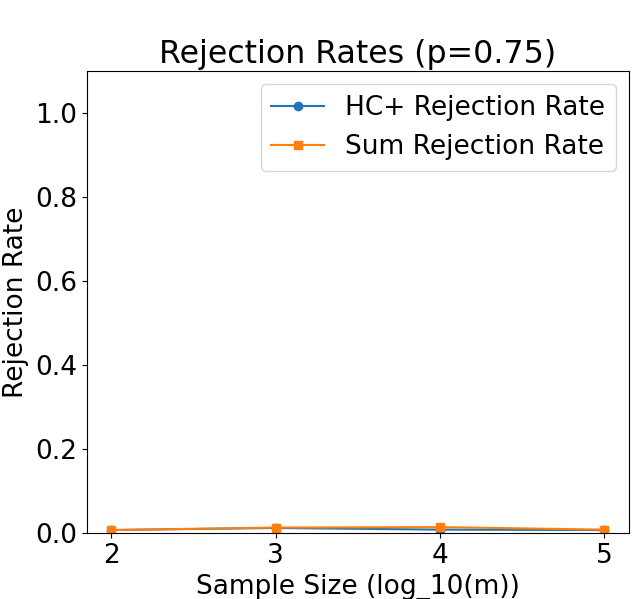}
    \label{fig:impossible_detection}
\end{figure}
\newline\textbf{The Second Regime: Weak Signal.} We will verify the two theoretical boundaries, illustrated by the $p + q = 1/2$ and $2p + q = 1$ as shown in Figure \ref{fig:detection_boundary}. 
The shaded area below the $p + q = 1/2$ boundary represents the cases where the sum test asymptotically separates the two hypotheses. The shaded area under the line $2p + q = 1$ denotes the cases where HC asymptotically separates the two hypotheses. Each dot in the figure denotes a problem instance with the $(p, q)$ parameters accordingly selected. These selections enable a comprehensive evaluation of both tests across four distinct scenarios. For all simulations, we set $P_{t,\cG_t} = 1 - m^{-q}$ to maintain a consistently weak signal strength.
\begin{figure}[t]
\caption{The detection boundaries $2p + q = 1$ and $p + q = 1/2$ as shown in Theorem \ref{thm:powerful_detection_HC} and Proposition \ref{prop:powerful_detection_sum}. The blue shaded area under the line $p + q = 1/2$ represents the detectable regime for the sum tests. The orange shaded area under the line $2p + q = 1$ shows the regime which is detectable for HC but not for the sum test. The red dots represent the choices of the parameters $p$ and $q$ we investigate in this simulation experiment.}
    \label{fig:detection_boundary}
    \centering
    \includegraphics[width=0.7\textwidth]{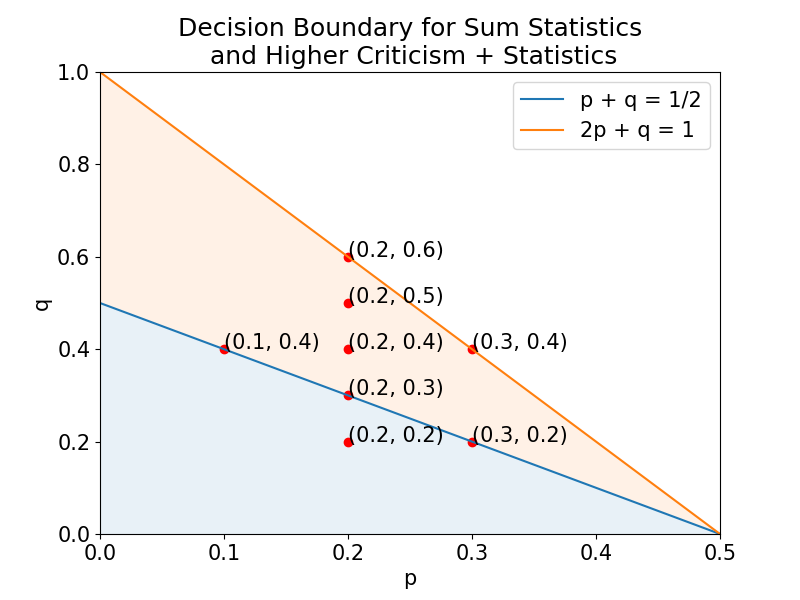}
\end{figure}
\begin{figure}[hbt!]
\caption{The rejection rate of the sum test and HC under the second regime, scenario 1. When both $p$ and $q$ are small, the power of both tests tends to converge to 1. As we move towards the boundary line $p + q = 1/2$, the power of the sum test hovers around 0.3, while the power of HC increases to 1. Continuing into the orange area, the sum test shows diminishing power, whereas HC's power continues to rise, albeit at a slower pace. Upon crossing the upper boundary $2p + q = 1$, both tests' power trends towards zero.}
    \centering
    \includegraphics[width=0.325\textwidth]{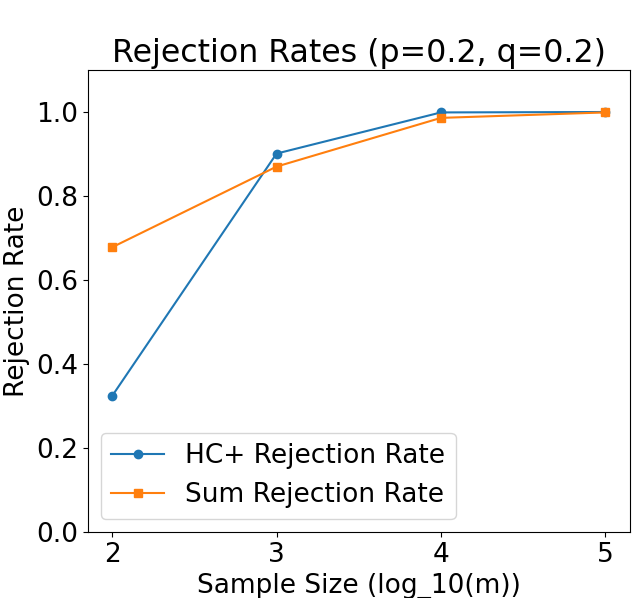}
    \includegraphics[width=0.325\textwidth]{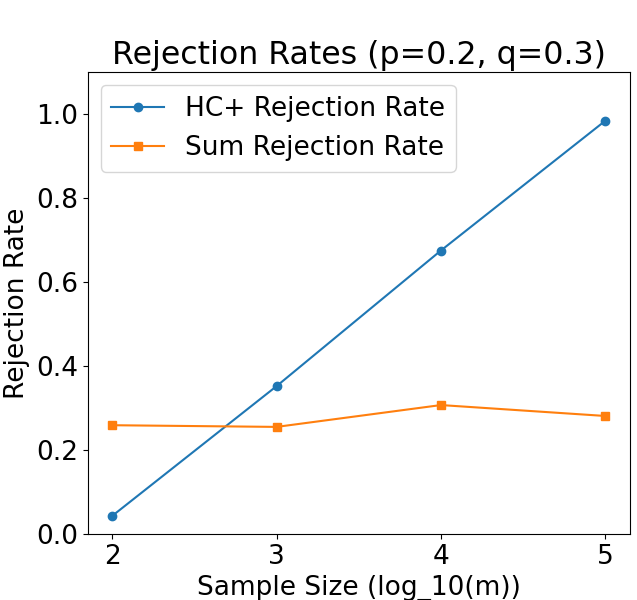}
    \includegraphics[width=0.325\textwidth]{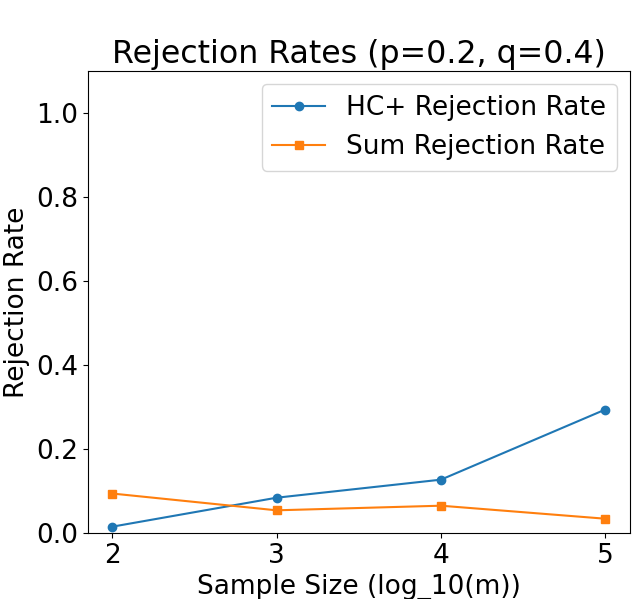}
    \includegraphics[width=0.325\textwidth]{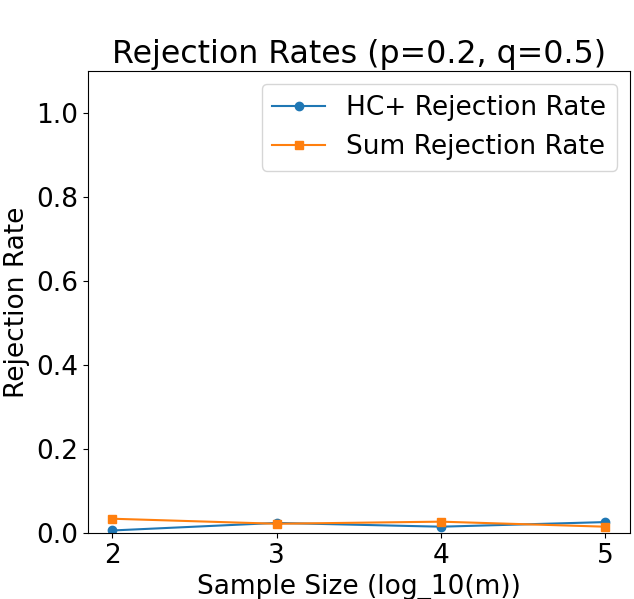}
    \includegraphics[width=0.325\textwidth]{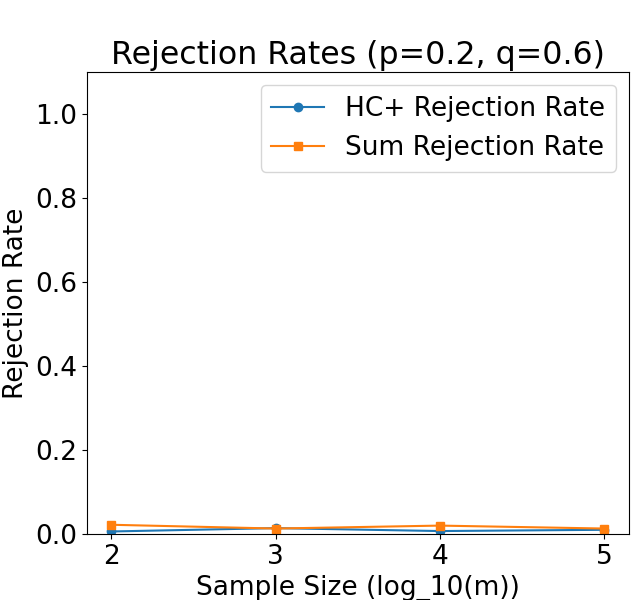}
    \label{fig:scenario_1}
\end{figure}
\newline\textit{Scenario 1: Fix $p$ and vary $q$.} We set the parameter $p$ at 0.2 and vary the parameter $q \in \{0.2, 0.3, 0.4, 0.5, 0.6\}$. Essentially, we examine how well each test performs as we maintain the proportion of signals but reduce their strength. The findings are illustrated in Figure \ref{fig:scenario_1}. When both $p$ and $q$ are small, the power of both tests tends to converge to 1. As we move towards the boundary line $p + q = 1/2$, the power of the sum test hovers around 0.3, while the power of HC increases to 1. Past this boundary, the sum test shows diminishing power, whereas HC's power continues to rise, albeit at a slower pace. Upon reaching the upper boundary $2p + q = 1$, both tests' power trends towards zero. Interestingly, in scenarios with smaller $m$, the sum test often outperforms HC, consistent with our observation in the first regime, hinting the sum test may be more suitable for analysis involving shorter texts.
\begin{figure}[t]
    \caption{The rejection rate of the sum test and HC under the second regime, scenario 2. At the boundary line $p + q = 1/2$, the power of the sum test hovers around 0.3, while the power of HC increases to 1. As we move towards the boundary $2p + q = 1$, both tests' power trends towards zero.}
    \label{fig:scenario_2}
    \centering
    \includegraphics[width=0.325\textwidth]{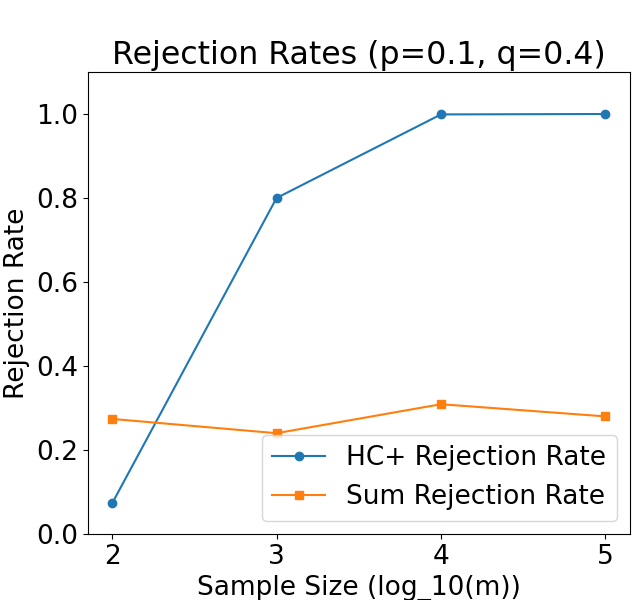}
    \includegraphics[width=0.325\textwidth]{figs/rejection_rates_p_0.2_q_0.4.png}
    \includegraphics[width=0.325\textwidth]{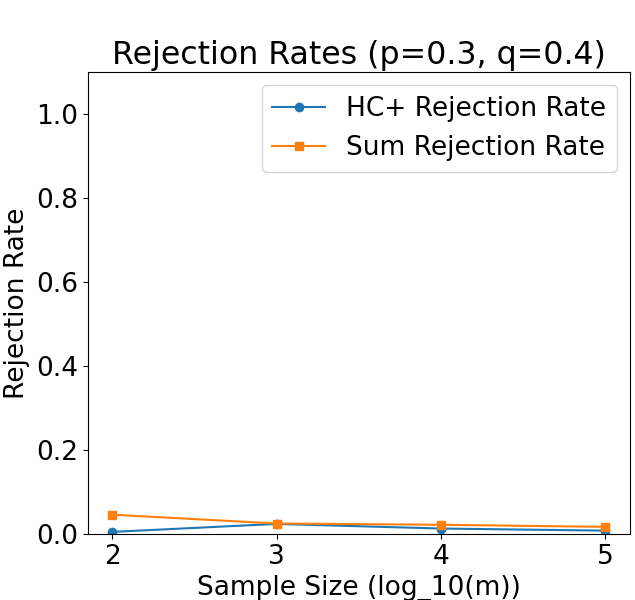}
\end{figure}
\newline\textit{Scenario 2: Fix $q$ and vary $p$.} We set the parameter $q$ at 0.4 and adjust the parameter $p \in \{0.1, 0.2, 0.3\}$. This approach allows us to investigate the effectiveness of both statistical tests as we maintain the strength of the signal but reduce the proportion of signals. The results are shown in Figure \ref{fig:scenario_2}. The observation is similar to that in Scenario 1. At the boundary line $p + q = 1/2$, the power of the sum test hovers around 0.3, while the power of HC increases to 1. As we move towards the boundary $2p + q = 1$, both tests' power trends towards zero, consistent with Theorem \ref{thm:impossible_detection}.
\begin{figure}[t]
    \caption{The rejection rate of the sum test and HC under the second regime, scenario 3. Across all situations, the power of the sum test stays around 0.3, while the power of HC approaches 1 more slowly as $p$ increases.}
    \centering
    \includegraphics[width=0.325\textwidth]{figs/rejection_rates_p_0.1_q_0.4.png}
    \includegraphics[width=0.325\textwidth]{figs/rejection_rates_p_0.2_q_0.3.png}
    \includegraphics[width=0.325\textwidth]{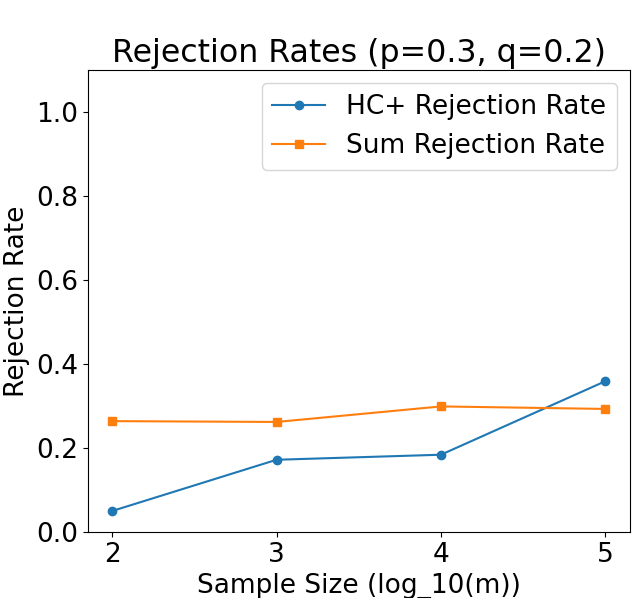}
    \label{fig:scenario_3}
\end{figure}
\newline\textit{Scenario 3: On the boundary $p + q = 1/2$.} We keep $p+q = 1/2$ and vary $p$ in $\{0.1, 0.2, 0.3\}$. That is, we stay on the detection boundary of the sum test while adjusting both the proportion and the strength of the signal. The results are shown in Figure \ref{fig:scenario_3}. Across all situations, the power of the sum test stays around 0.3, while the power of HC approaches 1 more slowly as $p$ increases. With increasing $p$, the proportion of signals decreases dramatically: e.g. $1000^{-0.1} = 0.501, 1000^{-0.2} = 0.251, 1000^{-0.3} = 0.126$. This indicates that the effectiveness of HC is more sensitive to changes in the proportion of watermarked tokens (as a function of $p$) than to the strength of the signal (as a function of $q$).
\newline\textit{Scenario 4: On the boundary $2p + q = 1$.} We keep $2p + q=1$ and vary $p$ in $\{0.2, 0.3\}$. The results are shown last plots in Figure \ref{fig:scenario_1} and \ref{fig:scenario_2}. On this boundary line, the power of both tests vanishes, consistent with Theorem \ref{thm:impossible_detection}.
\begin{figure}[t]
\caption{The histograms of the test statistics under the null and alternative hypotheses for the sum test and HC when $m = 100$. The test statistics are computed from simulation under both the null and alternative hypotheses 2000 times. The critical value is set by the quantile of the simulated histogram of the test statistic under the null hypothesis. }
    \label{fig:small_m}
    \centering
    \includegraphics[width=0.99\textwidth]{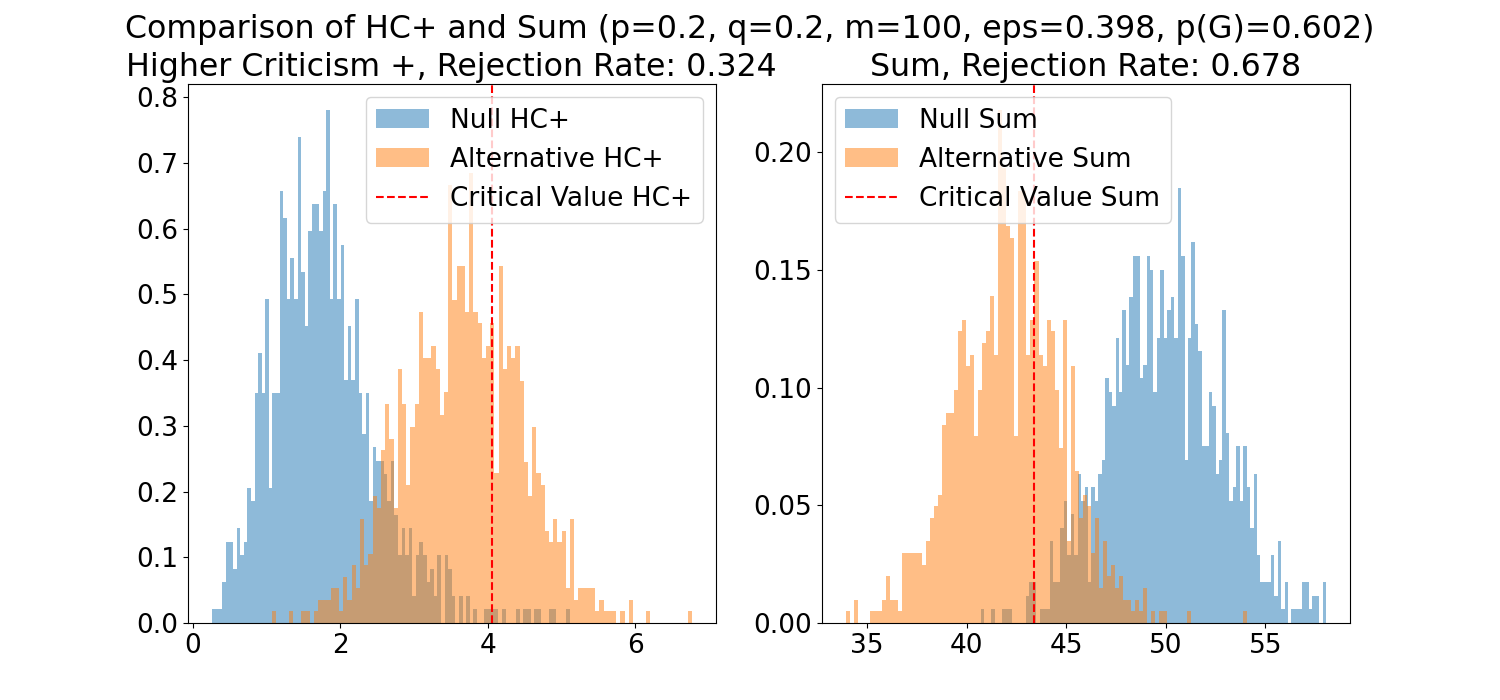}
\end{figure}

Interestingly, there is a consistent pattern across all scenarios: the sum test tends to outperform HC when $m$ is relatively small ($\log_{10} m \le 3)$. One explanation is that for higher criticism, even though HC$^+_m$ less ``heavy-tailed'' than HC$^*_m$, the distribution of HC$^+_m$ under the null hypothesis is still right-skewed, as shown in Figure \ref{fig:small_m}. When setting a small significance level, the critical value—derived from the quantile of the simulated histogram of the test statistic under the null hypothesis—can be relatively high and results in a low power. In contrast, under the null hypothesis, the sum of $\zeta_t$'s is approximately symmetrically distributed. This distinction contributes to the sum test's higher power. The same pattern will recur in the language model experiments in the next section.

\section{Language Model Experiments}
\label{sec: experiment}

\textbf{Setup.} We evaluate our watermarking method against three schemes discussed above: (1) the Gumbel-max watermark based on exponential minimal sampling by \citet{aaronson}, (2)  the soft green/red list watermark by \citet{kirchenbauer2023watermark} with a moderate text distortion hyperparameter $\delta = 1$, and (3) the Distribution-Preserving watermark (DiPmark) by \citet{wu2023dipmark} with their recommended watermark hyperparameter $\alpha = 0.45$. For all methods where applicable, we set $\gamma = 0.5$ to control the green list size such that $|\cG| = \gamma|\cW|$. For a fair comparison, our scheme uses both the green tokens and the red tokens for detection as described in Section \ref{sec: detection}. Following \cite{fernandez2023three}, for all methods, we score tokens based on the uniqueness of the full $(k + 1)$-tuple and compute $p$-values using the exact null distribution rather than $z$-scores, and flag a text as watermarked if the $p$-value is lower than $0.01$, which ensures a false positive rate to be less than $0.01$. To ensure fair comparison of text quality, we implement repeated context masking for all methods \citep{wu2023dipmark}.

As users typically prompt language models for open-ended conversational interactions, question-answering, and task assistance, the most widely-used language models are instruction fine-tuned. To mimic this, we prompt Microsoft's instruction fine-tuned \texttt{Phi-3} models \citep{abdin2024phi} on two question-answering datasets, namely ELI5 \citep{fan2019eli5} and FinQA \citep{maia201818}.\footnote{For each dataset, we use the same 200 samples selected by \citet{tu2023waterbench}.} For all experiments, we embed watermarks to the \texttt{Phi-3-mini-4k-instruct} model (3.8B parameter).

To evaluate the text distortion induced by the watermark, we use the S-BERT similarity score to compare the semantic similarity between the generated text of the same model with and without watermarking \citep{reimers2019sentence}; the higher the similarity score, the less the distortion. We also report the true positive rate (TPR) (i.e., the fraction of correctly identifying a watermarked response among all watermarked samples) before and after attacks, including a purely substitution-based attack (TPR aug.) and a paraphrasing attack (TPR para.).\footnote{Following \citet{fernandez2023three}, we simulate the substitution attack by randomly and independently replacing each token with probability $0.1$. For the paraphrasing attack, we apply the Context-aware Lexical Substitution \citep{yang2022tracing}, which introduces a BERT-based infill model for generating contextually appropriate lexical substitutions \citep{devlin2019bert} with paraphrasing ratio$=0.3$.} In addition, we investigate the speculative decoding setup to assess the robustness of different watermarks where the goal is to modify watermarked texts so as to improve their quality.
As discussed earlier, this setup better approximates the human-language model interaction. Here, we choose \texttt{Phi-3-medium-4k-instruct} (14B parameters) as the target model.\footnote{The source code is available at \url{https://github.com/Xieyangxinyu/Debiasing-Watermarks-for-Large-Language-Models-via-Maximal-Coupling}.}
We conduct comprehensive ablation studies with identical experimental configurations, also including Meta's \texttt{LLaMA} model family \citep{dubey2024llama}. Since both model families produce similar results, we focus on the \texttt{Phi-3} family in the following sections for brevity and present the complete results in the online Appendix.
\begin{table*}[t]
\captionsetup{font={stretch=1}}
\renewcommand{\arraystretch}{0.5}
\caption{Comparison of the S-BERT similarity scores and the true positive rates (TPR) among four watermarking schemes: (1) the Gumbel-max watermark \citep{aaronson}, (2) the green/red list watermark \citep{kirchenbauer2023watermark}, (3) the DiPmark method \cite{wu2023dipmark}, and (4) our proposed one. }
\label{tab: S-BERT and TPR, alpha 0.01, main}
\begin{small}
\centering
\begin{tabular}{c c c l l | r | r | r | r}
\toprule
Model & Data & $k$ & Metric & & Gumbel-max & Green/red list & DiPmark & Ours \\
\midrule
\multirow{20}{*}{\rotatebox[origin=c]{90}{Phi-3-mini-4k-instruct (3.8B)}} & \multirow{10}{*}{\rotatebox[origin=c]{90}{FinQA}} & \multirow{5}{*}{2} & \multirow{2}{*}{S-BERT} & mean & 0.7682 & 0.8167 & 0.7975 & 0.7963 \\
 &  &  &  & median & 0.8115 & 0.8472 & 0.8396 & 0.8382\\
 &  &  & TPR &  & 0.9500 & 0.6550 & 0.9250 & \textbf{0.9750}\\
 &  &  & TPR aug. &  & 0.9450 & 0.3500 & 0.8150 & \textbf{0.9750}\\
 &  &  & TPR para. &  & 0.9300 & 0.2800 & 0.7000 & \textbf{0.8400}\\
\cmidrule{3-9}
 &  & \multirow{5}{*}{4} & \multirow{2}{*}{S-BERT} & mean & 0.7791 & 0.8233 & 0.7967 & 0.8023 \\
 &  &  &  & median & 0.8312 & 0.8494 & 0.8405 & 0.8288\\
 &  &  & TPR &  & 0.9550 & 0.7750 & 0.9350 & \textbf{0.9500}\\
 &  &  & TPR aug. &  & 0.9550 & 0.5150 & 0.7600 & \textbf{0.8300}\\
 &  &  & TPR para. &  & 0.9250 & 0.2600 & 0.5500 & \textbf{0.6750}\\
\cmidrule{2-9}
 & \multirow{10}{*}{\rotatebox[origin=c]{90}{ELI5}} & \multirow{5}{*}{2} & \multirow{2}{*}{S-BERT} & mean & 0.7133 & 0.7178 & 0.7160 & 0.7161 \\
 &  &  &  & median & 0.7184 & 0.7239 & 0.7350 & 0.7290\\
 &  &  & TPR &  & 1.0000 & 0.9100 & \textbf{0.9900} & \textbf{0.9900}\\
 &  &  & TPR aug. &  & 1.0000 & 0.6500 & 0.9200 & \textbf{0.9350}\\
 &  &  & TPR para. &  & 1.0000 & 0.4900 & 0.8850 & \textbf{0.9300}\\
\cmidrule{3-9}
 &  & \multirow{5}{*}{4} & \multirow{2}{*}{S-BERT} & mean & 0.7182 & 0.7084 & 0.7123 & 0.7185 \\
 &  &  &  & median & 0.7286 & 0.7236 & 0.7266 & 0.7344\\
 &  &  & TPR &  & 1.0000 & 0.8850 & 0.9900 & \textbf{1.0000}\\
 &  &  & TPR aug. &  & 1.0000 & 0.5750 & 0.8750 & \textbf{0.9550}\\
 &  &  & TPR para. &  & 0.9950 & 0.4350 & 0.7500 & \textbf{0.9250}\\
\bottomrule
\end{tabular}
\end{small}
\end{table*}
\newline\textbf{Comparison against Baselines.} Table~\ref{tab: S-BERT and TPR, alpha 0.01, main} reports the S-BERT similarity scores and the TPR among the four watermarking schemes. Although the Gumbel-max watermark is theoretically unbiased with the assumption of perfect pseudorandomness \citep{aaronson}, experimental results show that the deterministic decoder can distort the generated text more severely than the stochastic ones. Indeed, when using the preceding $k$ tokens as the seed for the pseudorandom variables $\xi$ during decoding, perfect pseudorandomness is compromised in practice, as evidenced by the consistently lower S-BERT scores. Meanwhile, when comparing against the green/red list soft watermark and DiPmark, our method often leads to a more competitive TPR when similar S-BERT scores are achieved. This trend holds also under substitution-based and context-aware paraphrasing attacks, as reflected in the TPR aug. and TPR para. metrics.\\
\begin{table}[t]
\captionsetup{font={stretch=1}}
\renewcommand{\arraystretch}{0.5}
\caption{Comparative analysis of repeated token proportions across different watermarking methods.}
\label{tab:token_stats}
\centering
\begin{small}
\begin{tabular}{l l l l | r}
\toprule
Model & Dataset & $k$ & Method & Repeated (\%) \\
\midrule
\multirow{16}{*}{\rotatebox[origin=c]{90}{Phi-3-mini-4k-instruct (3.8B)}} & \multirow{8}{*}{FinQA} & \multirow{4}{*}{2} & Gumbel-max & 18.61\% \\
 &  &  & Green/red list & 16.77\% \\
 &  &  & DiPmark & 17.80\% \\
 &  &  & Ours & 17.03\% \\
\cmidrule{3-5}
 &  & \multirow{4}{*}{4} & Gumbel-max & 4.17\% \\
 &  &  & Green/red list & 3.71\% \\
 &  &  & DiPmark & 3.50\% \\
 &  &  & Ours & 4.04\% \\
\cmidrule{2-5}
 & \multirow{8}{*}{ELI5} & \multirow{4}{*}{2} & Gumbel-max & 15.34\% \\
 &  &  & Green/red list & 13.55\% \\
 &  &  & DiPmark & 13.32\% \\
 &  &  & Ours & 14.78\% \\
\cmidrule{3-5}
 &  & \multirow{4}{*}{4} & Gumbel-max & 3.78\% \\
 &  &  & Green/red list & 1.81\% \\
 &  &  & DiPmark & 2.14\% \\
 &  &  & Ours & 2.07\% \\
\bottomrule
\end{tabular}
\end{small}
\end{table}
\textbf{Text Repetition.} We quantitatively evaluate the repetitiveness issue discussed in Section \ref{sec: experiment} by analyzing the proportion of repeated $k$-token sequences in the generated text. The ``Repeated'' column in Table \ref{tab:token_stats} reports the average percentage of times that repeated context masking is applied within each generated text, which directly reflects how frequently $k$-grams are repeated during generation. While repeated context masking was introduced by \citet{hu2023unbiased, dathathri2024scalable} to mitigate quality degradation, our results show that the Gumbel-max approach still produces notably more repetitive text compared to other methods. For example, with $k=2$ on the FinQA dataset, Gumbel-max exhibits an 18.61\% token repetition rate versus 16.77-17.80\% for other methods. This suggests that in practice, deterministic dependence on prior $k$-tokens in methods like Gumbel-max can bias the generation toward repetitive patterns that persist even with repeated context masking. In contrast, our stochastic decoder's additional randomness in sampling helps reduce such repetitive patterns while maintaining watermark effectiveness.\\
\textbf{Speculative Decoding.} Table \ref{tab:speculative_stats} presents a comparison of the TPR between the Gumbel-max watermark \citep{aaronson} and our method under the speculative decoding setup. This setup, where a draft model proposes tokens that may be rejected by the target model, naturally mirrors how users iteratively refine generated text - making targeted, non-independent edits to improve quality rather than random token substitutions. The average rejection rates indicate the proportion of watermarked tokens that are discarded during this process. While both approaches achieve comparable TPR, the lower rejection rate of our method can translate to improved user experience - users need to make fewer revisions to achieve their desired output compared to the Gumbel-max approach.\\
\textbf{Comparison of Test Statistics.}
In Table \ref{tab:stats}, we experiment with the two different choices of test statistics: the sum test and HC. Our experiments confirm the insights from Section \ref{sec: simulation}: the sum test consistently achieves higher TPR than HC across all configurations. This aligns with our simulation findings that the sum test outperforms HC when number of tokens used to detect the watermark is relatively small ($\le 10^3$). For longer generated texts, we expect HC to become increasingly competitive with the sum test, as suggested by our simulation results.
\begin{table*}[t]
\captionsetup{font={stretch=1}}
\renewcommand{\arraystretch}{0.5}
\caption{Comparison of the true positive rates (TPR) and rejection rates between the Gumbel-max watermark and our proposed scheme under the speculative decoding setup.}
\label{tab:speculative_stats}
\centering
\begin{small}
\centering
\begin{tabular}{l l l l | r | r}
\toprule
Model & Data & $k$ & Metric & Gumbel-max & Ours \\
\midrule
\multirow{8}{*}{Phi-3-mini-4k-instruct (3.8B)} & \multirow{4}{*}{FinQA} & \multirow{2}{*}{2} & Avg. Rejection Rate & 20.97\% & 11.48\% \\
 &  &  & TPR & 0.3100 & 0.2850 \\
\cmidrule{3-6}
 &  & \multirow{2}{*}{4} & Avg. Rejection Rate & 22.35\% & 12.23\% \\
 &  &  & TPR & 0.4550 & 0.4600 \\
\cmidrule{2-6}
 & \multirow{4}{*}{ELI5} & \multirow{2}{*}{2} & Avg. Rejection Rate & 28.36\% & 15.44\% \\
 &  &  & TPR & 0.4150 & 0.4650 \\
\cmidrule{3-6}
 &  & \multirow{2}{*}{4} & Avg. Rejection Rate & 29.81\% & 16.47\% \\
 &  &  & TPR & 0.4450 & 0.4800 \\
\bottomrule
\end{tabular}
\end{small}
\end{table*}

\begin{table*}[t]
\captionsetup{font={stretch=1}}
\renewcommand{\arraystretch}{0.5}
\caption{Comparison of different test statistics for our watermarking scheme. Num. Tokens denotes the average number of tokens used to detect the watermark. The test statistics are the sum test and HC. The sum test is more powerful than HC.}
\label{tab:stats}
\centering
\begin{small}
\centering
\begin{tabular}{c c c l | r | r}
\toprule
Model & Data & $k$/Num. Tokens & Metric & Sum & HC$^+$ \\
\midrule
\multirow{8}{*}{Phi-3-mini-4k-instruct (3.8B)} & \multirow{4}{*}{FinQA} & \multirow{2}{*}{2/355.210} & TPR & \textbf{0.975} & 0.92 \\
 &  &  & TPR aug. & \textbf{0.87} & 0.71 \\
\cmidrule{3-6}
 &  & \multirow{2}{*}{4/370.115} & TPR & \textbf{0.95} & 0.91 \\
 &  &  & TPR aug. & \textbf{0.83} & 0.625 \\
\cmidrule{2-6}
 & \multirow{4}{*}{ELI5} & \multirow{2}{*}{2/299.045} & TPR & \textbf{0.99} & 0.98 \\
 &  &  & TPR aug. & \textbf{0.935} & 0.865 \\
\cmidrule{3-6}
 &  & \multirow{2}{*}{4/329.300} & TPR & \textbf{1} & 0.99 \\
 &  &  & TPR aug. & \textbf{0.955} & 0.83 \\
\bottomrule
\end{tabular}
\end{small}
\end{table*}
\section{Conclusions}
\label{sec: conclusion}

By integrating maximal coupling into the green/red list watermarking scheme, we achieve a provably unbiased method that is less susceptible to distortion under pseudorandomness. Moreover, by reformulating the watermark detection problem as a mixture detection problem and focusing on a sparse detection case, our theoretical analyses establish the asymptotic detection boundaries of our method. Our empirical results underscore the efficacy of our approach, showing superior preservation of generation distribution and detection power compared to previous methods. The findings not only enhance our understanding of effective watermarking in language models but also pave the way for future research in developing more robust, model-agnostic watermarking schemes.

\section*{Acknowledgment}
This work was supported in part by the National Key R\&D Program of China (2022YFA1007900), NSF DMS-2310679, a Meta Faculty Research Award, Wharton AI for Business, and the National Natural Science Foundation of China (12271013,72342004). This work utilized computational resources provided by the Argonne Leadership Computing Facility and was also based upon work supported by Laboratory Directed Research and Development (LDRD) funding from Argonne National Laboratory, under Contract No.\ DE-AC02-06CH11357. The authors would like to thank two anonymous referees for their constructive comments that helped improve the presentation of the paper.

\section*{Disclosure Statement} The authors report there are no competing interests to declare.

\bibliographystyle{plainnat}
\bibliography{bibfile}

\appendix
\section{Appendix}

\subsection{Unbiased Watermarking Schemes and Related Works}
\label{sec: related works}

\citet{kirchenbauer2023watermark} introduced a ``soft'' watermark to mitigate the bias of the ``hard'' watermark by sampling from both the green list $\cG$ and the red list $\cR := \cW \setminus \cG$, but skewing the sampling distribution to favor tokens from the green list. That is, given a constant $\delta > 0$ and the next token distribution $P_w$, we construct the alternative distribution $Q_{w}$ as

\begin{equation}
    \label{eq:prompt green list}
    Q_{w} = \begin{cases}
        \frac{\mathrm{e}^{\delta}P_w}{C} \quad &\text{if } w \in \cG\\
        \frac{P_w}{C} &\text{otherwise}
    \end{cases}
\end{equation}
where $C = \mathrm{e}^{\delta}P_{\cG} + P_{\cW\setminus \cG} = 1 + (\mathrm{e}^{\delta} - 1) P_{\cG}$ is the normalizing constant. Clearly, if the green list is pre-determined, then both the ``hard'' and the ``soft'' watermarks are skewing the original token distribution. We argue that even if the green list is generated uniformly at random for each $t$, the soft watermark is still biased. To see that the soft watermark can distort the text distribution, consider the following example: suppose we have a binary vocabulary (that is, we only generate tokens from \{0,1\}) and the original language model is a biased coin, which turns up head with probability $0.9$. Let the green list $\cG$ be sampled from $\{\{0\}, \{1\}\}$ with equal probability. After embedding the soft watermark with $\delta = 1$, the sampling probability of the next token becomes
\begin{align*}
    \bP_{\cG}[\text{next token is }1] &= \bP[\text{next token is }1| \cG= \{0\}]\bP[\cG= \{0\}] \\
    & \quad \quad \quad + \bP[\text{next token is }1| \cG= \{1\}]\bP[\cG= \{1\}] \\
    &= \frac{1}{2}\cdot \frac{0.9}{0.9 + \mathrm{e}\cdot 0.1} + \frac{1}{2}\cdot \frac{\mathrm{e}\cdot 0.9}{\mathrm{e}\cdot 0.9 + 0.1} \approx 0.8644
\end{align*}

This leads to a delicate trade-off between the text distortion and the detection power for this family of watermarking schemes, leaving the choice of $\delta$ a parameter that requires careful tuning in practice. For more discussion on such trade-off, we refer the reader to \citep{kirchenbauer2023watermark,kirchenbauer2023reliability, cai2024towards}. Our method, on the other hand, is unbiased by construction, thus avoiding this parameter-tuning issue.

Several alternative methods have been proposed to design provably unbiased watermarking schemes. The main idea involves modifying the decoding strategy of the language model decoder to create watermark signals without altering the token distributions. The first and widely recognized method adapts the exponential minimal sampling, as introduced by \citet{aaronson}. This method selects the next token $w_t$ directly by computing 
$w_t = \cS(\bm P_t, \zeta_t) := \argmax_{w \in \cW} \log U_{w}/P_w$
where $\zeta_t = (U_1,\ldots, U_{|\cW|})$ is the random variable that consists of $|\cW|$ i.i.d. copies of $U[0,1]$ and is part of the watermark key. This method is unbiased when true randomness is used for generating $\zeta_t$ and the proof involves the Gumbel trick \citep{gumbel1948statistical, fernandez2023three,li2024statistical}. However, a common practice to share the watermark key is to seed a pseudorandom number generator for $U$ with the prior $k$ tokens $w_{(t-k)}, ..., w_{(t-1)}$ (discussed in Section \ref{sec:watermark_key}). Notice that upon fixing the $U$, the decoder is deterministic, unlike the standard decoding methods we use in this paper. When the same $k$-gram appears multiple times in the generated text, this deterministic decoder will generate the same token following these $k$-grams, leading to repetitive texts and degraded text quality \citep{christ2023undetectable, kuditipudi2023robust}. To mitigate this, \citet{wu2023dipmark} propose to apple repeated context masking, which prevents the watermark from being applied on step $t$ if the same $k$-gram has been used to watermark previously. This method has later been adapted by \citet{hu2023unbiased, dathathri2024scalable}. However, the deterministic nature of this decoder can bias the generation toward repetitive patterns that persist even with repeated context masking, as we demonstrate in Section \ref{sec: experiment}. Several theoretical analyses and variants of this method have been proposed \citep{fernandez2023three, zhao2024permute}, but no existing work completely addresses this issue. 

Another method is the inverse transform sampling strategy \citep{christ2023undetectable, kuditipudi2023robust,hu2023unbiased,li2024statistical}. The idea is to first sample a standard uniform random variable $\zeta_{1:n} = (\zeta_1, ..., \zeta_n)$, and then map $\zeta_t$ to the next token. Although the randomness from $\zeta_t$ can guarantee the unbiasedness of the token generation, designing an effective test to detect watermarks with this method is challenging: \citet{kuditipudi2023robust} showed that their correlation-based detection method yields weaker power than the exponential minimal sampling method, even when perfect randomness is applied, while the detection method proposed by \citet{hu2023unbiased} involves solving an optimization problem and requires hyperparameter tuning to achieve competitive power.

Building on inverse transform sampling, \citet{wu2023dipmark} proposed the following watermarking scheme: before generating the next token, start with a random permutation of the token set $\cW$ and assign the last $0 < \gamma < 1$ fraction of the permutation to the green list $\cG$. Then, reweight the token distribution $P_w$ with a parameter $0 < \alpha < 1$ by the following strategy: first, for $i = 1,..., |\cW|$, define
$F_{w^{(i)}} = \max\{\sum_{j = 1}^i P_{w^{(j)}} - \alpha, 0\} + \max\{\sum_{j = 1}^i P_{w^{(j)}} - (1 - \alpha), 0\}$, where $w^{(i)}$ is the $i$th token in this permutation and $f(w^{(0)}) = 0$; then set 
$Q_{w^{(i)}} = F_{w^{(i)}} - F_{w^{(i-1)}}.$
Intuitively, this token distribution $\bm Q$ gives more weight to tokens towards the end of the permutation. This ensures unbiased token selection after integrating the randomness of the permutation and intuitively increases the likelihood of choosing tokens from the green list. During the detection, the detector can count the number of green tokens in the generated text and compare it with $\gamma$ times the total length of the text. Nonetheless, the reweighting parameter $\alpha$ is independent of the size $\gamma$ of the green list, meaning that reweighting doesn't necessarily favor all the green list tokens during generation, leading to a potentially noisy watermark signal. As a result, tuning these two hyperparameters, namely $\alpha$ and $\gamma$, becomes a delicate task. 

Notably, \citet{kuditipudi2023robust} advocates for the language model provider to generate the random variables $\zeta_{1:n} = (\zeta_1, ..., \zeta_n)$ before text generation and share them with the detector as part of the watermark key. The language model provider can then reuse this fixed $\zeta_{1:n}$ in clever ways to generate texts in variable lengths. When the generated text has been modified via substitution, insertion, or deletion, the detector can detect the watermark by aligning the received text with the shared random variables $\zeta_{1:n}$. We do not pursue this strategy in this paper, however, as it is not clear how the issue of multiple testing will arise from this reuse, and if so, how much it may weaken the power of the test. Moreover, aligning the received text with the shared random variables $\zeta$ may be computationally expensive for practical deployment, especially when the text is long.

Recently, several theoretical works have emerged on other aspects of watermarking for language models. \citet{huang2023towards} characterizes the Uniformly Most Powerful (UMP) watermark when a small amount of distortion is allowed, as well as the minimax Type II error in the model-agnostic setting. Nonetheless, the computational efficiency of these characterized tests remains unclear. \citet{zhang2023watermarks} demonstrates that under some assumptions, a computationally bounded attacker can erase the watermark without causing significant quality degradation. However, in many practical situations, like preventing AI-generated content from being used for further model training, existing watermarking schemes can still be useful.

\subsection{Total Variation Distance and Hellinger Distance}
Here we provide a brief review of the total variation distance and the Hellinger distance. For two probability measures $\bm P$ and $\bm Q$ on the same probability space $\mathcal{X}$, with densities $p$ and $q$ with respect to some underlying base measure $\nu$. The total variation distance is defined as
$$\|\bm P - \bm Q\|_{\text{TV}} = \frac{1}{2}\int_{\mathcal{X}}|p(x) - q(x)|d\nu(x).$$
The Hellinger distance is defined as
$$\|\bm P - \bm Q\|_{\text{H}} = \left(\int_{\mathcal{X}}\left(\sqrt{p(x)} - \sqrt{q(x)}\right)^2d\nu(x)\right)^{1/2}$$
Notice that from the definitions, 
$$\frac{1}{2}\|\bm P - \bm Q\|_{\text{H}}^2 \leq \|\bm P - \bm Q\|_{\text{TV}}.$$
and the Le Cam's inequality states that 
$$\|\bm P - \bm Q\|_{\mathrm{TV}} \leq H(\bm P \| \bm Q) \sqrt{1-\frac{H^2(\bm P \| \bm Q)}{4}}$$

For product measures with $\iid$ components, $\bm P^{1:n} = \otimes_{t=1}^n\bm P^t$ and $\bm Q^{1:n} = \otimes_{t=1}^n\bm Q^t$, we can decouple the Hellinger distance as follows:
$$\|\bm P^{1:n} - \bm Q^{1:n}\|_{\text{H}}^2 = 
2-2\left(1-\frac{1}{2} H^2\left(\bm P_1 \| \bm Q_1\right)\right)^n$$
Thus,
$$2 - 2 \exp\left(-\frac{n H^2\left(\bm P_1 \| \bm Q_1\right)}{2}\right)\le \| \bm P^{1:n} - \bm Q^{1:n}\|_{\text{H}}^2 \leq  n H^2\left(\bm P_1 \| \bm Q_1\right)$$

We often use the Hellinger affinity as the notation is more convenient at times, which is defined as
$$\text{Aff}(\bm P, \bm Q) = \int_{\mathcal{X}}\sqrt{p(x)q(x)}d\nu(x).$$
Notice that $\frac{1}{2}\|\bm P - \bm Q\|_{\text{H}}^2 = 1 - \text{Aff}(\bm P, \bm Q)$.
Putting everything together, we have the following lemma: 
\begin{lemma}
    \label{lem:hellinger}
    If $\text{Aff}(\bm P^{1:n}, \bm Q^{1:n}) = 1 + o(1/n)$, then $\|\bm P^{1:n} - \bm Q^{1:n}\|_{\text{TV}}^2 = o(1)$ as $n \to \infty$; and if $\text{Aff}(\bm P^{1:n}, \bm Q^{1:n}) = 1 - \omega(1/n)$, then $\|\bm P^{1:n} - \bm Q^{1:n}\|_{\text{TV}}^2 \to 1$ as $n \to \infty$.
\end{lemma}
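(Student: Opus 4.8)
The plan is to route both assertions through the Hellinger distance, using only the ingredients already assembled above: for $\iid$ product measures, the decoupling identity $\|\bm P^{1:n}-\bm Q^{1:n}\|_{\text{H}}^2 = 2 - 2(1-\tfrac12 H^2(\bm P_1\|\bm Q_1))^n$ and its two one-sided consequences $2 - 2\exp(-nH^2(\bm P_1\|\bm Q_1)/2)\le \|\bm P^{1:n}-\bm Q^{1:n}\|_{\text{H}}^2 \le nH^2(\bm P_1\|\bm Q_1)$, together with the comparison inequalities $\tfrac12\|\bm P-\bm Q\|_{\text{H}}^2\le \|\bm P-\bm Q\|_{\text{TV}}$ and, from Le Cam's inequality (discarding the factor $\sqrt{1-H^2/4}\le 1$), $\|\bm P-\bm Q\|_{\text{TV}}\le \|\bm P-\bm Q\|_{\text{H}}$. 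Reading the hypotheses through $\tfrac12 H^2(\bm P_1\|\bm Q_1) = 1-\text{Aff}(\bm P_1,\bm Q_1)$, the condition ``$\text{Aff}=1+o(1/n)$'' amounts to $nH^2(\bm P_1\|\bm Q_1)\to 0$ and ``$\text{Aff}=1-\omega(1/n)$'' amounts to $nH^2(\bm P_1\|\bm Q_1)\to\infty$.

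For the first claim, $nH^2(\bm P_1\|\bm Q_1)=o(1)$ plugged into the upper bound gives $\|\bm P^{1:n}-\bm Q^{1:n}\|_{\text{H}}^2=o(1)$, whence $\|\bm P^{1:n}-\bm Q^{1:n}\|_{\text{TV}}^2\le \|\bm P^{1:n}-\bm Q^{1:n}\|_{\text{H}}^2=o(1)$. For the second claim, $nH^2(\bm P_1\|\bm Q_1)\to\infty$ forces $\exp(-nH^2(\bm P_1\|\bm Q_1)/2)\to 0$, so the lower bound yields $\|\bm P^{1:n}-\bm Q^{1:n}\|_{\text{H}}^2\to 2$; combined with $\tfrac12\|\bm P^{1:n}-\bm Q^{1:n}\|_{\text{H}}^2\le \|\bm P^{1:n}-\bm Q^{1:n}\|_{\text{TV}}\le 1$ this squeezes $\|\bm P^{1:n}-\bm Q^{1:n}\|_{\text{TV}}\to 1$, hence its square tends to $1$ as well.

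There is no substantive obstacle; the work is bookkeeping — keeping straight which of $\|\cdot\|_{\text{TV}}$, $\|\cdot\|_{\text{H}}$ dominates, whether $(1-x)^n$ is bounded above by $e^{-nx}$ (for the lower bound on $\|\cdot\|_{\text{H}}^2$) or below by $1-nx$ (for the upper bound), and pushing the $o(\cdot)$ / $\omega(\cdot)$ correctly through the constant $2$ relating $H^2(\bm P_1\|\bm Q_1)$ to $1-\text{Aff}(\bm P_1,\bm Q_1)$ and through the extra factor $n$. The one point worth care is that these rate conditions are the per-coordinate ones (on $\bm P_1,\bm Q_1$) rather than literal conditions on the joint affinity $\text{Aff}(\bm P^{1:n},\bm Q^{1:n})$ — e.g.\ a joint affinity $1-n^{-1/2}$ is of the form $1-\omega(1/n)$ yet still tends to $1$ and would give $\|\cdot\|_{\text{TV}}\to 0$ — so the clause should be parsed, via the decoupling identity, as $nH^2(\bm P_1\|\bm Q_1)\to\infty$.
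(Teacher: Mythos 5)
Your proof is correct and uses precisely the tools the paper assembles in the paragraph preceding the lemma: the identity $\frac12 H^2 = 1-\text{Aff}$, the two comparison inequalities $\frac12 H^2 \le \mathrm{TV} \le H$, and the product-measure bounds $2-2\exp(-nH^2(\bm P_1\|\bm Q_1)/2) \le \|\bm P^{1:n}-\bm Q^{1:n}\|_{\mathrm H}^2 \le nH^2(\bm P_1\|\bm Q_1)$. You are also right to flag the reading of the hypotheses: taken literally on the joint affinity the second implication would fail (your $1-n^{-1/2}$ example is decisive), so the rate conditions must be understood on the per-coordinate affinity $\text{Aff}(\bm P_1,\bm Q_1)$, equivalently as $nH^2(\bm P_1\|\bm Q_1)\to 0$ or $\to\infty$, which is indeed how the paper invokes the lemma in the proofs of Theorems~\ref{thm:impossible_detection} and \ref{thm:powerful_detection}.
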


\subsubsection{Proof for Lemma \ref{lem:maximal_coupling_unbias}}
\label{sec: proof for unbiasedness}
\begin{proof}
    Let $\mu(w) \propto \min(P_w, Q_w)$ for $w \in \cW$ and $\nu(w) \propto \max(0, P_w - Q_w)$ for $w \in \cW$. With $\mathcal{A} := \{w \in \cW: P_w \ge Q_w\}$ and $p = \sum_{w \in \cW} \min(P_w, Q_w)$, we have $\mu(w) = \min(P_w, Q_w)/p$ for $w \in \cW$, and $\nu(w) = (P_w - Q_w)/(1-p)$ for $w \in \mathcal{A}$ and $0$ otherwise. Then, for any $w \in \mathcal{A}, p \cdot \mu(w) + (1-p) \cdot \nu(w) = Q_w + P_w - Q_w = P_w$ and for any $w \not \in \mathcal{A}, p \cdot \mu(w) + (1-p) \cdot \nu(w) = P_w + 0 = P_w$. Therefore, $w \sim \bm P$.
\end{proof}

\subsection{Proof for the Detection Scheme}
\label{sec:proof_general_case}

First, we assume no modification was made to the generated text; we will discuss the substitution attack in subsequent sections. Recall that as we restrict our attention to the $m$ green tokens, we can reformulate the detection scheme into a hypothesis-testing problem:
\begin{enumerate}
    \item[$H_0$:] the text $\tilde{w}_{1:n}$ is independent of the decoder; i.e.
    $\zeta_t \overset{\iid}{\sim} U[0,1]$ for $t = 1, ..., m$.
    \item[$H_1$:] the text $\tilde{w}_{1:n}$ is generated from the decoder; $\zeta_t|(\bm P_t, \cG_t) ~\sim~ U\left[0, P_{t,\cG_t}\right]$ for $t = 1, ..., m$.
\end{enumerate}

\subsubsection{Order Statistics.} We first discuss the simplest case, where each $P_{t,\cG_t}$ is equal and bounded away from 1. Then we progressively generalize the proof. In this section, let $\zeta_{1:m} = (\zeta_1, ..., \zeta_m)$ and $\zeta_{(m)} = \max(\zeta_1, ..., \zeta_m).$ 

Formally, we assume that there exists some constant $\delta > 0$ such that $P_{t,\cG_t} = 1 - \delta$ for each $t = 1, ..., m$. Under this assumption, Neyman-Pearson Lemma implies that the likelihood ratio test is the uniformly most powerful (UMP). In particular, the likelihood ratio test is given by
$$\Lambda(\zeta_{1:n}) = \frac{\prod_{t = 1}^{m} \mathbb{1}_{\zeta_j \in \left[0, P_{t,\cG_t}\right]}/P_{t,\cG_t}}{\prod_{t = 1}^{m} \mathbb{1}_{\zeta_t \in \left[0, 1\right]}} = (1 - \delta)^{-m} \mathbb{1}_{\zeta_{(m)} \le 1 - \delta}$$
where $\mathbb{1}$ denotes the indicator function. This likelihood-ratio test provides the decision rule as follows: for some critical value $c$,
\begin{itemize}
    \item if $\Lambda(\zeta_{1:n}) \ge c$, then reject $H_0$;
    \item if $\Lambda(\zeta_{1:n}) < c$, then accept $H_0$.
\end{itemize}
This is equivalent to the following decision rule: for some critical value $c'$,
\begin{itemize}
    \item if $\zeta_{(m)} \le c'$, then reject $H_0$;
    \item if $\zeta_{(m)} > c'$, then accept $H_0$.
\end{itemize}
Thus, to construct the UMP test, we need to find a critical value $c'$ such that:
$$\bP_{H_0}(\zeta_{(m)} \le c') = \alpha$$
where $\alpha$ is the significance level. Since $\zeta_{(m)}$ is the maximum of $m$ i.i.d. uniform random variables, we have  
$$\bP_{H_0}(\zeta_{(m)} \le c') = \prod_{t=1}^m[\bP_{H_0}(\zeta_{t} \le c')]^m = (c')^m.$$
Hence, we can choose $c' = \alpha^{1/m}$ to obtain the desired significance level. The power of the test is given by
$$\bP_{H_1}(\zeta_{(m)} \le c') = \prod_{t=1}^m[\bP_{H_1}(\zeta_{t} \le c')]^m = \begin{cases}
    \left(\frac{c'}{1 - \delta}\right)^m = \frac{\alpha}{(1 - \delta)^m} \quad &\text{for } c' \le 1 - \delta\\
    1 &\text{for } 1 - \delta <  c' \le 1.
\end{cases}    
$$

For fixed $\alpha$ and $\delta$, as $m$ increases, $c' = \alpha^{1/m}$ will be greater than $1 - \delta$, which leads the power to be equal to 1. In the case where $P_{t,\cG_t} \le 1 - \delta$ for all $t$, the above argument gives a lower bound on the power of the same test. 

\subsubsection{Text Modifications.} Before we discuss more general settings for $P_{t,\cG_t}$, we first discuss the case where the user can modify the generated text only via substituting some tokens. In particular, we assume that the user can substitute any token in the generated text with any other token without the knowledge of the green lists or $\zeta_t$'s. When a token is substituted, the corresponding uniform random variable $\zeta$ becomes independent of whether the new token falls into the corresponding green list. Under this circumstance, the statistical signal of the watermark becomes sparse, leading us to consider an alternative hypothesis where a fraction $\varepsilon_m$ of the $\zeta_t$'s still present signals of the watermark. Formally, we have the following hypothesis:

\begin{enumerate}
    \item[$H_1^{(\mathrm{mix})}$:] the text $\tilde{w}_{1:n}$ is first generated from the decoder, and then modified by substitution attacks described above; that is, $\zeta_t|(\bm P_t, \cG_t) ~\sim~ (1 - \varepsilon_m)U[0,1] + \varepsilon_mU\left[0, P_{t,\cG_t}\right]$ for $t = 1, ..., m$.
\end{enumerate}

Throughout our discussion, we treat $m \varepsilon_m$ as an integer for simplicity. 

\subsubsection{Suboptimality of the Order Statistics.} We apply the same test as above, where we reject the null hypothesis if and only if $\zeta_{(m)} \le c'$. Mimicking the above analysis, we can reach the following lemma:

\begin{lemma}
    \label{lem:substitution_attack}
    In the presence of the substitution attack described above, the power of this test is given by
    $$\bP_{H_1}(\zeta_{(m)} \le c') = \begin{cases}
        \frac{\alpha}{(1 - \delta)^{m\cdot \varepsilon_m}} \quad &\text{for } c' \le 1 - \delta\\
        \alpha^{1 - \varepsilon_m} &\text{for } 1 - \delta <  c' \le 1.
    \end{cases}
    $$
\end{lemma}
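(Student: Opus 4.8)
The plan is to mimic the clean i.i.d.\ computation from the noiseless case, now accounting for the fact that only a random subset of the $\zeta_t$'s carries the watermark signal. First I would note that under $H_1^{(\mathrm{mix})}$ with all $P_{t,\cG_t} = 1-\delta$, each $\zeta_t$ is, independently, drawn from $U[0,1]$ with probability $1-\varepsilon_m$ and from $U[0,1-\delta]$ with probability $\varepsilon_m$. Since we are treating $m\varepsilon_m$ as an integer (as stated just above the lemma), I can condition on exactly $m\varepsilon_m$ of the coordinates being the ``signal'' draws from $U[0,1-\delta]$ and the remaining $m(1-\varepsilon_m)$ being plain $U[0,1]$ draws; by exchangeability this conditioning is without loss of generality for the event $\{\zeta_{(m)} \le c'\}$, which is symmetric in the coordinates.

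Next I would factor the c.d.f.\ of the maximum. Because $\zeta_{(m)} \le c'$ iff every $\zeta_t \le c'$, and the $\zeta_t$'s are independent, the probability factors as a product over the two groups:
\[
\bP_{H_1}(\zeta_{(m)} \le c') = \big[\bP(U[0,1] \le c')\big]^{m(1-\varepsilon_m)} \cdot \big[\bP(U[0,1-\delta] \le c')\big]^{m\varepsilon_m}.
\]
The first factor is $(c')^{m(1-\varepsilon_m)}$. For the second factor I split on whether $c' \le 1-\delta$: if $c' \le 1-\delta$ it equals $\big(c'/(1-\delta)\big)^{m\varepsilon_m}$, and if $1-\delta < c' \le 1$ it equals $1$. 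Now substitute $c' = \alpha^{1/m}$, which is the value chosen earlier to calibrate the Type I error at level $\alpha$ (this calibration is unchanged, since it only uses the null, where all $\zeta_t \sim U[0,1]$, giving $\bP_{H_0}(\zeta_{(m)}\le c') = (c')^m = \alpha$). In the case $c' \le 1-\delta$: $(c')^{m(1-\varepsilon_m)} \cdot (c')^{m\varepsilon_m}/(1-\delta)^{m\varepsilon_m} = (c')^m/(1-\delta)^{m\varepsilon_m} = \alpha/(1-\delta)^{m\varepsilon_m}$. In the case $1-\delta < c' \le 1$: $(c')^{m(1-\varepsilon_m)}\cdot 1 = \alpha^{1-\varepsilon_m}$. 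This yields exactly the two branches in the statement of Lemma~\ref{lem:substitution_attack}.

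The only genuinely delicate point — and the step I'd expect to need the most care — is justifying that it suffices to analyze the conditional model with exactly $m\varepsilon_m$ signal coordinates rather than the genuine mixture where the number of signal coordinates is $\mathrm{Binomial}(m,\varepsilon_m)$. One clean way is to observe that the event $\{\zeta_{(m)}\le c'\}$ depends only on the multiset of $\zeta_t$ values, so by symmetry the mixture probability is the average of the conditional-model probability over the binomial count $N$; writing $g(N) = (c')^{m-N}\cdot h(c')^{N}$ with $h(c') = \min(1, c'/(1-\delta))^{\,\mathbb{1}}$ as above, one checks the answer has the stated closed form once $N$ is pinned to its ``typical'' value $m\varepsilon_m$, which is precisely the simplification the paper has already declared (``we treat $m\varepsilon_m$ as an integer for simplicity''). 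So in the write-up I would simply invoke that convention and present the two-group product computation above; everything else is the same elementary order-statistics bookkeeping already used in the noiseless case, and the extension to $P_{t,\cG_t} \le 1-\delta$ again gives a lower bound on power by the same monotonicity argument noted earlier.
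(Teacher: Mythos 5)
Your proof is correct and takes essentially the same approach as the paper: interpret the alternative as exactly $m\varepsilon_m$ coordinates drawn from $U[0,1-\delta]$ and the rest from $U[0,1]$, factor $\bP_{H_1}(\zeta_{(m)}\le c')$ over the two groups, and substitute $c'=\alpha^{1/m}$, splitting on whether $c'\le 1-\delta$. The only difference is that you explicitly flag the conditioning-versus-genuine-mixture subtlety (in the genuine mixture the first branch would yield $\alpha\bigl[1+\varepsilon_m\delta/(1-\delta)\bigr]^m$, not $\alpha/(1-\delta)^{m\varepsilon_m}$), whereas the paper invokes the ``treat $m\varepsilon_m$ as an integer'' convention and fixes the signal indices without further remark; pointing this out is a fair and useful clarification, not a deviation in method.
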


\begin{proof}
    The proof is similar to the above analysis. When $c' \le 1 - \delta,$
    $$\bP_{H_1}(\zeta_{(m)} \le c') = \prod_{t=1}^m[\bP_{H_1}(\zeta_{t} \le c')]^m = (c')^{m(1 - \varepsilon_m)}\cdot \left(\frac{c'}{1 - \delta}\right)^{m \varepsilon_m} = \frac{\alpha}{(1 - \delta)^{m\varepsilon_m}}$$
    where the last equality follows from the fact that $c' = \alpha^{1/m}$.
    On the other hand, when $c' > 1 - \delta$, we can assume without loss of generality that only the first $m \varepsilon_m$ of the $\zeta_t$'s presents the watermark signal. As these $\zeta$'s are bounded above by $1 - \delta$, we have
    \begin{align*}
        \bP_{H_1}(\zeta_{(m)} \le c') &= \prod_{t=1}^{m \varepsilon_m}[\bP_{H_1}(\zeta_{t} \le c')]^m \cdot \prod_{t=m \varepsilon_m + 1}^m[\bP_{H_1}(\zeta_{t} \le c')]^m\\
        &= (c')^{m(1 - \varepsilon_m)} = \alpha^{1 - \varepsilon_m}.
    \end{align*}
\end{proof}

Notice that the power of the test is now upper bounded by $\alpha^{1 - \varepsilon_m}$. This means, unless $\varepsilon_m \to 1$, i.e., the fraction of modification becomes negligible as the length of the text increases, the power of the test will be suboptimal, leading to the next proposition.

\begin{proposition}
    \label{prop:order stat powerless}
    For a given significance level $\alpha < 1$, if $\varepsilon_m$ does not converge to $1$, then the power of the test based on the order statistic $\zeta_{(m)}$ is bounded away from 1.
\end{proposition}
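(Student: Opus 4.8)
The plan is to leverage Lemma~\ref{lem:substitution_attack} directly, since it already provides an exact expression for the power of the order-statistic test. The proposition claims that if $\varepsilon_m \not\to 1$, the power stays bounded away from $1$, so I would argue that in \emph{both} branches of the piecewise formula the power is controlled by $\alpha^{1-\varepsilon_m}$ up to comparison, and then show $\alpha^{1-\varepsilon_m}$ cannot approach $1$ along the relevant subsequence.

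\textbf{Step 1: Reduce to the worst case $P_{t,\cG_t}=1-\delta$.} As in the preceding discussion, making each $P_{t,\cG_t}$ larger only shrinks the set $[0,P_{t,\cG_t}]$ that the signal-carrying $\zeta_t$'s live in, hence only increases the chance that $\zeta_{(m)}\le c'$; so the power under the general alternative is upper bounded by the power in the case $P_{t,\cG_t}=1-\delta$ for all $t$ (for some fixed $\delta>0$), which is exactly the setting of Lemma~\ref{lem:substitution_attack}. Actually, since we want an \emph{upper} bound on power and a \emph{smaller} $\delta$ makes detection harder, the cleanest move is: for any fixed $\delta\in(0,1)$, if all $P_{t,\cG_t}\le 1-\delta$ then Lemma~\ref{lem:substitution_attack}'s computation gives the power is at most $\max\{\alpha/(1-\delta)^{m\varepsilon_m},\,\alpha^{1-\varepsilon_m}\}$ — but for the proposition I only need that the power does not tend to $1$, so I would instead directly use that $\alpha^{1-\varepsilon_m}$ is the power in the ``large $c'$'' regime and observe that this regime is eventually the operative one.

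\textbf{Step 2: Identify which branch is active for large $m$.} With $c'=\alpha^{1/m}\to 1$ as $m\to\infty$, eventually $c'>1-\delta$, so the second branch applies and the power equals $\alpha^{1-\varepsilon_m}$ (in the worst case; $\le$ this in general, using the monotonicity from Step 1). Now take a subsequence $m_k$ along which $\varepsilon_{m_k}\le 1-\eta$ for some fixed $\eta>0$ (such a subsequence exists precisely because $\varepsilon_m\not\to 1$). Along it, $\alpha^{1-\varepsilon_{m_k}}\le \alpha^{\eta}<1$ since $\alpha<1$. Hence the power is bounded away from $1$ along this subsequence, which is exactly the assertion ``the power is bounded away from $1$.''

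\textbf{Main obstacle.} There is no serious obstacle; the only care needed is the logical quantifier bookkeeping — ``$\varepsilon_m$ does not converge to $1$'' must be parsed as ``there exist $\eta>0$ and infinitely many $m$ with $\varepsilon_m\le 1-\eta$,'' and the conclusion ``power bounded away from $1$'' should likewise be read along that subsequence (or, if $\varepsilon_m\to\varepsilon_\infty<1$ outright, as a genuine $\limsup$ bound $\alpha^{1-\varepsilon_\infty}<1$). I would also note the minor point that Lemma~\ref{lem:substitution_attack} is stated with equality $P_{t,\cG_t}=1-\delta$, so to cover the general case I must invoke the same coupling/monotonicity remark used just before Lemma~\ref{lem:substitution_attack} (``the smaller $P_{t,\cG_t}$ is, the more watermark signal'') to get the one-sided bound; this is a one-line observation rather than a computation.
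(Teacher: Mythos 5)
Your approach is essentially the paper's: Proposition~\ref{prop:order stat powerless} is an immediate consequence of Lemma~\ref{lem:substitution_attack} (the power is bounded by $\alpha^{1-\varepsilon_m}$ in both branches, since $\alpha/(1-\delta)^{m\varepsilon_m}\le\alpha^{1-\varepsilon_m}$ precisely when $\alpha^{1/m}\le 1-\delta$), together with the observation that ``$\varepsilon_m\not\to 1$'' yields a subsequence $\varepsilon_{m_k}\le 1-\eta$ along which $\alpha^{1-\varepsilon_{m_k}}\le\alpha^\eta<1$; that is exactly the reasoning in the paragraph preceding the proposition.

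One small caution on your Step~1: the monotonicity is stated backwards — enlarging $P_{t,\cG_t}$ \emph{widens} the interval $[0,P_{t,\cG_t}]$, weakening the signal and \emph{decreasing} the chance that $\zeta_{(m)}\le c'$, not increasing it. But this digression is unnecessary here anyway: the proposition lives in the same setting as Lemma~\ref{lem:substitution_attack} ($P_{t,\cG_t}=1-\delta$ for all $t$, fixed $\delta$), so no reduction is needed, and in the $c'>1-\delta$ branch the power $\alpha^{1-\varepsilon_m}$ does not depend on $\delta$ at all. With Step~1 dropped, your Step~2 and the subsequence argument constitute a complete and correct proof.
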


\subsubsection{Undetectability.} We now lift the assumption that all $P_{t,\cG_t}$'s are equal in our subsequent discussion. 

\begin{proof}[Proof of Theorem \ref{thm:impossible_detection}]
    As observed earlier, we can assume without loss of generality that $P_{t,\cG_t} = P_{\cG} = m^{-r}$ for all $t$. By Lemma \ref{lem:hellinger}, it suffices to show that the Hellinger affinity between $U[0,1]$ and $(1 - \varepsilon_m)U[0,1] + \varepsilon_mU[0, P_{\cG}]$ behaves asymptotically as $1 + o(1/m)$. Let $f(y)$ and $g(y)$ be the densities of $U[0,1]$ and $U[0, P_{\cG}]$ respectively. As $f(y) = 1$ and $g(y) = P_{\cG}^{-1}\mathbb{1}_{x \in [0, P_{\cG}]}$ on $[0,1]$, the Hellinger affinity is given by
    \begin{align*}
        \mathbb{E}_0\sqrt{f(y)((1-\varepsilon_m)f(y) + \varepsilon_m g(y))} &= \mathbb{E}_0\sqrt{(1 - \varepsilon_m) + \varepsilon_mg(y)}\\
        &=\mathbb{E}_0\sqrt{1 + \varepsilon_m(g(y) - 1)}
    \end{align*}
    where $\mathbb{E}_0$ denotes the expectation under the null hypothesis. As $0 < r, p < 1$ and $2p - r > 1$, we must have $r < p$. Hence, $\varepsilon_m g(y) \le \varepsilon_mp(G)^{-1} = m^{-p}m^r \to 0$. This implies that for large enough $m$, $x:= \varepsilon_m(g(y) - 1)$ satisfies
    $$1 + \frac{x}{2} - \frac{x^2}{8} \le \sqrt{1 + x} \le 1 + \frac{x}{2}.$$
    Hence, as $\mathbb{E}_0g(y) = 1$, we have
    $$1 - \mathbb{E}_0\frac{(\varepsilon_m(g(y) - 1))^2}{8} \le \sqrt{1 + \varepsilon_m(g(y) - 1)} \le 1.$$
    It remains to show that $\mathbb{E}_0(\varepsilon_m(g(y) - 1))^2 = o(1/m)$: again, as $\mathbb{E}_0g(y) = 1$,
    \begin{align*}
        \mathbb{E}_0(\varepsilon_m(g(y) - 1))^2 &= \varepsilon_m^2\mathbb{E}_0((g(y) - 1))^2\\
        &= \varepsilon_m^2\left(\mathbb{E}_0(g(y)^2 - 1)\right)\\
        &= \varepsilon_m^2\left(\mathbb{E}_0p(G)^{-2}\mathbb{1}_{x \in [0, P_{\cG}]} - 1\right)\\
        &= \varepsilon_m^2\left(P_{\cG}^{-1} - 1\right) \\
        &= m^{-2p}(m^r - 1) = o(1/m)
    \end{align*}
    where the last equality follows from the assumption that $2p - r > 1$. 
\end{proof}

\subsubsection{Detection Boundary.}

\begin{proof}[Proof of Theorem \ref{thm:powerful_detection}]
    We first prove the first part of the theorem. In this case, we can restrict our attention to the case where each $P_{t,\cG_t} = P_{\cG} = 1 - m^{-q}$ without loss of generality, as smaller $P_{t,\cG_t}$ only present more watermark signals. By Lemma \ref{lem:hellinger}, it suffices to show that the Hellinger affinity between $U[0,1]$ and $(1 - \varepsilon_m)U[0,1] + \varepsilon_mU[0, P_{\cG}]$ behaves asymptotically as $1 - \omega(1/m)$. Let $f(y)$ and $g(y)$ be the densities of $U[0,1]$ and $U[0, P_{\cG}]$ respectively. As $\varepsilon_m (g(y) - 1) = m^{-p}\cdot m^{-q}/(1 - m^{-q}) \to 0$, similar to the last proof, it is enough to investigate the behavior of $\mathbb{E}_0(\varepsilon_m(g(y) - 1))^2$:
    \begin{equation}
        \label{eq: separate asymptotically}
        \begin{split}
            \mathbb{E}_0(\varepsilon_m(g(y) - 1))^2 &= \varepsilon_m^2\left(\mathbb{E}_0(g(y)^2 - 1)\right)\\
            &= \varepsilon_m^2\left(P_{\cG}^{-1} - 1\right) \\
            &= m^{-2p}\cdot \frac{m^{-q}}{1 - m^{-q}} = \omega(1/m)
        \end{split}
    \end{equation}
    because $q + 2p < 1$. Hence, $H_0$ and $H_1^{(\mathrm{mix})}$ separate asymptotically. To show that the alternative hypothesis can be reliably detected using the LRT, it is sufficient to show that the sum of type I and type II error probabilities tends to 0 as $m \to \infty$. Since the proofs are similar, we present the argument for the type I error under the null hypothesis. Let $\ell = \log(1 + \varepsilon_m(g(y) - 1))$ and $L_m = m \ell$. It suffices to show 
    $$\mathbb{E}_0 L_m \to - \infty \quad \text{and} \quad \frac{\Var_0(L_m)}{[\mathbb{E}_0 L_m]^2} \to 0$$
    Let $x := \varepsilon_m(g(y) - 1)$ and use the fact that $\log(1 + x) \le x - x^2/4$ for $x \in (-1, 1]$. For large enough $m$, we have
    $$\mathbb{E}_0 \ell \le \varepsilon_m\mathbb{E}_0(g(y) - 1) - \frac{\varepsilon_m^2}{4}\mathbb{E}_0(g(y) - 1)^2 = -\frac{\varepsilon_m^2}{4}\mathbb{E}_0(g(y) - 1)^2 = -\omega(1/m)$$
    where the last equality follows from Equation \eqref{eq: separate asymptotically}. This gives us $\mathbb{E}_0 L_m \to -\infty$. On the other hand, using the fact that $\log^2(1 + x) \le 2x^2$ for $x \in (-0.5, 1]$, we have for large enough $m$,
    $$\Var_0 \ell \le \mathbb{E}_0 \ell^2\le 2\varepsilon_m^2\mathbb{E}_0(g(y) - 1)^2 \le 8 |\mathbb{E}_0 \ell|$$
    This gives us $\Var_0 L_m/[\mathbb{E}_0 L_m]^2 \to 0$. 

    We now prove the second part of the theorem. In this case, we can restrict our attention to the case where each $P_{t,\cG_t} = P_{\cG} = 1 - m^{-q}$. Following the same steps, we have 
    $$\mathbb{E}_0(\varepsilon_m(g(y) - 1))^2 = \varepsilon_m^2(P_{\cG}^{-1} - 1) = m^{-2p}\frac{m^{-q}}{1 - m^{-q}} = o(1/m)$$
    as $q + 2p > 1$. Hence, $H_0$ and $H_1^{(\mathrm{mix})}$ merge asymptotically.
\end{proof}

\subsubsection{Sum Test.}
\begin{proof}[Proof of Proposition \ref{prop:powerful_detection_sum}]
    For both parts of the proposition, we can assume without loss of generality that each $P_{t,\cG_t} = P_{\cG} = 1 - m^{-q}$. 

    As for the first part, the proofs for showing that both type I and type II error probabilities tend to 0 are very similar, so we present the argument for the type II error under the alternative hypothesis as it involves slightly more technicality. Let $s = \sum_{i=1}^m\zeta_i$ and $c' = \mathbb{E}_0s - m^{1 - (p + q)}$. Then 
    $$\mathbb{E}_1 s = m\mathbb{E}_1\zeta_1 = m\left(\frac{1 - \varepsilon_m}{2} + \frac{\varepsilon_m\cdot P_{\cG}}{2}\right) = \frac{m}{2} -\frac{m^{1 -(p+q)}}{2}$$
    and 
    $$\Var_1 s = m\Var_1\zeta_1 = O(m)$$
    By Chebyshev's inequality, we have
    $$\bP_1(s \le c') \le \bP_1(|s - \mathbb{E}_1s| \le c' - \mathbb{E}_1s) = \bP_1(|s - \mathbb{E}_1s| \le m^{1 - (p + q)}/4) \le \frac{4\Var_1s}{m^{2 - 2(p + q)}} = o(1)$$
    where the last equality follows from the fact that $q + p < 1/2$ and $\Var_1s = O(m)$.

    Now, for the second part of the proposition, it suffices to observe that $\mathbb{P}_1(s \le \mathbb{E}_0s)$ is bounded away from 1. As $p + q > 1/2, \mathbb{E}_0s - \mathbb{E}_1s = m^{1 - (p + q)}/2 <  = m^{1/2}$ and $\Var_1s = O(m)$, which implies that as $m \to \infty, \mathbb{E}_0s$ is within one standard deviation of the center of the distribution of $s$ under the alternative hypothesis. Hence, the power of the test must be bounded away from 1.
\end{proof}

\subsubsection{Higher Criticism.}

\begin{proof}[Proof of Theorem \ref{thm:powerful_detection_HC}]
    Under the null hypothesis, $HC_m^*$ is equal to the extreme value of a normalized uniform empirical process in distribution, which satisfies
    $$\frac{HC_m^*}{\sqrt{2\log\log m}} \to 1, \quad \text{in probability},$$
    so $\mathbb{P}_0(HC_m^* \ge \sqrt{(2 + \delta)\log\log m}) \to 0$ as $m \to \infty$; see Theorem 1.1. in \cite{donoho2004higher}.
    Before we show that $\mathbb{P}_1(HC_m^* \le \sqrt{(2 + \delta)\log\log m}) \to 0$, we present an equivalent form of the higher criticism statistic. Consider the empirical cumulative distribution function
    $$F_m(x) = \frac{1}{m}\sum_{i=1}^{m}\mathbb{1}_{\zeta_i \le x},$$
    and its standardized version
    $$W_m(x) = \sqrt{m} \cdot \frac{F_m(x) - x}{\sqrt{x(1 - x)}}.$$
    Then, we observe that the higher criticism statistic satisfies
    $$HC_m^* = \max_{1 \le t \le m} \sqrt{m} \cdot\left(\frac{t/m - \zeta_{(t)}}{\zeta_{(t)}(1-\zeta_{(t)})}\right) = \max_{1 \le t \le m} \sqrt{m} \cdot\left(\frac{F_m(\zeta_{(t)}) - \zeta_{(t)}}{\zeta_{(t)}(1-\zeta_{(t)})}\right) = \sup_{x \in [0,1]}W_m(x).$$
    If $\zeta_t$'s are i.i.d. uniform random variables, then $\mathbb{E}F_m(x) = x$ for all $x \in [0,1]$; while if $\zeta_t$'s are independent uniform random variables in $[0, P_{t,\cG_t}]$, where $P_{t,\cG_t} \le 1 - m^{-q}$, then $\mathbb{E}F_m(1 - m^{-q}) = 1$. Hence, setting $w =1 - m^{-q}$,
    \begin{align*}
        \mathbb{E}_1W_m(x) &= \sqrt{m}\cdot\left(\frac{\mathbb{E}_1F_m(x) - x}{\sqrt{x(1 - x)}}\right)\\
        &= \sqrt{m}\cdot\left(\frac{\frac{1}{m}\sum_{i=1}^{m}\mathbb{E}_1\mathbb{1}_{\zeta_i \le x} - x}{\sqrt{x(1 - x)}}\right)\\
        &= \sqrt{m}\cdot\left(\frac{\varepsilon_m(1 - x)}{\sqrt{x(1 - x)}}\right)\\
        &= \sqrt{m}\cdot\left(\frac{m^{-p}m^{-q}}{\sqrt{(1 - m^{-q})m^{-q}}}\right)\\
        &= \sqrt{\left(\frac{m^{1-2p-q}}{1 - m^{-q}}\right)} = \Omega(n^{\gamma})
    \end{align*}
    for some $0 < \gamma \le (1-(2p+q))/2$ and 
    $$\frac{\mathbb{E}_1W_m(x)}{\sqrt{(2 + \delta)\log\log m}} \to \infty.$$
    Meanwhile, 
    $$\Var_1W_m(x) = \frac{F_m(x)(1 - F_m(x))}{x(1-x)} = O(1).$$
    Hence, as $HC_m^* \ge W_m(x)$ for all $x \in [0,1]$, by Chebyshev's inequality, for large enough constant $C, C'$,
    \begin{align*}
        \mathbb{P}_1(HC_m^* \le \sqrt{(2 + \delta)\log\log m}) &\le \mathbb{P}_1(W_m(x) \le \sqrt{(2 + \delta)\log\log m})\\
        &\le C\frac{\Var_1W_m(x)}{[\mathbb{E}_1W_m(x)]^2} \le C'n^{-2\gamma} \to 0
    \end{align*}
    which completes the proof.
\end{proof}

\subsection{Extension to the Soft Green/Red List Watermark}
When the generated text is expected to be very long, we can adapt our proposed scheme to the soft watermark; this may further mitigate the possible distortion during the text generation process when pseudorandomness based on the previous tokens is used. To do this, we simply replace $Q_{w}$ with the method described in Equation \eqref{eq:prompt green list}. In this case, we will not zero out the probability of sampling a token from the red list $\cR$ in $Q_{w}$. Thus, even in the unfortunate situation where the last $k$ tokens result in a very low pseudorandom number, there is still a slight possibility of choosing a word from the ``red list.'' However, this flexibility can diminish the method's ability to detect the watermark: firstly, we can rewrite Lemma \ref{lem:conditioned_on_green} as follows:
\begin{lemma}
    \label{lem:conditioned_on_green_soft}
    Let $\bm P_t$ be the original token distribution at step $t$ and $w$ be the next token sampled from the above decoding scheme. Then the conditional distribution of $\zeta_t$, given $w \in \cG_t$, is uniformly distributed over 
    $$
    \left[0, \frac{1 + (\mathrm{e}^{\delta} - 1)P_{t, G_t}}{\mathrm{e}^{\delta}} = P_{t, G_t} + \frac{1 - P_{t, G_t}}{\mathrm{e}^{\delta}}\right].
    $$
\end{lemma}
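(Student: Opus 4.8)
The plan is to reuse the two–step argument behind Lemma~\ref{lem:conditioned_on_green}, now with the soft alternative distribution $\bm Q_t$ from Equation~\eqref{eq:prompt green list} in place of the hard one, and to recompute the total overlap mass $p=\sum_{w}\min(P_{t,w},Q_{t,w})$ that is fed into Algorithm~\ref{alg: token_sampling_coupling}. For the computation it is convenient to use the equivalent presentation of the soft scheme that keeps each green weight at $P_{t,w}$ and scales each red weight by $\mathrm{e}^{-\delta}$. The first step is to show that a green token can be emitted only on the ``overlap'' branch $\{\zeta_t\le p\}$: since $\delta>0$ and $P_{t,\cG_t}\le 1$, for every $w\in\cG_t$ one has $Q_{t,w}\ge P_{t,w}$ (equivalently $(\mathrm{e}^{\delta}-1)(1-P_{t,\cG_t})\ge 0$), so $\max(0,P_{t,w}-Q_{t,w})=0$ and the excess distribution $\propto\max(0,\bm P_t-\bm Q_t)$ is supported entirely on $\cW\setminus\cG_t$. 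Hence $\{w_t\in\cG_t\}=\{\zeta_t\le p\}\cap\{\text{the overlap draw lies in }\cG_t\}$, and, exactly as in Lemma~\ref{lem:conditioned_on_green}, the overlap draw uses auxiliary randomness independent of $\zeta_t$.

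The second step is the conditioning computation. Conditioned on $\{\zeta_t\le p\}$, the variable $\zeta_t$ is $U[0,p]$ and is independent of the indicator that the overlap draw is green, an event of probability $P_{t,\cG_t}/p$. Therefore, for $x\le p$, $\bP(\zeta_t\le x,\,w_t\in\cG_t)=x\cdot P_{t,\cG_t}/p$, while for $x>p$ this probability stays at $\bP(\zeta_t\le p,\,w_t\in\cG_t)=P_{t,\cG_t}$ because the excess branch contributes no green mass. Dividing by $\bP(w_t\in\cG_t)=P_{t,\cG_t}$ (unbiasedness, Lemma~\ref{lem:maximal_coupling_unbias}) yields $\bP(\zeta_t\le x\mid w_t\in\cG_t)=x/p$ for $x\le p$ and $1$ for $x>p$, i.e. $\zeta_t\mid\{w_t\in\cG_t\}\sim U[0,p]$.

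It remains to evaluate $p$. On $\cG_t$, $\min(P_{t,w},Q_{t,w})=P_{t,w}$ by the inequality above, contributing $P_{t,\cG_t}$; on the red tokens the soft reweighting gives $\min(P_{t,w},Q_{t,w})=\mathrm{e}^{-\delta}P_{t,w}$, contributing $\mathrm{e}^{-\delta}(1-P_{t,\cG_t})$. Summing, $p=P_{t,\cG_t}+\mathrm{e}^{-\delta}(1-P_{t,\cG_t})=\big(1+(\mathrm{e}^{\delta}-1)P_{t,\cG_t}\big)/\mathrm{e}^{\delta}$, which is the claimed endpoint; it lies in $(0,1]$ and equals $1$ exactly when $P_{t,\cG_t}=1$. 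As in Remark~\ref{rem: red list}, the same reasoning yields the conditional law of $\zeta_t$ given that $w_t$ is red.

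The genuinely delicate point is the normalization of $\bm Q_t$ inside Algorithm~\ref{alg: token_sampling_coupling}: one must be sure that the red-token overlap mass entering $p$ is $\mathrm{e}^{-\delta}(1-P_{t,\cG_t})$ — this is precisely what changes relative to the hard scheme, where that mass was $0$ — rather than another normalization of the same reweighted distribution. Everything else is routine: verifying $Q_{t,w}\ge P_{t,w}$ on $\cG_t$, the independence of the overlap draw from $\zeta_t$, and the degenerate cases $P_{t,\cG_t}\in\{0,1\}$, where the decoder reduces to sampling from $\bm P_t$ and the claim is immediate.
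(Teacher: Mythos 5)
The structural skeleton of your argument is right: since $\delta>0$ forces $Q_{t,w}\ge P_{t,w}$ on $\cG_t$, the excess distribution $\propto\max(0,\bm P_t-\bm Q_t)$ is supported on the red list, so $\{w_t\in\cG_t\}$ can only occur on the overlap branch $\{\zeta_t\le p\}$; and the standard independence/conditioning step then gives $\zeta_t\mid\{w_t\in\cG_t\}\sim U[0,p]$. The whole content of the lemma is the value of $p$, and that is exactly where your proof has a gap.

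You flag ``the normalization of $\bm Q_t$'' as the delicate point and then resolve it by fiat, not by computation. Your ``equivalent presentation'' $\tilde Q_{t,w}=P_{t,w}$ on $\cG_t$, $\tilde Q_{t,w}=\mathrm{e}^{-\delta}P_{t,w}$ on $\cR_t$, is a constant rescaling $\tilde{\bm Q}_t=(C/\mathrm{e}^{\delta})\,\bm Q_t$ of the distribution in Equation~\eqref{eq:prompt green list}, where $C=1+(\mathrm{e}^{\delta}-1)P_{t,\cG_t}$. But the maximal-coupling overlap mass $p=\sum_w\min(P_{t,w},Q_{t,w})$ is \emph{not} invariant under scaling $\bm Q_t$; ``same distribution up to normalization'' is not a license to swap $\bm Q_t$ for $\tilde{\bm Q}_t$ inside Algorithm~\ref{alg: token_sampling_coupling}. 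If you actually feed the $\bm Q_t$ from Equation~\eqref{eq:prompt green list} into Algorithm~\ref{alg: token_sampling_coupling}, the green terms contribute $P_{t,\cG_t}$ (since $\mathrm{e}^{\delta}/C\ge1$), the red terms contribute $(1-P_{t,\cG_t})/C$ (since $1/C\le1$), and hence
$$
p \;=\; P_{t,\cG_t} + \frac{1-P_{t,\cG_t}}{C}
 \;=\; \frac{1+(\mathrm{e}^{\delta}-1)P_{t,\cG_t}^2}{1+(\mathrm{e}^{\delta}-1)P_{t,\cG_t}},
$$
which is strictly larger than the endpoint $C/\mathrm{e}^{\delta}=P_{t,\cG_t}+(1-P_{t,\cG_t})/\mathrm{e}^{\delta}$ claimed in the lemma whenever $0<P_{t,\cG_t}<1$ (e.g.\ $\delta=1$, $P_{t,\cG_t}=1/2$ gives $\approx 0.77$ versus $\approx 0.68$). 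So your derived $p$ reproduces the lemma's formula, but only because you silently substituted the rescaled $\tilde{\bm Q}_t$ for $\bm Q_t$; as a derivation from Equation~\eqref{eq:prompt green list} plus Algorithm~\ref{alg: token_sampling_coupling}, the step is unjustified. (It is worth noting that running Algorithm~\ref{alg: token_sampling_coupling} with the subprobability vector $\tilde{\bm Q}_t$ is still a well-defined, unbiased decoder, and it actually yields the smaller — hence more informative — $p$ that appears in the lemma; the discrepancy is between that decoder and the one the surrounding text says is being used. You should either rederive $p$ honestly from Equation~\eqref{eq:prompt green list}, or explicitly change the decoder to use $\tilde{\bm Q}_t$, but you cannot have both and call them ``equivalent.'')
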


This revised statement suggests that when a green token is chosen, the watermark signal can be much weaker than in the original scheme. Secondly, the symmetry observed in Remark \ref{rem: red list} should be revised accordingly. If the next token $w$ is red, then the conditional distribution of $\zeta_t$ can be either a standard uniform random variable (if the first sample $w$ is not rejected) or a uniform random variable in the interval $\left[P_{t,\cG_t} + \frac{1 - P_{t,\cG_t}}{\mathrm{e}^{\delta}}, 1\right]$ (if the first sample $w$ is rejected). Hence, the proportion of signals coming from the red tokens will be much smaller than in the original scheme. Nonetheless, it is evident that analogous asymptotic guarantees to Proposition \ref{prop:powerful_detection_sum} and Theorem \ref{thm:powerful_detection_HC} can be established with appropriate assumptions.

\subsection{Speculative Decoding}
\label{sec: speculative details}

Speculative decoding \citep{leviathan2023fast, chen2023accelerating} is an algorithm originally designed to speed up sampling text from a large target language model by using a smaller, faster model, and has been widely adopted in production. Essentially, for a lookahead parameter $L$, the small draft model generates $L$ tokens ahead at each step, which the larger target model then evaluates in parallel, either accepting the proposed sequence up to the first rejected token or falling back to standard autoregressive generation if the first token is rejected. This process is formalized in Algorithm \ref{alg: speculative sampling}.

\begin{algorithm}
\caption{Speculative Sampling with Draft Model}
\label{alg: speculative sampling}
\begin{algorithmic}
\STATE \textbf{Input:} Target model $\cM_P$, draft model $\cM_Q$, random variables $\zeta_1,...,\zeta_L \in [0,1]$, prompt $x$
\STATE \textbf{Output:} Accepted tokens sequence $y$
\FOR{$t=1$ to $L$}
\STATE Compute $\bm Q_t = \cM_P(\cdot|x, w_1, ..., w_t-1)$ and sample $w_t$ from $\bm Q_t$
\ENDFOR
\STATE Compute $\bm P_t = \cM_P(\cdot|x, w_1, ..., w_t)$ for $t=1, ..., L$ in parallel
\FOR{$t=1$ to $L$}
\IF{$\zeta_t \cdot Q_{t, w} > P_{t, w}$}
\STATE Sample $w'_t$ from the normalized excess distribution $\varpropto \max(0, \bm P_t - \bm Q_t)$
\STATE \textbf{return} $(w_{1:t-1}, w'_t)$
\ENDIF
\ENDFOR
\STATE \textbf{return} $(w_{1:L})$
\end{algorithmic}
\end{algorithm}

\subsubsection{Connections to Maximal Coupling} When the lookahead parameter $L$ equals 1, the algorithm simplifies to a basic rejection sampling scheme, as shown in Algorithm \ref{alg: rejection sampling}. Now we prove that this process is equivalent to the maximal coupling defined in Algorithm \ref{alg: token_sampling_coupling} if $\zeta$ is a standard uniform random variable. Let $\texttt{Accept}$ be the event where we keep the initially sampled token. If we resample $w$ from the normalized excess distribution $\varpropto \max(0, \bm P_t - \bm Q_t)$, the process is equivalent to that of the maximal coupling in Algorithm \ref{alg: token_sampling_coupling}. Hence, we first see that $w | \texttt{Accept}$ follows the normalized overlap distribution $\sim \min(\bm P, \bm Q)$: with a slight abuse of notation, we use $\bP[w]$ to denote the probability of sampling the token $w \in \cW$ by following Algorithm \ref{alg: token_sampling_coupling}; then 
$$
\bP[w | \texttt{Accept}] = Q_w \cdot \bP[\zeta \le \min(1, P_w/Q_w)] = Q_w \cdot \min(1, P_w/Q_w) =  \min(P_w, Q_w)
$$
Now, it remains to show that $\bP[\texttt{Accept}] = \sum_{w \in \cW}\min(P_w, Q_w)$:
$$
    \bP[\texttt{Accept}] = \bP_{w \sim \bm Q}[\zeta \cdot Q_w > P_w]= \sum_{w \in \cW} Q_w \cdot \min(\frac{P_w}{Q_w}, 1)= \sum_{w \in \cW} \min(P_w, Q_w) = p
$$
        
\begin{algorithm}
    \caption{One Step Speculative Sampling for Next Token Generation}
    \label{alg: rejection sampling}
    \begin{algorithmic}
    \STATE \textbf{Input:} Token distributions $\bm P, \bm Q$, random variable $\zeta \in [0,1]$
    \STATE \textbf{Output:} Sampled token $w$
    \STATE Independently sample $w \sim \bm Q$
    \IF{$\zeta \cdot Q_w > P_w$}
        \STATE Sample $w'$ from the normalized excess distribution $\varpropto \max(0, \bm P - \bm Q)$
        \STATE $w \leftarrow w'$
    \ENDIF
    \STATE \textbf{return} $w$
    \end{algorithmic}
\end{algorithm}

\subsubsection{Speculative Decoding as Post-Processing}

To investigate the performance of the watermarking schemes under targeted modifications, we consider the following speculative decoding setup. We view the NTP generated by a watermarked decoder as a probability vector $\bm Q$ conditioned on the previous context. This context influences both the original language model's NTP generation and the pseudorandom variable $\tilde \zeta$ that determines the watermark signal. Concretely, let $\tilde{\bm P}$ denote the NTP generated by the original draft language model. For the Gumbel-max watermarking scheme, $\bm Q$ is given by
$$Q_w = \mathbb{1}_{\{ w = \argmax_{w' \in \cW} \log U_{w'}/\tilde P_w'\}}$$
where $U_{w'}$ is the pseudorandom variables determined by the prevoius tokens. For our proposed watermarking scheme,  $\bm Q$ is given by
$$Q_w = \mathbb{1}_{\{ \tilde \zeta \le \tilde P_{\cG_t} \}}\frac{\tilde P_w}{\tilde P_{\cG_t}} + \mathbb{1}_{\{\tilde \zeta > \tilde P_{\cG_t} \}}\frac{\tilde P_w}{1 - \tilde P_{\cG_t}}$$
where $\tilde P_{\cG_t} = \sum_{w' \in \cG_t} \tilde P_{w'}$, $\cG_t$ is the green list determined by the previous tokens and $\tilde \zeta$ is the pseudorandom variable used for watermarking. 

We assume the target model produces an NTP $\bm P$ that, given the previous tokens, is independent of the pseudorandom variables used for watermarking the draft model. This means the target model makes judgments based solely on $\bm P$, without knowledge of the watermark signal. Simply applying Algorithm \ref{alg: rejection sampling} to $\bm P, \bm Q$ and an independent standard uniform variable $\zeta$ would erase the watermark signal, as the marginal distribution of the sampled token $w$ would match the original NTP $\bm P$. Therefore, we model the target model as a ``lazy'' editor by modifying the rejection sampling condition to $0.5 \cdot \zeta \cdot Q_w > P_w$. This means the target model only accepts the draft model's suggestion when it has reasonable confidence in generating the same token. For instance, if $Q_w = 1$, the target model accepts the suggestion only if it has at least 50\% confidence in generating that token. This assumption simulates a human editor who prefers making minimal modifications to the draft model's suggestions.

\subsection{More Experimental Results}

\subsubsection{Additional Simulation Studies}

Complementary to Figure \ref{fig:small_m}, Supplementary Figure \ref{fig:small_m_add} also shows that the sum test tends to outperform the higher criticism when m is relatively small. 
\begin{figure}[hbt!]
    \captionsetup{name=Supplementary Table}
    \centering
    \includegraphics[width=0.99\textwidth]{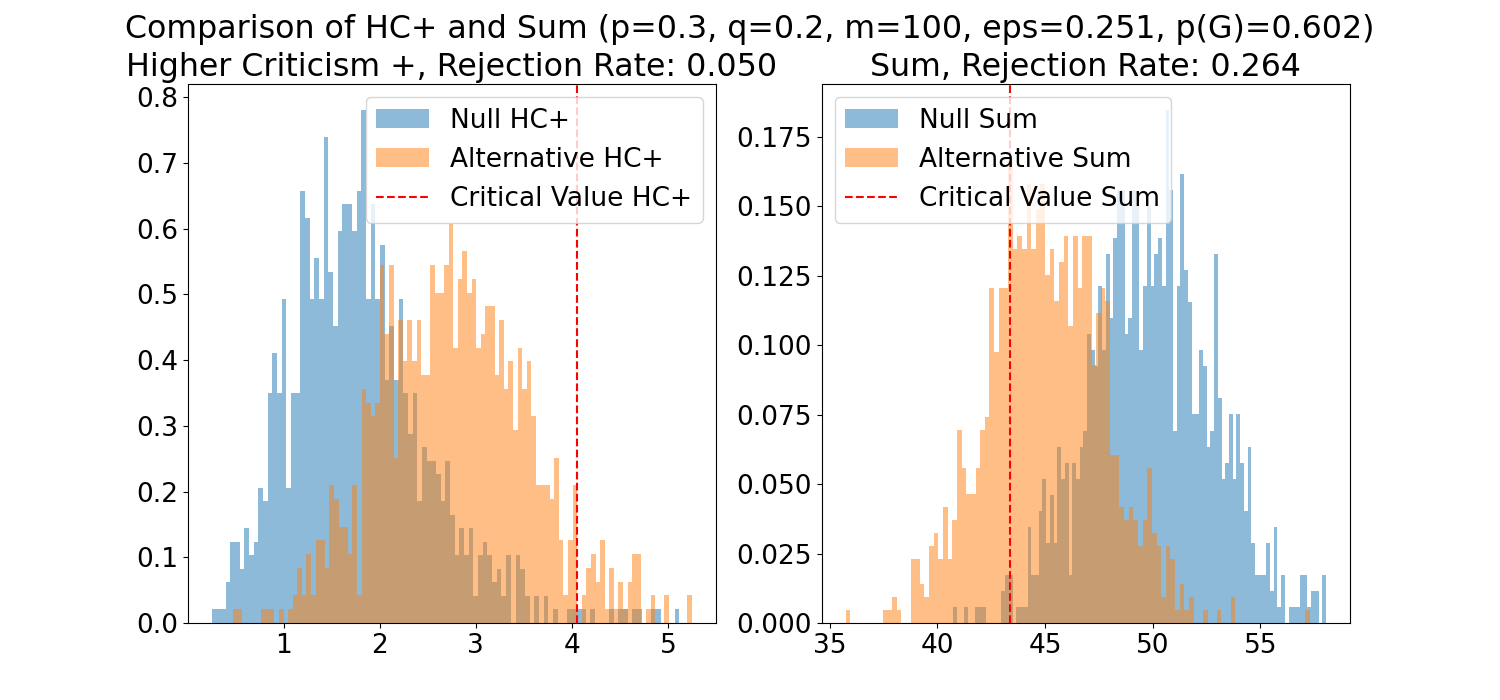}
    \caption{The histograms of the test statistics under the null and alternative hypotheses for the sum test and the higher criticism when $m = 100$. The test statistics are computed from the simulation under both the null and alternative hypotheses 2000 times. The critical value is determined by the quantile of the simulated histogram of the test statistic under the null hypothesis. The rejection rate, or the power, of each test, is computed by computing, under the alternative hypothesis, the proportion of the test statistics that are on the other side of the critical value. See also Figure \ref{fig:small_m}.}
    \label{fig:small_m_add}
\end{figure}

\subsubsection{Further Implementation Details on Language Model Experiments}
\label{sec: further exp details}
In our experiments, we prompt two family of instruction fine-tuned models with their respective default instruction prompting templates: Meta's \texttt{LLaMA} model family \citep{dubey2024llama} and Microsoft's \texttt{Phi-3} family \citep{abdin2024phi}. For each experiment, we embed watermarks to \texttt{Llama-3.2-1B-Instruct} (1B parameters) and \texttt{Phi-3-mini-4k-instruct} (3.8B parameters). For both models, we apply Flash Attention to optimize for faster inference~\citep{saha2024complexity}. For the two datasets, we use the same 200 samples from both datasets as used in \citep{tu2023waterbench}. These sub-samples of the two datasets provide short input instructions but should yield relatively long answers: for ELI5, the average length of the input question is 41.04 words while the average length of the reference answer is 236.6 words; similarly, for FinQA, the average length of the input question is 13.67 while the average length of the reference answer is 251.13 words. 
Our prompts follow the chat template of each respective model. 

For the speculative decoding setup, we choose \texttt{Llama-3.2-3B-Instruct} (3B parameters) as the target model for the \texttt{LLaMA} family, and \texttt{Phi-3-medium-4k-instruct} (14 B parameters) for the \texttt{Phi-3} family.

For our sum test statistic, due to precision errors and computational complexity in calculating the exact Irwin-Hall distribution, when computing the $p$-values, we use normal approximation when $n \ge 15$. Unless otherwise specified, we flag a text as watermarked if the $p$-value is lower than $\alpha = 0.01$, ensuring a false positive rate to be less than $0.01$. For higher criticism, instead of computing the $p$-values, we precompute the quantile of the simulated histogram of the test statistic under the null hypothesis to determine the empirical critical value at the significance level $\alpha = 0.01$. 

\subsubsection{Ablation Study on the Parameters}
\label{sec: ablation study}

\begin{table*}[t]
\captionsetup{font={stretch=1}, name=Supplementary Table}
\renewcommand{\arraystretch}{0.8}
\caption{Comparison of the S-BERT similarity scores and the true positive rates (TPR) among four watermarking schemes: (1) the Gumbel-max watermark \citep{aaronson}, (2) the green/red list watermark \citep{kirchenbauer2023watermark}, (3) the DiPmark method \cite{wu2023dipmark}, and (4) our proposed one.}
\label{tab: S-BERT and TPR, add}
\begin{small}
\centering
\begin{tabular}{c c c l l | r | r | r | r}
\toprule
Model & Data & $k$ & Metric & & Gumbel-max & Green/red list & DiPmark & Ours \\
\midrule
\multirow{20}{*}{\rotatebox[origin=c]{90}{Llama-3.2-1B-Instruct}} & \multirow{10}{*}{\rotatebox[origin=c]{90}{FinQA}} & \multirow{5}{*}{2} & \multirow{2}{*}{S-BERT} & mean & 0.7945 & 0.8038 & 0.8098 & 0.8136 \\
 &  &  &  & median & 0.8175 & 0.8292 & 0.8333 & 0.8347\\
 &  &  & TPR &  & 1.0000 & 0.7550 & 0.9200 & \textbf{0.9500}\\
 &  &  & TPR aug. &  & 0.9950 & 0.5150 & 0.7350 & \textbf{0.8550}\\
 &  &  & TPR para. &  & 0.9800 & 0.4100 & 0.5400 & \textbf{0.6800}\\
\cmidrule{3-9}
 &  & \multirow{5}{*}{4} & \multirow{2}{*}{S-BERT} & mean & 0.8020 & 0.8270 & 0.8086 & 0.8117 \\
 &  &  &  & median & 0.8344 & 0.8458 & 0.8328 & 0.8426\\
 &  &  & TPR &  & 0.9850 & 0.8300 & 0.9700 & \textbf{0.9750}\\
 &  &  & TPR aug. &  & 0.9850 & 0.4500 & 0.7550 & \textbf{0.8400}\\
 &  &  & TPR para. &  & 0.9500 & 0.2650 & 0.4300 & \textbf{0.6450}\\
\cmidrule{2-9}
 & \multirow{10}{*}{\rotatebox[origin=c]{90}{ELI5}} & \multirow{5}{*}{2} & \multirow{2}{*}{S-BERT} & mean & 0.7171 & 0.7195 & 0.7250 & 0.7351 \\
 &  &  &  & median & 0.7342 & 0.7315 & 0.7393 & 0.7429\\
 &  &  & TPR &  & 1.0000 & 0.9100 & 0.9950 & \textbf{1.0000}\\
 &  &  & TPR aug. &  & 1.0000 & 0.6650 & 0.9100 & \textbf{1.0000}\\
 &  &  & TPR para. &  & 1.0000 & 0.4500 & 0.8600 & \textbf{0.9200}\\
\cmidrule{3-9}
 &  & \multirow{5}{*}{4} & \multirow{2}{*}{S-BERT} & mean & 0.7302 & 0.7226 & 0.7250 & 0.7315 \\
 &  &  &  & median & 0.7447 & 0.7433 & 0.7346 & 0.7433\\
 &  &  & TPR &  & 1.0000 & 0.9700 & \textbf{1.0000} & \textbf{1.0000}\\
 &  &  & TPR aug. &  & 1.0000 & 0.6450 & 0.9350 & \textbf{0.9900}\\
 &  &  & TPR para. &  & 1.0000 & 0.4000 & 0.7800 & \textbf{0.9200}\\
\bottomrule
\end{tabular}
\end{small}
\end{table*}

Supplementary Table \ref{tab: S-BERT and TPR, add} present the analogous results to Table \ref{tab: S-BERT and TPR, alpha 0.01, main} for the \texttt{Llama-3.2-1B-Instruct} model. The results are consistent with the main results, showing that our proposed watermarking scheme outperforms the Gumbel-max method in terms of the text distortion, and more competitive TPR than the green/red list and DiPMark methods.

\begin{table}
\captionsetup{font={stretch=1}, name=Supplementary Table}
\renewcommand{\arraystretch}{0.8}
\caption{Comparison of the S-BERT similarity scores and the true positive rates (TPR) of the two watermarking schemes: green/red list \cite{kirchenbauer2023watermark} and our proposed scheme for the Phi-3-mini-4k-instruct (3.8B). }
\label{table:phi}
\centering
\begin{tabular}{l l l l | r r | r r | r}
\toprule
\multirow{3}{*}{\rotatebox[origin=c]{90}{Data}} & \multirow{3}{*}{k} & \multirow{3}{*}{Metric} & & \multicolumn{4}{c}{Green/red list} & \multirow{2}{*}{Our method} \\
&&&& \multicolumn{2}{c}{$\delta = 2$} & \multicolumn{2}{c}{$\delta = 1$} & \\
&&&& $\gamma = 0.25$ & $\gamma = 0.5$ & $\gamma = 0.25$ & $\gamma = 0.5$ & $\gamma = 0.5$\\
\midrule
\multirow{8}{*}{\rotatebox[origin=c]{90}{FinQA}} & \multirow{4}{*}{2} & \multirow{2}{*}{S-BERT} & mean & 0.7887 & 0.8001 & 0.8269 & 0.8167 & 0.7963 \\
 &  &  & median & 0.8229 & 0.8404 & 0.8598 & 0.8472 & 0.8382 \\
 &  & TPR &  & 0.9700 & 0.9700 & 0.5900 & 0.6550 & 0.9750 \\
 &  & TPR aug. &  & 0.9150 & 0.8850 & 0.3250 & 0.3500 & 0.8700 \\
\cmidrule{3-9}
 & \multirow{4}{*}{4} & \multirow{2}{*}{S-BERT} & mean & 0.7957 & 0.7967 & 0.8305 & 0.8233 & 0.8023 \\
 &  &  & median & 0.8241 & 0.8277 & 0.8528 & 0.8494 & 0.8288 \\
 &  & TPR &  & 0.9650 & 0.9600 & 0.7850 & 0.7750 & 0.9500 \\
 &  & TPR aug. &  & 0.8750 & 0.8900 & 0.4450 & 0.5150 & 0.8300 \\
\cmidrule{2-9}
\multirow{8}{*}{\rotatebox[origin=c]{90}{ELI5}} & \multirow{4}{*}{2} & \multirow{2}{*}{S-BERT} & mean & 0.7022 & 0.6962 & 0.7084 & 0.7178 & 0.7161 \\
 &  &  & median & 0.7175 & 0.7060 & 0.7264 & 0.7239 & 0.7290 \\
 &  & TPR &  & 1.0000 & 1.0000 & 0.7800 & 0.9100 & 0.9900 \\
 &  & TPR aug. &  & 0.9900 & 0.9850 & 0.4900 & 0.6500 & 0.9350 \\
\cmidrule{3-9}
 & \multirow{4}{*}{4} & \multirow{2}{*}{S-BERT} & mean & 0.7030 & 0.7161 & 0.7060 & 0.7084 & 0.7185 \\
 &  &  & median & 0.7076 & 0.7299 & 0.7214 & 0.7236 & 0.7344 \\
 &  & TPR &  & 1.0000 & 0.9950 & 0.8850 & 0.8850 & 1.0000 \\
 &  & TPR aug. &  & 0.9500 & 0.9850 & 0.5450 & 0.5750 & 0.9550 \\
\bottomrule
\end{tabular}
\end{table}

\begin{table}
\captionsetup{font={stretch=1}, name=Supplementary Table}
\renewcommand{\arraystretch}{0.8}
\caption{Same as Supplementary Table \ref{table:phi}, but for the Llama model.}
\label{table:llama}
\centering
\begin{tabular}{l l l l | r r | r r | r}
\toprule
\multirow{3}{*}{\rotatebox[origin=c]{90}{Data}} & \multirow{3}{*}{k} & \multirow{3}{*}{Metric} & & \multicolumn{4}{c}{Green/red list} & \multirow{2}{*}{Our method} \\
&&&& \multicolumn{2}{c}{$\delta = 2$} & \multicolumn{2}{c}{$\delta = 1$} & \\
&&&& $\gamma = 0.25$ & $\gamma = 0.5$ & $\gamma = 0.25$ & $\gamma = 0.5$ & $\gamma = 0.5$\\
\midrule
\multirow{8}{*}{\rotatebox[origin=c]{90}{FinQA}} & \multirow{4}{*}{2} & \multirow{2}{*}{S-BERT} & mean & 0.7962 & 0.8025 & 0.8199 & 0.8038 & 0.8136 \\
 &  &  & median & 0.8144 & 0.8290 & 0.8391 & 0.8292 & 0.8347 \\
 &  & TPR &  & 0.9900 & 0.9950 & 0.6750 & 0.7550 & 0.9500 \\
 &  & TPR aug. &  & 0.9350 & 0.9550 & 0.4450 & 0.5150 & 0.8550 \\
\cmidrule{3-9}
 & \multirow{4}{*}{4} & \multirow{2}{*}{S-BERT} & mean & 0.8059 & 0.7940 & 0.8247 & 0.8270 & 0.8117 \\
 &  &  & median & 0.8318 & 0.8127 & 0.8381 & 0.8458 & 0.8426 \\
 &  & TPR &  & 0.9750 & 0.9850 & 0.8300 & 0.8300 & 0.9750 \\
 &  & TPR aug. &  & 0.9250 & 0.9300 & 0.5150 & 0.4500 & 0.8400 \\
\cmidrule{2-9}
\multirow{8}{*}{\rotatebox[origin=c]{90}{ELI5}} & \multirow{4}{*}{2} & \multirow{2}{*}{S-BERT} & mean & 0.7077 & 0.7040 & 0.7196 & 0.7195 & 0.7351 \\
 &  &  & median & 0.7181 & 0.7267 & 0.7382 & 0.7315 & 0.7429 \\
 &  & TPR &  & 1.0000 & 1.0000 & 0.9250 & 0.9100 & 1.0000 \\
 &  & TPR aug. &  & 0.9950 & 0.9900 & 0.7150 & 0.6650 & 0.9650 \\
\cmidrule{3-9}
 & \multirow{4}{*}{4} & \multirow{2}{*}{S-BERT} & mean & 0.7169 & 0.7008 & 0.7279 & 0.7226 & 0.7315 \\
 &  &  & median & 0.7385 & 0.7243 & 0.7514 & 0.7433 & 0.7433 \\
 &  & TPR &  & 1.0000 & 1.0000 & 0.9400 & 0.9700 & 1.0000 \\
 &  & TPR aug. &  & 1.0000 & 0.9950 & 0.6450 & 0.6450 & 0.9900 \\
\bottomrule
\end{tabular}
\end{table}

Supplementary Table \ref{table:phi} and Supplementary Table \ref{table:llama} present the analogous results with a wider range of parameters. Supplementary Table \ref{table:llama} contains the results for the Llama model, while Supplementary Table \ref{table:phi} contains the results for the Phi model. From both tables, we observe a positive correlation between the distortion of the generation (lower S-BERT scores) and the detection power (higher TPR) for the green/red list method, echoing existing theory and empirical findings \citep{kirchenbauer2023watermark, fernandez2023three, piet2023mark}. 

\begin{table}[t]
\captionsetup{font={stretch=1}, name=Supplementary Table}
\renewcommand{\arraystretch}{0.8}
\caption{Same as Table \ref{tab:token_stats}, but for the Llama model.}
\label{tab:token_stats, add}
\centering
\begin{small}
\begin{tabular}{l l l l | r}
\toprule
Model & Dataset & k & Method & Repeated (\%) \\
\midrule
\multirow{16}{*}{\rotatebox[origin=c]{90}{Llama-3.2-1B-Instruct}} & \multirow{8}{*}{FinQA} & \multirow{4}{*}{2} & Gumbel-max & 25.16\% \\
 &  &  & Green/red list & 19.39\% \\
 &  &  & DiPMark & 21.03\% \\
 &  &  & Ours & 21.29\% \\
\cmidrule{3-5}
 &  & \multirow{4}{*}{4} & Gumbel-max & 7.81\% \\
 &  &  & Green/red list & 5.94\% \\
 &  &  & DiPMark & 5.85\% \\
 &  &  & Ours & 6.17\% \\
\cmidrule{2-5}
 & \multirow{8}{*}{ELI5} & \multirow{4}{*}{2} & Gumbel-max & 25.25\% \\
 &  &  & Green/red list & 18.81\% \\
 &  &  & DiPMark & 19.58\% \\
 &  &  & Ours & 19.50\% \\
\cmidrule{3-5}
 &  & \multirow{4}{*}{4} & Gumbel-max & 5.62\% \\
 &  &  & Green/red list & 3.68\% \\
 &  &  & DiPMark & 4.15\% \\
 &  &  & Ours & 3.84\% \\
\bottomrule
\end{tabular}
\end{small}
\end{table}

Supplementary Table \ref{tab:token_stats, add} presents the analogous results to Table \ref{tab:token_stats} for the Llama model. Here, we observe that for the Llama model, the Gumbel-max method has a higher rate of repeated tokens compared to all other methods.

\begin{table*}[hbt!]
    \captionsetup{font={stretch=1}, name=Supplementary Table}
    \renewcommand{\arraystretch}{0.8}
    \caption{Same as Table \ref{tab:speculative_stats}, but for the Llama model.}
    \label{tab:speculative_stats, add}
    \centering
    \begin{small}
    \centering
    \begin{tabular}{l l l l | r | r}
    \toprule
    Model & Data & $k$ & Metric & Gumbel-max & Ours \\
    \midrule
    \multirow{8}{*}{Llama-3.2-1B-Instruct} & \multirow{4}{*}{FinQA} & \multirow{2}{*}{2} & Avg. Rejection Rate & 22.28\% & 16.00\% \\
     &  &  & TPR & 0.2300 & 0.1900 \\
    \cmidrule{3-6}
     &  & \multirow{2}{*}{4} & Avg. Rejection Rate & 22.83\% & 16.85\% \\
     &  &  & TPR & 0.2100 & 0.2000 \\
    \cmidrule{2-6}
     & \multirow{4}{*}{ELI5} & \multirow{2}{*}{2} & Avg. Rejection Rate & 26.69\% & 18.42\% \\
     &  &  & TPR & 0.3650 & 0.2800 \\
    \cmidrule{3-6}
     &  & \multirow{2}{*}{4} & Avg. Rejection Rate & 28.23\% & 19.42\% \\
     &  &  & TPR & 0.4100 & 0.3600 \\
    \bottomrule
    \end{tabular}
    \end{small}
\end{table*} 

Supplementary Table \ref{tab:speculative_stats, add} presents the analogous results to Table \ref{tab:speculative_stats} for the Llama model. The results show that our proposed watermarking scheme has a lower average rejection rate compared to the Gumbel-max method, and this is consistent with the main results.

\begin{table*}[hbt!]
\captionsetup{font={stretch=1}, name=Supplementary Table}
\renewcommand{\arraystretch}{0.8}
\caption{Same as Table \ref{tab:stats}, but for the Llama model.}
\label{tab:stats, add}
\centering
\begin{small}
\centering
\begin{tabular}{c c c l l | r | r | r}
\toprule
Model & Data & $k$/Num. Tokens & Metric & Max & Sum & HC$^+$ \\
\midrule
\multirow{8}{*}{Llama-3.2-1B-Instruct} & \multirow{4}{*}{FinQA} & \multirow{2}{*}{2/382.170} & TPR & 0.335 & \textbf{0.95} & 0.905 \\
 &  &  & TPR aug. & 0.125 & \textbf{0.855} & 0.665 \\
\cmidrule{3-7}
 &  & \multirow{2}{*}{4/411.095} & TPR & 0.26 & \textbf{0.975} & 0.93 \\
 &  &  & TPR aug. & 0.075 & \textbf{0.84} & 0.615 \\
\cmidrule{2-7}
 & \multirow{4}{*}{ELI5} & \multirow{2}{*}{2/383.360} & TPR & 0.4 & \textbf{1} & 0.99 \\
 &  &  & TPR aug. & 0.165 & \textbf{0.965} & 0.87 \\
\cmidrule{3-7}
 &  & \multirow{2}{*}{4/414.440} & TPR & 0.385 & \textbf{1} & \textbf{1} \\
 &  &  & TPR aug. & 0.12 & \textbf{0.99} & 0.905 \\
\bottomrule
\end{tabular}
\end{small}
\end{table*}

Supplementary Table \ref{tab:stats, add} presents the analogous results to Table \ref{tab:stats} for the Llama model. The results show higher true positive rate for the sum test compared to the higher criticism when the number of tokens used to detect the watermark is relatively small, and this consistent with the main results.

\begin{table*}[hbt!]
\captionsetup{font={stretch=1}, name=Supplementary Table}
\renewcommand{\arraystretch}{0.8}
\caption{Comparison of the watermarking scheme with a single list and the watermarking scheme with disparate lists. The true positive rate (TPR) and the true positive rate after the substitution attack (TPR aug.) are reported in the form of (Sum statistic/HC$^+$ statistic).}
\label{tab: one list}
\centering
\begin{small}
\begin{tabular}{l l l l l | r r}
\toprule
\multirow{2}{*}{Model} & \multirow{2}{*}{Data} & \multirow{2}{*}{k} & \multirow{2}{*}{Metric} & & \multicolumn{2}{c}{Our Method} \\
&&&&& single list & disparate lists\\
\midrule
\multirow{16}{*}{\rotatebox[origin=c]{90}{Phi-3-mini-4k-instruct (3.8B)}} & \multirow{8}{*}{\rotatebox[origin=c]{90}{FinQA}} & \multirow{4}{*}{2} & \multirow{2}{*}{S-BERT} & mean & \textbf{0.8121} & 0.7963 \\
 &  &  &  & median & \textbf{0.8431} & 0.8382 \\
 &  &  & TPR &  & 0.955/0.885 & \textbf{0.975}/\textbf{0.92} \\
 &  &  & TPR aug. &  & \textbf{0.905}/\textbf{0.73} & 0.87/0.71 \\
\cmidrule{3-7}
 &  & \multirow{4}{*}{4} & \multirow{2}{*}{S-BERT} & mean & \textbf{0.808} & 0.8023 \\
 &  &  &  & median & \textbf{0.8494} & 0.8288 \\
 &  &  & TPR &  & \textbf{0.955}/0.89 & 0.95/\textbf{0.91} \\
 &  &  & TPR aug. &  & \textbf{0.84}/\textbf{0.665} & 0.83/0.625 \\
\cmidrule{2-7}
 & \multirow{8}{*}{\rotatebox[origin=c]{90}{ELI5}} & \multirow{4}{*}{2} & \multirow{2}{*}{S-BERT} & mean & \textbf{0.7224} & 0.7161 \\
 &  &  &  & median & \textbf{0.7499} & 0.729 \\
 &  &  & TPR &  & \textbf{0.995}/\textbf{0.99} & 0.99/0.98 \\
 &  &  & TPR aug. &  & \textbf{0.96}/\textbf{0.895} & 0.935/0.865 \\
\cmidrule{3-7}
 &  & \multirow{4}{*}{4} & \multirow{2}{*}{S-BERT} & mean & \textbf{0.7237} & 0.7185 \\
 &  &  &  & median & \textbf{0.7409} & 0.7344 \\
 &  &  & TPR &  & \textbf{1}/\textbf{0.995} & \textbf{1}/0.99 \\
 &  &  & TPR aug. &  & \textbf{0.965}/0.82 & 0.955/\textbf{0.83} \\
\midrule
\multirow{16}{*}{\rotatebox[origin=c]{90}{Llama-3.2-1B-Instruct}} & \multirow{8}{*}{\rotatebox[origin=c]{90}{FinQA}} & \multirow{4}{*}{2} & \multirow{2}{*}{S-BERT} & mean & 0.8013 & \textbf{0.8136} \\
 &  &  &  & median & 0.8314 & \textbf{0.8347} \\
 &  &  & TPR &  & \textbf{0.98}/\textbf{0.94} & 0.95/0.905 \\
 &  &  & TPR aug. &  & \textbf{0.9}/\textbf{0.74} & 0.855/0.665 \\
\cmidrule{3-7}
 &  & \multirow{4}{*}{4} & \multirow{2}{*}{S-BERT} & mean & \textbf{0.8128} & 0.8117 \\
 &  &  &  & median & 0.8402 & \textbf{0.8426} \\
 &  &  & TPR &  & 0.97/0.925 & \textbf{0.975}/\textbf{0.93} \\
 &  &  & TPR aug. &  & \textbf{0.86}/\textbf{0.645} & 0.84/0.615 \\
\cmidrule{2-7}
 & \multirow{8}{*}{\rotatebox[origin=c]{90}{ELI5}} & \multirow{4}{*}{2} & \multirow{2}{*}{S-BERT} & mean & 0.7287 & \textbf{0.7351} \\
 &  &  &  & median & 0.7355 & \textbf{0.7429} \\
 &  &  & TPR &  & \textbf{1}/\textbf{0.99} & \textbf{1}/\textbf{0.99} \\
 &  &  & TPR aug. &  & 0.96/0.85 & \textbf{0.965}/\textbf{0.87} \\
\cmidrule{3-7}
 &  & \multirow{4}{*}{4} & \multirow{2}{*}{S-BERT} & mean & 0.7243 & \textbf{0.7315} \\
 &  &  &  & median & \textbf{0.7453} & 0.7433 \\
 &  &  & TPR &  & \textbf{1}/0.99 & \textbf{1}/\textbf{1} \\
 &  &  & TPR aug. &  & 0.985/0.835 & \textbf{0.99}/\textbf{0.905} \\
\bottomrule
\end{tabular}
\end{small}
\end{table*}

\subsubsection{A Single Green List}
\label{app:reuse_green_list}

Notice that our theory requires that the green lists $\cG$ and the uniform random variables $\zeta$ be independent, and our scheme does not necessitate creating a new green list before generating each token. Instead, as long as the user does not have access to the watermark key, a priori, we can simply generate a single green list at the beginning and use it for generating all the tokens. This way, we only need to share a single pseudorandom function to generate random variable $\zeta$'s. We performed an ablation study on this in Supplementary Table \ref{tab: one list}. In particular, we compare the results of our watermarking scheme when we use a single green list and when we use disparate green lists. We report the BERT similarity scores and the TPR of the test statistic based on the sum of the $\zeta$'s and the higher criticism statistic (in the form TPR for sum/TPR for higher criticism). We conclude that having a single green list does not significantly impact our watermarking scheme when a larger context window is used to generate $\zeta$. An analogous observation was also made in \citet{zhao2023provable} for the green/red list soft watermarking scheme. However, it is unclear whether having the same green list for all the tokens will allow an adversary to have more power to learn the exact green list generator and produce attacks that can fool the detector, which is an interesting direction for future work.

\end{document}